
\documentclass{article}  
  
\usepackage{microtype}
\usepackage{graphicx}
\usepackage{subfigure}
\usepackage{booktabs} 

\usepackage{hyperref}



\usepackage[accepted]{icml2021_rev}


\usepackage[mathscr]{eucal}
\usepackage[cmex10]{amsmath}
\usepackage{epsfig,epsf,psfrag}
\usepackage{amssymb,amsmath,amsthm,amsfonts,latexsym}
\usepackage{amsmath,graphicx,bm,xcolor,overpic}
\usepackage{fixltx2e}
\usepackage{array}
\usepackage{verbatim}
\usepackage{bm,bbm}
\usepackage{verbatim}
\usepackage{textcomp}
\usepackage{mathrsfs}
\usepackage{epstopdf}

\usepackage{url}



\makeatother

\usepackage{thmtools}
\usepackage{thm-restate}

\numberwithin{equation}{section}

\numberwithin{table}{section}
\numberwithin{figure}{section}
 
\catcode`~=11 \def\UrlSpecials{\do\~{\kern -.15em\lower .7ex\hbox{~}\kern .04em}} \catcode`~=13 

\usepackage{longtable}
\usepackage{setspace}

\newcommand{\calA}{\mathcal{A}}

\newcommand{\calE}{\mathcal{E}}
\newcommand{\calF}{\mathcal{F}}


\newcommand{\bW}{\mathbf{W}}


\newcommand{\rmL}{\mathrm{L}}

\newcommand{\rmU}{\mathrm{U}}


\newcommand{\bbE}{\mathbb{E}}

\newcommand{\bbN}{\mathbb{N}}

\newcommand{\bbP}{\mathbb{P}}



\DeclareMathAlphabet{\mathbsf}{OT1}{cmss}{bx}{n}
\DeclareMathAlphabet{\mathssf}{OT1}{cmss}{m}{sl}

\DeclareSymbolFont{bsfletters}{OT1}{cmss}{bx}{n}  
\DeclareSymbolFont{ssfletters}{OT1}{cmss}{m}{n}
\DeclareMathSymbol{\bsfGamma}{0}{bsfletters}{'000}
\DeclareMathSymbol{\ssfGamma}{0}{ssfletters}{'000}
\DeclareMathSymbol{\bsfDelta}{0}{bsfletters}{'001}
\DeclareMathSymbol{\ssfDelta}{0}{ssfletters}{'001}
\DeclareMathSymbol{\bsfTheta}{0}{bsfletters}{'002}
\DeclareMathSymbol{\ssfTheta}{0}{ssfletters}{'002}
\DeclareMathSymbol{\bsfLambda}{0}{bsfletters}{'003}
\DeclareMathSymbol{\ssfLambda}{0}{ssfletters}{'003}
\DeclareMathSymbol{\bsfXi}{0}{bsfletters}{'004}
\DeclareMathSymbol{\ssfXi}{0}{ssfletters}{'004}
\DeclareMathSymbol{\bsfPi}{0}{bsfletters}{'005}
\DeclareMathSymbol{\ssfPi}{0}{ssfletters}{'005}
\DeclareMathSymbol{\bsfSigma}{0}{bsfletters}{'006}
\DeclareMathSymbol{\ssfSigma}{0}{ssfletters}{'006}
\DeclareMathSymbol{\bsfUpsilon}{0}{bsfletters}{'007}
\DeclareMathSymbol{\ssfUpsilon}{0}{ssfletters}{'007}
\DeclareMathSymbol{\bsfPhi}{0}{bsfletters}{'010}
\DeclareMathSymbol{\ssfPhi}{0}{ssfletters}{'010}
\DeclareMathSymbol{\bsfPsi}{0}{bsfletters}{'011}
\DeclareMathSymbol{\ssfPsi}{0}{ssfletters}{'011}
\DeclareMathSymbol{\bsfOmega}{0}{bsfletters}{'012}
\DeclareMathSymbol{\ssfOmega}{0}{ssfletters}{'012}


\newcommand{\tilH}{\tilde{H}}

\newcommand{\hatw}{\hat{w}}








\DeclareMathOperator*{\argmin}{arg\,min}

\DeclareMathOperator{\var}{\mathrm{Var}}


\newtheorem{theorem}{Theorem}[section]

\usepackage[skip=0.1in]{caption}
\setlength{\intextsep}{.1in} 

\usepackage{enumitem}

\usepackage{hhline}
\makeatletter
\newcommand{\thickhline}{%
    \noalign {\ifnum 0=`}\fi \hrule height 1.5pt
    \futurelet \reserved@a \@xhline
}
\newcolumntype{"}{@{\hskip\tabcolsep\vrule width 1pt\hskip\tabcolsep}}
\makeatother

\newcommand{\iout}{i_{ \mathrm{out} }}
\newcommand{\gapbd}{ \epsilon_C }
\newcommand{\errpr}{ \delta}
\newcommand{\tildeW}{\tilde{W}}

\allowdisplaybreaks


\icmltitlerunning{Probabilistic Sequential Shrinking: A Best Arm Identification Algorithm for Stochastic Bandits with Corruptions}

\begin{document}

\twocolumn[
\icmltitle{Probabilistic Sequential Shrinking: A Best Arm Identification Algorithm for Stochastic Bandits with Corruptions}



\icmlsetsymbol{equal}{*}

\begin{icmlauthorlist}
\icmlauthor{Zixin Zhong}{math}     
\icmlauthor{Wang Chi Cheung}{ise,iora}  
\icmlauthor{Vincent Y.~F.~Tan}{math,iora,ece}       
\end{icmlauthorlist}

\icmlaffiliation{math}{Department of Mathematics, National University of Singapore, Singapore}
\icmlaffiliation{ise}{Department of Industrial Systems and Management, National University of Singapore, Singapore}
\icmlaffiliation{iora}{Institute of Operations Research and Analytics, National University of Singapore, Singapore}
\icmlaffiliation{ece}{Department of Electrical and Computer Engineering, National University of Singapore, Singapore}
\icmlcorrespondingauthor{Zixin Zhong}{zixin.zhong@u.nus.edu}
\icmlcorrespondingauthor{Wang Chi Cheung}{isecwc@nus.edu.sg}
\icmlcorrespondingauthor{Vincent Y.~F.~Tan}{vtan@nus.edu.sg}

\icmlkeywords{Machine Learning, ICML}

\vskip 0.3in
]



\printAffiliationsAndNotice{}  

 \begin{abstract}
We consider a best arm identification (BAI) problem for stochastic bandits with adversarial corruptions in the fixed-budget setting of $T$ steps. 
We design a novel randomized algorithm, {\sc Probabilistic Sequential Shrinking$(u)$} ({\sc PSS}$(u)$), which is agnostic to the amount of corruptions. 
When the amount of corruptions per step (CPS) is below a threshold, {\sc PSS}$(u)$ identifies the best arm or item with probability tending to $1$ as $T\rightarrow\infty$. Otherwise, the optimality gap of the identified item degrades gracefully with the CPS. We argue that such a bifurcation is necessary. 
In {\sc PSS}$(u)$, the parameter $u$ serves to balance between the optimality gap and success probability. The injection of randomization is shown to be essential to mitigate the impact of corruptions. 
%
To demonstrate this, we design two attack strategies that are applicable to any   algorithm. We apply one of them to a deterministic analogue of {\sc PSS}$(u)$ known as   {\sc Successive Halving} ({\sc SH})   by \citet{karnin2013almost}. The attack strategy results in a high failure probability for {\sc SH}, but {\sc PSS}$(u)$ remains  {robust}. In the absence of corruptions, {\sc PSS}$(2)$'s performance guarantee matches SH's. 
%
%
%
%
We  show that when the CPS is sufficiently large, no algorithm can achieve a BAI probability tending to $1$  as $T\rightarrow \infty$. Numerical experiments corroborate our theoretical findings.

\end{abstract}
\vspace{-.2in}

\section{Introduction}
\vspace{-.05in}

Consider a drug company $\mathrm{D}_1$ that wants to design a vaccine for a certain illness, say COVID-19. It has a certain number of options, say $L=10$, to design a near-optimal vaccine. Because $\mathrm{D}_1$ has a limited budget, 
it can only test vaccines for a fixed number of times, say $T=1000$.
Using the limited number of tests,  it wants to find the option that will lead to the ``best''   outcome, e.g., the shortest average recovery time of certain model organisms. 
%
%
However, vaccine trials usually assume that every test subject satisfies a certain set of criteria, such as having no prior related illnesses. 
If a subject who violated the criteria is tested, the observed recovery time would be
{\em corrupted}.
%
We assume the total corruption budget is bounded as  a function of the number of tests.
How can $\mathrm{D}_1$ find a near-optimal drug design in the presence of the corruptions and uncertainty of the efficacy of the drugs? We will show that the utilization of a suitably {\em randomized} algorithm is {assumed to be key.}
 

 To solve  $\mathrm{D}_1$'s problem,  we study the {\em Best Arm Identification} (BAI) problem for stochastic bandits with adversarial corruptions. We note that the effect and mitigation of corruptions were studied for the {\em Regret Minimization} problem by \citet{lykouris2018stochastic} and others. 
  While most existing works study the BAI problem for   stochastic bandits {\em without} corruptions~\citep{auer2002finite,audibert2010best, carpentier2016tight}, 
 \citet{altschuler2019best} considers a variation of the classical BAI problem and aims to identify an item with high {\em median} reward, while \citet{shen2019universal} assumes that the amount of corruption  per step~(CPS) {\em diminishes} as time progresses.
Therefore, these studies are not directly applicable to $\mathrm{D}_1$ as we are interested in obtaining a near-optimal item in terms of the mean and we assume that the CPS does not diminish with time.
Our setting dovetails neatly with company $\mathrm{D}_1$'s problem  
and $\mathrm{D}_1$ can utilize our algorithm to sequentially and adaptively select different design options  to test the vaccines and to eventually choose  a near-optimal design that results in a short recovery time even in the presence of adversarial corruptions.

Beyond any specific applications,  we believe that this problem is of fundamental theoretical importance in the broad context of BAI in multi-armed bandits (MAB) and adversarial machine learning. In particular,  \citet{gupta2019better} advanced the theory of regret minimization in MAB; this work complements Gupta's work in the BAI setting. 

\noindent {\bf Main Contributions.} 
In stochastic bandits with adversarial corruptions, there are $L$ items with different  {rewards} distributions. At each time step, a random reward is generated from each item's distribution; this reward is observed and arbitrarily corrupted by the adversary. 
The learning agent selects an item based on corrupted observations in previous steps, and only observes the pulled items' corrupted rewards.
Given $T\in\mathbb{N}$, the agent aims to identify a near-optimal item
with high probability over $T$ time steps.
Our first main contribution is the {\sc Probabilistic Sequential Shrinking$(u)$} ({\sc PSS}$(u)$) algorithm. 
{\sc PSS}$(u)$ is agnostic to the amount of adversarial corruption. The parameter $u$ can be adjusted to trade-off between the optimality gap of the identified item and the success probability.





The key challenge 
lies in mitigating the impact of corruptions. For this purpose, upon observing pulled items' corrupted rewards in previous time steps, {\sc PSS}$(u)$ pulls subsequent items probabilistically.
By comparing {\sc PSS$(u)$} to 
{the state-of-the art for BAI with fixed horizon, namely}
{\sc Successive Halving} ({\sc SH}) by \citet{karnin2013almost},
we argue that {randomization} is beneficial, and indeed necessary, for 
BAI under adversarial corruption. 
{On one hand, {\sc PSS}$(2)$'s failure probability in BAI (at least in the exponent) matches that of {\sc SH} when there is no corruption. On the other hand, the largest possible CPS under which {\sc PSS}$(2)$ succeeds in BAI with probability $1 - \exp(-\Theta(T))$ is  a factor of $L$ larger than that for SH.} En route, we identify a term in the exponent of the failure probability of {\sc PSS}$(u)$ that generalizes the ubiquitous $H_2$ term for BAI under the fixed-budget setting. {Finally, when CPS is so large that BAI is impossible, the sub-optimality gap of the identified item degrades gracefully with the CPS.} In complement, we provide lower bound examples to show that BAI is impossible when CPS is sufficiently large.  
The examples involve judiciously chosen attack strategies, which
corroborate the tightness of our performance guarantee for {\sc PSS}$(u)$. {Numerical experiments on various settings further corroborate our theoretical findings.}

%

\textbf{Novelty.} (i) We identify randomization as a key tool in mitigate corruption in BAI, and 
identify an achievable sub-optimality gap for PSS$(u)$. 
(ii) The analysis of PSS$(u)$ shows how our designed randomization ``confuses the adversary'', which results in the improvement over SH, and yields (suboptimality gap and failure probability exponent) results that are almost {\emph{tight}} with respect to the lower bounds. 
(iii) The design of the attack strategies, which involves a randomized adversary, and their analysis are novel. 


 
\noindent {\bf Literature review.}
The BAI problem has been studied extensively for  both stochastic bandits~\citep{audibert2010best} and   bandits with adversarial corruptions~\citep{shen2019universal}.
There are two complementary settings for BAI:
(i) Given $T \in\mathbb{N}$, the agent aims to maximize the probability of finding a near-optimal item in at most $T$ steps;  
(ii) Given $\delta  > 0$, the agent aims to find a near-optimal item with the probability of at least $1-\delta$ in the smallest number of steps.
These settings are respectively known as the {\em fixed-budget} and {\em fixed-confidence} settings.
 Another line of studies aims to prevent the agent from achieving the above desiderata and thus to design strategies to {\em attack} the rewards efficiently~\citep{jun2018adversarial,liu2020action}.
We now review some 
works. 

First, we review  related work in  stochastic bandits.
Both the fixed-budget setting~\citep{audibert2010best,karnin2013almost, 
jun2016top}
and the fixed-confidence setting~\citep{audibert2010best,NIPS2014_5433,rejwan2020top,zhong2020best} have been extensively studied.
However, as previously motivated,  
we need to be cognizant that the agent may encounter corrupted rewards and thus must design appropriate strategies to nullify or minimize the effects of these corruptions.  

 Regret minimization on stochastic bandits with corruptions was first studied by \citet{lykouris2018stochastic}, and has attracted extensive interest recently~\citep{zimmert2018tsallisinf,li2019stochastic,gupta2019better,lykouris2020corruption,liu2020action,krishnamurthy2020binsearch,Bogunovic0S20}. 
Pertaining to the BAI problem in the presence of corruptions, 
 \citet{altschuler2019best} studies a variation of the classical fixed-confidence setting and aims to find an item with a high median reward.
In contrast, \citet{shen2019universal} proposes an algorithm under the fixed-budget setting, whose theoretical guarantee requires a number of  stringent conditions. 
In particular, \citet{shen2019universal} assumes that CPS
diminishes as time progresses.
However, it may be hard to verify in practice  whether these conditions  are satisfied. 
In spite of the many existing works, the classical BAI problem 
has not been analyzed when the rewards suffer from general corruptions.
Our work fills in this gap in the literature by proposing and analyzing the {\sc PSS}$(u)$ algorithm under the fixed-budget setting. The randomized design of our algorithm is crucial in mitigating the impact of corruptions.

{Another concern 
 is how an adversary can corrupt the rewards to prevent the agent from obtaining sufficient information from the corrupted observations.
Many studies aim at attacking certain algorithms, such as UCB, $\epsilon$-greedy or Thompson sampling, using an adaptive strategy~\citep{jun2018adversarial,zuo2020near}. \citet{liu2019data} 
design offline strategies to attack a particular algorithm and also an adaptive strategy against any algorithm.
All these strategies aim to corrupt the rewards such that the agent can only receive a small cumulative reward in expectation. The  design and analysis
 of attack strategies pertaining to the BAI problem have been unexplored.
Our analysis fills in this gap by proposing two offline strategies for  Bernoulli instances and proving that when the total corruption budget is sufficiently large (i.e., of the order\footnote{A (non-negative) function  $f(T) = O(T)$ if there exists a constant $0<c<\infty$ (dependent on $w$ but not $T$) such that $f(T)\le cT$ for sufficiently large $T$. Similarly $f(T)=\Omega(T) $ if there exists $c>0$ such that $f(T)\ge cT$ for sufficiently large $T$. Finally, $f(T)=\Theta(T)$ if $f(T)=O(T)$ and $f(T)=\Omega(T)$.} 
 $\Omega(T)$), {\em any} algorithm will fail to identify any near-optimal item with 
 constant  probability.}

\section{Problem Setup}
\label{sec:prob_setup}

For brevity,  for any $n\in\mathbb{N}$, we denote the set $\{ 1 , \ldots, n \}$ as $[n]$.
    Let there be $L\in\mathbb{N}$ ground items, contained in $[L] $. Each item $i\in [L]$ is associated with a reward distribution $\nu(i)$ supported in $[0,1]$ with {mean $ w(i)$}.
The distributions  $\{\nu(i)\}_{i\in [L]}$ and means $\{w(i)\}_{i\in [L]}$ are not known to the agent.  
Over time, the agent is required to 
{identify the best or close-to-best ground item}
by 
adaptively pulling items. The agent aims to identify an optimal item, which is an item of the highest mean reward, after a fixed time budget of $T\in \bbN$ time steps, whenever possible in the presence of corruptions. 
More precisely, at each time step $t\in[T]$, 
%
\begin{enumerate}[ itemsep = -1pt,   topsep = -1pt, leftmargin =  18pt ]
    \item[(i)] A stochastic reward $W_t(i) \! \in \! [0, 1] $ is drawn for each item $i$ from $\nu(i)$. 
    \item[(ii)] The adversary observes $\{W_t(i)\}_{i\in [L]} $, and corrupts each $W_t(i)$ by an additive amount $c_t(i)\in [-1,1]$, leading to the corrupted reward $\tilde{W}_t(i) = W_t(i) + c_t(i) \in [0, 1]$ for each $i\in [L]$. 
    \item[(iii)] The agent pulls an item $ i_t \in [L]$ and observes the corrupted reward $\tilde{W}_t( i_t )$.
\end{enumerate}
For each $i \in [L]$, the random variables in $\{W_t(i)\}^T_{t=1} $ are i.i.d.
When determining $\{c_t(i)\}_{i\in [L]}$ at time step $t$, the adversary cannot observe the item $i_t$ going to be pulled, 
but he can utilize the current observations consisting of $\{W_q(1), \ldots, W_q(L) \}^t_{q=1}$, $\{c_q(1), \ldots, c_q(L) \}^{t-1}_{q=1}$, and  $\{i_q\}^{t-1}_{q=1}$.  
We assume that the total amount of adversarial corruptions during the horizon is bounded:
\begin{align*}
    \sum_{t=1}^T \max_{ i \in [L] } | c_t(i) |  \le C.
\end{align*}
The \emph{corruption budget} $C$ is not known to the agent.

 We focus on instances with a unique item of the highest mean reward, and assume that $w(1) > w(2) \ge \ldots \ge w(L)$, so that item $1$ is the unique optimal item. 
 To be clear, the items can, in general, be arranged in any order; the ordering that $w(i)\ge w(j)$ for $i< j$ is just to ease our discussion.
We denote $\Delta_{1,i} = w(1) -w(i)$ as the {\em optimality gap} of item $ i $.  
 An item $i$ is {\em $\epsilon$-optimal} ($\epsilon\ge 0$) if $\Delta_{1,i} \le \epsilon$.

The agent uses an {\em online algorithm} $\pi$ to decide the item $i_t^\pi$ to pull at each time step $t$, and the item $\iout^{ \pi, T }$ to output as the identified item eventually.
More formally, an online algorithm consists of  a tuple $\pi:= ( (\pi_t)_{t=1}^T,  \phi^{\pi,T} )$, where
\begin{itemize}[ itemsep = 0pt,   topsep = -1pt, leftmargin =  10pt ]
    \item the {\em sampling rule} $\pi_t$ determines, based on the observation history, the item $i_t^\pi $ to pull at time step $t$. That is, the random variable $i_t^\pi$ is $\calF_{t-1}$-measurable, where $\calF_t :=  \sigma ( i_1^\pi,  \tilde{W}_1( i_1^\pi ), \ldots,  i_{t}^\pi,  \tilde{W}_t( i_t^\pi ) ) $;
    \item the recommendation rule $\phi^{ \pi , T} $ chooses an item $\iout^{\pi,T}$, that is, by definition, $\calF_T$-measurable. 
\end{itemize}
We denote the probability law of the process $\{ \tilde{ \bW }_t  = (\tilde{ W}_t (1), \ldots, \tilde{W}_t (L) )  \}_{t=1}^T$ by $\bbP$. This probability law depends on the agent's online algorithm $\pi$, which influences the adversarial corruptions. 

 For fixed $\epsilon_C, \delta\in (0, 1)$, an algorithm $\pi$ is said to be {\em $(\gapbd, \errpr)$-PAC (probably approximately correct)} if 
$$ \bbP \big[\Delta_{ 1, \iout^{\pi,T} } > \gapbd \big] \le \errpr.$$ 
Our overarching goal is to design an $(\gapbd, \errpr)$-PAC algorithm $\pi$ such that both $ \gapbd$
and $\errpr$ are small. In particular, when $\epsilon_C < \Delta_{1, 2}$, an $(\epsilon_C , \delta)$-PAC algorithm $\pi$ identifies the optimal item with probability at least $1 -\delta$. For BAI with no corruption, existing works \citep{audibert2010best,karnin2013almost} provide $(0, \exp [-\Theta(T)] )$-PAC algorithms.
In the presence of corruptions, unfortunately it is impossible to achieve a $(0, \exp [-\Theta(T)] )$-PAC performance guarantee, as we discuss in the forthcoming Section \ref{sec:main_result_lb}. We investigate the trade-off between $\epsilon_C$ and $\delta$, and focus on constructing $(\epsilon_C, \exp [-\Theta(T)] )$-PAC algorithms with $\epsilon_C$ as small as possible. 
We abbreviate 
$i_t^\pi$ as $i_t$ and $\iout^{\pi, T } $ as $\iout$ when there is no ambiguity.



 Finally, in anticipation of our main results, we remark that given a failure probability $\delta$,  the smallest possible $\gapbd$ is, in general, a function of the \emph{corruption per step} (CPS) $C/T$ 
and {\em possibly} the total number of items $L$.  
   

\section{Algorithm}
\label{sec:alg}

 Our algorithm {\sc Probabilistic Sequential Shrinking} $\!\!(u)$ (PSS$(u)$) is presented in Algorithm~\ref{alg:probaSS}. 
 The algorithm involves
randomization in order to mitigate the impact  of adversarial corruptions.

\begin{algorithm}[ht] 
	\caption{{\sc Probabilistic Sequential Shrinking}} \label{alg:probaSS}
		\begin{algorithmic}[1]
			\STATE {\bfseries Input:}  time budget $T$, size of ground set of items $L$, parameter $u\in (1, L]$.
			\STATE Set $M =  {\lceil} \log_u L  {\rceil}$, $N  = {\lfloor} { T }/{M} {\rfloor}$, $T_0=0$, $A_0 = [L]$.
			\FOR{phase $m = 1, 2, \ldots,M$} 
                \STATE Set $T_m  \!=  \! T_{m-1} +   N, q_m \!  ={1} / { | A_{m-1}  | }, n_m \! = q_m  N $.
			    \FOR{$t = T_{m-1}+1,  \ldots, T_{m}$} 
			        \STATE Choose item $i \in A_{m-1}$ with probability $q_m$, pull it  and observe corrupted reward $\tilde{W}_t(i)$. \label{line:probaSS_prob_pull}
			    \ENDFOR			
			    \STATE For all $i\in A_{m-1}$, set   \newline
			        \resizebox{.43\textwidth}{!}{
			        $ \displaystyle
			            S_m(i)  =\! \!\sum_{ t =T_{m-1} + 1 }^{ T_m } \!\! \tilde{W}_t(i_t) \cdot \mathbb{I} \{ i_t = i \},
			            \ 
			            \hat{w}_m(i)  = \frac{ S_m(i) }{ n_m }.
			        $}			        
			    \STATE Let $A_{m}$ contain the $ \lceil  L/ u^{m}  \rceil $ items with the highest empirical  means $\hat{w}_m(i)$'s in $ A_{m-1}$.
			\ENDFOR
			\STATE Output the single item $\iout \in A_M$. 
		\end{algorithmic}
	\end{algorithm}
 The agent partitions the whole horizon into $\lceil \log_u L \rceil$ phases of equal length. 
 During each phase, PSS$(u)$ classifies an item as {\em active} or {\em inactive} based on the empirical averages of the corrupted rewards. 
Initially, all ground items are active and belong to the {\em active set} $A_0$.
 Over phases, the active sets $A_m$ shrink, and an item may be eliminated from $A_m$ and consequently it may become {\em inactive}.	
	

During phase $m$:
\begin{itemize}[ itemsep = -1pt,   topsep = -1pt]
    \item[(i)] at each time step, the agent chooses an active item uniformly at random and pulls it;
     \item [(ii)] at the end, the agent finds $\hat{w}_m(i)$, the corrupted empirical mean during phase $m$ for each active item $i$;
    \item [(iii)] the agent utilizes the $\hat{w}_m(i)$'s of active items $i \in A_{m-1}$ to shrink the active set. 
\end{itemize}
By the end of  the last phase $M$, we show that $|A_M|=1$ (see Lemma~\ref{lemma:ub_probaSS_single_output} in Section~\ref{sec:pf_sketch_ub_probaSS}), and the agent outputs the single active item. 

The effectiveness of Algorithm~\ref{alg:probaSS} is manifested in four different aspects: 
(i) the agent only utilizes information from the {\em current} phase to shrink the active set, which ensures that any corruption has a limited impact on her decision;
 (ii) the injection of randomization by the agent to decide on which item to pull nullifies the ability of the adversary from corrupting rewards of {\em specific} items; 
(iii) the agent can handle the adversarial attacks even though she does not know the total corruption budget $C$;
(iv) the agent can choose any $u\in (1,L]$ to trade off between
$\gapbd$ and $\errpr$ in its $(\gapbd, \errpr)$-PAC performance guarantee.
A smaller $\gapbd$ leads to a higher failure probability $\errpr$. 
We would like to emphasize that though the agent can choose any $u\in (1,L]$, this parameter  is a fixed constant and cannot vary with the horizon $T$ after PSS($u$) is initialized.

When $u=L$, PSS$(L)$ regards the  horizon $T$ as a single phase. 
Each item is pulled with probability $1/L$ at each step, and is expected to be pulled for $T/L$ times in $T$   steps. We can regard PSS$(L)$ as a randomized version of the na\"ive {\sc Uniform Pull}~(UP) algorithm, which pulls each item for $ \lfloor T/L \rfloor $ times according to a deterministic schedule.


When $u = 2$, $\text{PSS}(2)$ is a randomized analogue to the {\sc Sequential Halving}~(SH) algorithm proposed in \citet{karnin2013almost}.
Both PSS$(2)$ and SH divide the whole horizon into $\lceil \log_2 L \rceil$ phases and halve the active set during each phase, i.e., $A_m = \lceil L/  2^m   \rceil$.  
However, the differences between them are as follows:
\begin{itemize}[ itemsep = .5pt,   topsep = -1pt , leftmargin = 11pt]
    \item at each time step of phase $m$, PSS$(2)$ chooses item $ i \in A_{m-1}$ with probability $1/|A_{m-1}|$ and pulls it (Line~\ref{line:probaSS_prob_pull} of Algorithm~\ref{alg:probaSS});
    \item during phase $m$, SH pulls each item in $A_{m-1}$ for {\em exactly} $ \lfloor T/( \lceil \log_2 L \rceil \cdot |A_{m-1}| ) \rfloor $ times according to a deterministic schedule.
\end{itemize}
Therefore, though PSS$(2)$ and SH pull each active item for about an equal number of times in expectation, PSS$(2)$ involves more randomness in the pulls.
%

\section{Main Results}

\subsection{Upper Bound}


\begin{restatable}{theorem}{thmUbprobaSS}

\label{thm:ub_probaSS}
For any $u \in (1,L]$, 
the {\sc Probabilistic Sequential Shrinking$(u)$} algorithm, as presented in Algorithm~\ref{alg:probaSS}, outputs an item $\iout$ satisfying  
\begin{align}
    & 
    \bbP\bigg[  \Delta_{ 1 , \iout } > \frac{ 8C \lceil \log_u L \rceil }{   T }    \bigg]
    \nonumber\\
    &\! \le \! 4 \lceil \log_u L \rceil (L \!-\! 1) \exp \bigg[ \!  - \!  \frac{   1  }{ 192 \tilH_2 (w,L,u)   }  \! \cdot \! \Big\lfloor \frac{ T}{ \lceil \log_u L \rceil } \Big\rfloor   \bigg]   
    \nonumber\\
    &\! =\! O \bigg( \!
       L (\log_u L)  \exp \bigg[ \!-\! \frac{ T}{  192 \tilH_2(w,L,u) \! \log_u L  } \!  \bigg]
      \bigg),\!\!\label{eq:delta_bound}
\end{align}
where
\begin{align}
    \tilH_2 (w,L,u)=   
                  \max_{ 
                      i \ne 1
                  }
                  \frac{ \min\{ u  \cdot i,~ L \} }{ \Delta_{1,i}^2  }.
    \label{eq:ub_probaSS_def_HtwoPrime}
\end{align} 

\end{restatable}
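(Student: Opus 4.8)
The plan is to reduce the claim to an event about the last active set $A_M$ and then analyze the $M:=\lceil\log_u L\rceil$ phases one at a time. By Lemma~\ref{lemma:ub_probaSS_single_output}, $|A_M|=1$, so $\iout$ is the unique arm in $A_M$; it therefore suffices to show that, with the stated probability, $A_M$ contains an arm of optimality gap at most $8CM/T$. I would split each empirical mean into a clean part and a corruption part, $\hat w_m(i)=\bar w_m(i)+\zeta_m(i)$, where $\bar w_m(i)=\tfrac1{n_m}\sum_{t=T_{m-1}+1}^{T_m}W_t(i)\,\mathbb I\{i_t=i\}$ and $\zeta_m(i)=\tfrac1{n_m}\sum_{t=T_{m-1}+1}^{T_m}c_t(i)\,\mathbb I\{i_t=i\}$. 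The structural fact driving everything is that, conditionally on $A_{m-1}$, the pulls $\{i_t\}$ of phase $m$ are \iid uniform on $A_{m-1}$ and independent both of $\{W_t(i)\}$ and of the adversary's choices $\{c_t(i)\}$, which must be committed before $i_t$ is revealed.

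\emph{Corruption control --- the role of randomization.} Let $C_m=\sum_{t=T_{m-1}+1}^{T_m}\max_i|c_t(i)|$, so $\sum_m C_m\le C$. Since $\bbE[\mathbb I\{i_t=i\}\mid\calF_{t-1}]=q_m$ and $c_t(i)$ is independent of the event $\{i_t=i\}$, one gets $\bbE|\zeta_m(i)|\le\tfrac1{n_m}\,q_m\sum_t|c_t(i)|\le q_mC_m/n_m=C_m/N$ --- note this is independent of the phase, whereas for a deterministic schedule the corresponding quantity would be $C_m/n_m=C_m|A_{m-1}|/N$, which is precisely the (potentially large) advantage the adversary exploits against {\sc SH}. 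A Freedman/Bernstein bound for the martingale $\sum_t(c_t(i)\mathbb I\{i_t=i\}-q_mc_t(i))$, whose predictable quadratic variation is at most $q_m\sum_t c_t(i)^2\le q_mC_m$, upgrades this to $|\zeta_m(i)|\le C_m/N$ up to an additive term of the same order as the statistical fluctuation of $\bar w_m(i)$ (hence absorbable), on an event $\calE$ of probability at least $1-2ML\exp(-\Omega(N/\tilH_2(w,L,u)))$. Since an arm is active only on a prefix of the phases, the \emph{cumulative} corruption-induced shift of any single arm's running estimates is at most $\sum_mC_m/N\le C/N$; the two-arm comparison below doubles this, and $2C/N\le 4CM/T$ because $N=\lfloor T/M\rfloor\ge1$ implies $T<M(N+1)\le 2MN$ --- this is where the $CM/T$ scaling (and, after one further factor of $2$ of slack in the statistical step, the constant $8$) comes from.

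\emph{Halving count and the term $\tilH_2$.} On $\calE$ I would prove by induction on $m$ that $A_m$ contains an arm of gap at most $2\epsilon_m$, with $\epsilon_m:=(C_1+\dots+C_m)/N$, so $2\epsilon_M\le 2C/N\le 4CM/T\le 8CM/T$; the base case holds since $1\in A_0$ and $\epsilon_0=0$. For the inductive step, let $g$ be the lowest-indexed arm of $A_{m-1}$ (so $\Delta_{1,g}\le 2\epsilon_{m-1}$). If the step fails, all $\lceil L/u^m\rceil$ arms of $A_m$ have gap exceeding $2\epsilon_m$; in particular $g\notin A_m$, and since these $\lceil L/u^m\rceil$ arms are distinct and arm $1$ is among those excluded, some arm $j\in A_m$ has index $j\ge\lceil L/u^m\rceil+1$ and $\hat w_m(j)\ge\hat w_m(g)$. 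Expanding the decomposition and using $|\zeta_m(\cdot)|\le C_m/N$ on $\calE$ together with $\Delta_{1,g}\le 2\epsilon_{m-1}$, the inequality $\hat w_m(j)\ge\hat w_m(g)$ forces $\bar w_m(j)-\bar w_m(g)$ to exceed its mean $-(\Delta_{1,j}-\Delta_{1,g})$ by at least $\Delta_{1,j}-2\epsilon_m$, which is $\ge\tfrac12\Delta_{1,j}$ once $\Delta_{1,j}\ge4\epsilon_m$; the complementary regime, in which there are $\ge\lceil L/u^m\rceil$ arms that are already $2\epsilon_m$-optimal (so $A_m$ cannot avoid them all), is excluded by counting. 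After controlling the pull counts $N_m(i):=\sum_t\mathbb I\{i_t=i\}\sim\mathrm{Bin}(N,q_m)$ with a Chernoff bound, a Hoeffding bound on the difference of two empirical averages gives $\bbP(\hat w_m(j)\ge\hat w_m(g))\le 4\exp(-\Omega(n_m\Delta_{1,j}^2))$. Finally, $j\ge\lceil L/u^m\rceil+1$ and $u>1$ yield $\min\{u\,j,\,L\}\ge\lceil L/u^{m-1}\rceil=|A_{m-1}|$, so with $n_m=N/|A_{m-1}|$,
\begin{align*}
n_m\,\Delta_{1,j}^2\;=\;\frac{N\,\Delta_{1,j}^2}{|A_{m-1}|}\;\ge\;\frac{N}{\min\{u\,j,\,L\}/\Delta_{1,j}^2}\;\ge\;\frac{N}{\tilH_2(w,L,u)}\,,
\end{align*}
which is exactly the exponent in \eqref{eq:delta_bound}; the $\min\{u\cdot i,L\}$ in \eqref{eq:ub_probaSS_def_HtwoPrime} is tailored so that this holds in every phase, including those with $\lceil L/u^m\rceil\in\{1,L\}$. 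A union bound over the $M$ phases, the at most $L-1$ candidate arms $j$, and the complement of $\calE$ then yields \eqref{eq:delta_bound}.

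\emph{Main obstacle.} The real work is the simultaneous bookkeeping of the three sources of slack --- the accumulated corruption allowance $\epsilon_m$, the statistical fluctuations, and the ceilings $\lceil L/u^m\rceil$ in the halving schedule --- so that the output gap is provably at most $8CM/T$ with the stated constant (not merely up to constants), together with making the martingale concentration for $\zeta_m(i)$ strong enough to preserve an $\exp[-\Omega(N/\tilH_2(w,L,u))]$ tail rather than degrading to an $\exp[-\Omega(\sqrt N)]$-type tail; handling the boundary regime $\Delta_{1,j}\approx\epsilon_m$ and checking $\min\{u\,j,L\}\ge|A_{m-1}|$ for non-integer $u$ and the extreme phases is the crux.
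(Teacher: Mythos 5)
Your proposal is correct in its two essential technical ingredients, and on those it coincides with the paper's proof: (i) the decomposition $\hat w_m(i)=\bar w_m(i)+\zeta_m(i)$ with a Freedman-type martingale bound on $\sum_t(c_t(i)\mathbb I\{i_t=i\}-q_m c_t(i))$, which is exactly how the paper's Lemma~\ref{lemma:ub_probaSS_conc_apply} converts the randomization of Line~\ref{line:probaSS_prob_pull} into a corruption term $2C_m/N$ rather than the $C_m|A_{m-1}|/N$ one would get for SH; and (ii) the inequality $j\ge\lceil L/u^m\rceil$ for a surviving sub-optimal arm, combined with $\lceil L/u^m\rceil\ge |A_{m-1}|/u$ and $|A_{m-1}|\le L$, which turns $n_m\Delta_{1,j}^2$ into $N/\tilH_2(w,L,u)$. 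Where you genuinely diverge is in how the phases are stitched together. You run an induction over all $M$ phases, maintaining that $A_m$ contains an arm of gap at most a multiple of the \emph{cumulative} corruption $\epsilon_m=(C_1+\cdots+C_m)/N$. The paper instead isolates the single (random) phase $m_1$ in which arm $1$ is eliminated, compares arm $1$ against $j_1:=\argmin_{i\in A_{m_1}}w(i)$ within that phase alone (Lemma~\ref{lemma:ub_probaSS_conf_apply_gap} with $a_i=\Delta_{1,i}/8$), and uses the nestedness $\iout\in A_{m_1}$ to conclude $\Delta_{1,\iout}\le\Delta_{1,j_1}\le 8C_{m_1}/N\le 8C/N$; the failure probability is then a union over realizations of $(m_1,j_1)$. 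The paper's route buys a per-phase bound $8C_{m_1}/N$ and, more importantly, avoids any running induction: the dichotomy ``either the gap is at most $8C_m/N$ or a deviation of size $\Delta_{1,i}/8$ occurred'' is absorbed entirely into the event algebra of $\calE^{(\rmL)}_{m,1}(a_i)\cap\calE^{(\rmU)}_{m,i}(a_i)$.

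The one loose end in your write-up is precisely the bookkeeping you flag at the end, and as stated it does not close: your induction targets gap $2\epsilon_m$, but the deviation argument only becomes effective when $\Delta_{1,j}\ge 4\epsilon_m$, and in the complementary regime $\Delta_{1,j}<4\epsilon_m$ the counting argument only produces arms that are $4\epsilon_m$-optimal, which does not contradict the failure event ``all arms of $A_m$ have gap $>2\epsilon_m$.'' You need to run the induction with the weaker invariant (gap at most $4\epsilon_m$, say), after which $4\epsilon_M\le 4C/N\le 8CM/T$ recovers the stated constant; alternatively, adopting the paper's single-phase comparison at $m_1$ removes the issue entirely. With that repair the argument goes through and yields the same exponent and the same $M(L-1)$ union-bound prefactor as \eqref{eq:delta_bound}.
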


Theorem \ref{thm:ub_probaSS} shows that PSS$(u)$ is $(\epsilon_C, \delta)$-PAC for any $u\in (1, L]$, where $$\epsilon_C = O\left(\frac{C \log L}{T}\right) \quad
\text{ and }\quad\delta = \exp[-\Theta(T)].$$ We remark that only $\gapbd$, but not $\errpr$, depends on $C$. The dependence of $\gapbd$ on the CPS  $C/T$ is, in general, unavoidable in view of our lower bounds (see Section~\ref{sec:main_result_lb}).


The upper bound on the failure probability $\errpr$ involves the parameter $\tilde{H}_2(w,L,u)$, which quantifies the difficulty of identifying the best item in the instance. The parameter $\tilde{H}_2(w,L,u)$ generalizes its analogue 
\begin{align*}
    H_2(w) = \max_{  
                      i \ne 1
                  }
                  \frac{i  }{ \Delta_{1,i}^2  } 
\end{align*}
proposed by \citet{audibert2010best},
in the sense that
$$ \lim_{ u \rightarrow 1^+ } \tilH_2(w,L,u) = H_2(w),\quad\forall\, w \in [0,1]^L .$$
We propose to consider the more general version $\tilH_2(w,L,u)$ in order to analyze the randomized versions of SH and UP under one unified framework.



%

\noindent {\bf Function of parameter $u$.}
Theorem~\ref{thm:ub_probaSS} implies that when $u$ increases, the upper bound $\gapbd$ on $\Delta_{1, \iout}$ decreases.
However, the quantity $\tilH_2 (w,L,u) $ increases,
which leads to a larger upper bound on the failure probability.
Specifically, 
\begin{align*}
    \tilH_2 (w, L , u_2 ) \ge \frac{ u_2 }{ u_1 }  \tilH_2 (w, L , u_1 ), ~~ \forall \, 1 < u_1 \le u_2 \le L.
\end{align*}
Meanwhile, as presented in Algorithm~\ref{alg:probaSS}, PSS$(u)$ with a larger $u$ separates the whole horizon into fewer phases and shrinks the active set faster.
(i) The fewer number of phases leads to a longer duration of each phase, which is beneficial for bounding the impact of corruptions (see Lemma~\ref{lemma:ub_probaSS_conc_apply}).
(ii) Besides, the faster the active sets shrink, the larger $\tilH_2  (w,L,u) $ is.
See Section~\ref{pf:ub_probaSS_thm_final_step} for details.

Altogether, Theorem~\ref{alg:probaSS} 
provides a bound on learning an $\gapbd$-optimal item and
implies that PSS$(u)$ allows the agent to trade off between the bound on $\Delta_{1, \iout}$ and the failure probability by adjusting $u$. When the CPS is so low that 
\begin{align}
  \frac{C}{ T } <  \frac{   \Delta_{1,2 }  }{ 8 \lceil \log_u L \rceil },  
    \label{eq:probaSS_C_OK}
\end{align}
 Theorem \ref{thm:ub_probaSS} implies that $\text{PSS}(u)$ identify the optimal item with probability at least $1 - \delta$, where $\delta =\exp(-\Theta(T))$ is as shown in \eqref{eq:delta_bound}. When the CPS is so large such that
\begin{align}
    \frac{C}{ T } \ge  \frac{   \Delta_{1,L }  }{ 8 \lceil \log_u L \rceil },  
    \label{eq:probaSS_C_too_large}
\end{align}
Theorem~\ref{thm:ub_probaSS} is vacuous, since all the items are $\Delta_{1,L}$-optimal.  
In the extreme case in which 
 $$ \frac{C}{ T } \ge  \sup_{ u\in (1,L]  } \frac{      \Delta_{1,L }  }{ 8 \lceil \log_u L \rceil }    
     =  \frac{   \Delta_{1,L} }{  8 } ,
     $$ 
     Theorem \ref{thm:ub_probaSS} is vacuous for all $u\in (1, L]$. 
Indeed, we show in Section~\ref{sec:main_result_lb} 
that this bifurcation on the learnability holds true not only to our algorithms. No algorithm can achieve BAI when $C/T$ is above a certain threshold. In passing, our characterization of the threshold is tight up to log factors. 


\noindent {\bf BAI on stochastic setting without corruptions.}
In the setting without adversarial corruptions, i.e., $C=0$, Theorem~\ref{alg:probaSS} upper bounds the probability that PSS$(u)$ outputs $\iout$ with $\Delta_{ 1,\iout } >0$. We compare Theorem \ref{thm:ub_probaSS} on $\text{PSS}(2)$ with the performance guarantee of SH by \citet{karnin2013almost}:  
\begin{align*}
    \bbP [  \Delta_{1,\iout} > 0  ]
    = O 
    \bigg(
         (\log_2 L )  \exp \bigg[   - \frac{ T   }{  8 H_2(w )  \log_2 L  }   \bigg]
    \bigg) .
\end{align*}
Disregarding constants, the bound on $\bbP [   \Delta_{1,\iout} > 0   ]$ of PSS$(2)$ is worse than that 
of SH by a factor of $L$, which is a multiplicative factor we incur due to the impact of corruptions.
Apart from that, our bound involves $\tilH_2(w,L,2)$ while \citet{karnin2013almost} involves $H_2(w)$, and notice that
$$\tilH_2 (w,L,2) \le 2 H_2(w) .$$
As a result, our exponent matches that by \citet{karnin2013almost} up to an absolute constant (which is $48$).

 Next, we compare Theorem \ref{thm:ub_probaSS} on $\text{PSS}(L)$ with the performance guarantee of UP, which is folklore. We use the following in Section 33.3 of \citet{lattimore2020bandit}:
\begin{align}
\!\!\!\! \Pr[\Delta_{1, \text{out}} > 0]  &\leq \sum^L_{i=2}\exp\left[ - \frac{ \lfloor T / L \rfloor \cdot  \Delta^2_{1, i}}{4}\right] \nonumber\\*
& \leq  (L-1) \exp\left[ - \frac{  \lfloor T / L \rfloor \cdot \Delta^2_{1, 2}}{4}\right], \label{eq:UP_loose_bound}
\end{align}
where \eqref{eq:UP_loose_bound} is tight when $\Delta_{1, 2} = \Delta_{1, i}$ for all $i \ne 1$. For $\text{PSS}(L)$,  $\tilde{H}_2(w, L, L) = L / \Delta_{1, 2}^2$, and the failure probability bound in \eqref{eq:delta_bound} specializes to $$O \bigg(L\exp\bigg[- T \cdot \frac{\Delta^2_{1, 2} }{  192\cdot L } \bigg]\bigg),$$
which matches \eqref{eq:UP_loose_bound} up to   multiplicative factors in the exponent and the $O(\cdot)$ notation. 


\begin{table}[t]
  \centering
  \caption{ Comparison of PSS$(u)$ to Other Algorithms
  }
  \resizebox{ .48 \textwidth}{!}{ 
    \renewcommand{\arraystretch}{2.5}
	\begin{tabular}{ l | l |  l }
        Algorithm     &  Order of $  \gapbd$ & Order of $ \errpr  $   \\
	  \thickhline  
	  %
	  PSS$(u)$ & $   \displaystyle  \frac{  C    \log_u L   }{  T }  $  & $  \displaystyle  L (\log_u L)  \exp \bigg[ - \frac{ T}{  192    \tilH_2 (w,L,u)  \log_u L          } \bigg]
	  \vphantom{   \Bigg[      }
	    $  \\
	  \hline 
	  %
	  PSS$(2)$ & $   \displaystyle  \frac{   C   \log_2 L  }{  T }  $   & $  \displaystyle  L (\log_2 L ) \exp \bigg[ - \frac{ T}{   192  \tilH_2 (w,L,2)  \log_2 L  }   \bigg]  $ \\
	  %
	  SH & $   \displaystyle  \frac{   CL  \log_2 L  }{  T }  $    & $  \displaystyle  L (\log_2 L ) \exp \bigg[ - \frac{ T}{ 192    \tilH_2 (w,L,2)  \log_2 L  }   \bigg]  $ \\
	  %
	  PSS$(L)$ & $   \displaystyle  \frac{  C  }{  T }  $   & $  \displaystyle L  \exp \bigg(  -  \frac{    T    }{    192    L  / \Delta_{1,2}^2  }  \bigg)  $ \\
	  %
	  UP & $   \displaystyle  \frac{  CL }{  T }  $    & $  \displaystyle L  \exp \bigg(  -  \frac{    T    }{     192  L  / \Delta_{1,2}^2  }  \bigg)  $ 
	    \end{tabular}%
	    \renewcommand{\arraystretch}{1}
	    }
	  \label{tab:alg_comp}%
\end{table}

\noindent {\bf Comparisons in the corrupted setting.} Though the SH and the UP algorithms
can be directly applied to the setting with corruptions, we propose PSS$(u)$ to inject randomness in order to mitigate the impact of corruptions.
Intuitively, for an adversary with the knowledge of the algorithm,
the fact that a deterministic algorithm such as SH or UP pulls each active item 
 according to a deterministic schedule fixed at the start of a phase 
allows the adversary to corrupt rewards of the items to be pulled. 
However, PSS$(u)$ pulls items {\em probabilistically}, which prevents the adversary from identifying the items to be pulled even when the semantics of the algorithm are known to the adversary. 

 We analyse SH and UP using a similar analysis to our proof of Theorem \ref{thm:ub_probaSS}, and we tabulate the $(\gapbd, \errpr)$-PAC performance guarantee in Table \ref{tab:alg_comp}. While SH and UP have similar performance guarantees on $\errpr$ compared to their randomized counterparts, namely $\text{PSS}(u)$, the upper bounds on $\gapbd$ for SH and UP are {\em larger} than their randomized counterparts by a multiplicative factor of $L$. Consequently, the randomization in PSS$(u)$ allows us to mitigate the adversarial corruptions and leads to a better performance guarantee on $\epsilon_C$ compared to its deterministic counterparts.

\subsection{Lower bounds}
\label{sec:main_result_lb}


 In the previous section, we observed that the performance guarantee of $\text{PSS}(u)$ on $\epsilon_C$ deteriorates as the CPS increases. Interestingly, the deterioration is, in fact, fundamental to any online algorithm. Here, we demonstrate that no online algorithm is able to identify the optimal item with vanishing failure probability when  $C/T$ is above a certain threshold.  
 In fact, one attack strategy we design is shown to cause SH to fail miserably; in contrast, PSS$(2)$ remains robust to it.
 The impossibility result is further generalized to the identification of an $\epsilon$-optimal item for any $\epsilon\in [0, \Delta_{1,L} )$.  

 

\noindent {\bf Bernoulli instance.} 
We focus on instances where each item $i\in [L]$ follows $ \mathrm{Bern}(w(i))$, and $1 \! > \! w(1) \! > \!  w(2 ) \! \ge \! w(3) \! \ge \! \ldots \! \ge \! w(L) \! > \! 0$.
For any $\epsilon\in (0,1) $, we use
$
    L_{ \epsilon  } : = | \{ i\in [L] : \Delta_{1, i } \le \epsilon \} | 
$
to count the number of items with mean reward at most $\epsilon$ worse than that of the optimal item. 

\noindent{\bf Corruption strategy against general BAI algorithms.} Abbreviate $ \Delta_{1,2}$ as $\Delta$. Assume that $w(2) - \Delta > w(3)$. In this strategy, essentially, the adversary solely corrupts the reward of item~$1$, so that $\tilde{W}_t(1)\sim\text{Bern}(w(2) - \Delta)$, different from $W_t(1) \sim \text{Bern}(w(1))$, as long as there is enough corruption budget (Figure~\ref{fig:instance_lb_wc}). We  describe the corruption strategy in full in Appendix~\ref{pf:thm_bai_lb}.  
\vspace{-.15in}
    \begin{figure}[h]
	\centering
	\includegraphics[width=.48\textwidth, trim= 3 10 10 5,clip]{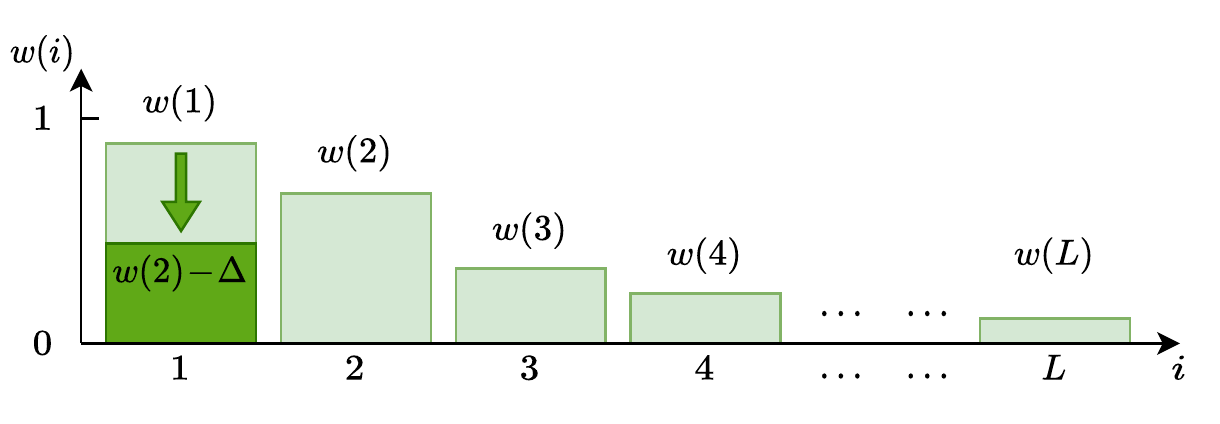}	
	\vspace{-.2in}
	\caption{ 
	    $\tilde{W}_t(1)\sim\text{Bern}(w(2) - \Delta)$
	    }
	\label{fig:instance_lb_wc}
\end{figure} \vspace{-.13in}


For a BAI with adversarial corruptions instance, we say that the instance has an optimality gap $\Delta > 0$ if $\Delta = \Delta_{1,2} > 0$. 
\begin{restatable}{theorem}{thmcoupling}\label{thm:bai_lb}
Fix   $\lambda\in (0, 1)$ and $ \Delta\in (0, 1/2)$.
 For any online algorithm, there is a BAI with an adversarial corruption  instance in $T$   steps, corruption budget $C = 1+(1+ \lambda)2\Delta T$, and optimality gap $\Delta$, such that 
\begin{align}
\bbP[ \Delta_{1, \iout } > 0 ]  &= \bbP[ \Delta_{1, \iout}  \ge \Delta] = \bbP[  \iout  \ne 1 ]  \nonumber\\*
&  \geq \frac{1}{2}\cdot \bigg[  1 - \exp\Big(-\frac{2\lambda^2 \Delta T }{3} \Big) \bigg]. \nonumber
\end{align}
\end{restatable}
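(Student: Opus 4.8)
The plan is a two-point (Le Cam--type) argument wrapped around a coupling that renders the corruption invisible on a high-probability event. I would exhibit two Bernoulli instances. In $\calI_1$ the true means are $(w(1),w(2),w(3),\dots,w(L))$ with $w(1)-w(2)=\Delta$ and $w(2)-\Delta>w(3)$, which is feasible because $\Delta<1/2$ (e.g.\ take $w(1)$ close to $1$ and $w(3),\dots,w(L)$ close to $0$); here item~$1$ is optimal. In $\calI_2$ the true means are $(w(2)-\Delta,\,w(2),\,w(3),\dots,w(L))$, so item~$2$ is optimal with optimality gap $w(2)-(w(2)-\Delta)=\Delta$; in $\calI_2$ the adversary does nothing. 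In $\calI_1$ the adversary corrupts only item~$1$: it draws i.i.d.\ auxiliary coins $B_t\sim\mathrm{Bern}\!\big(\tfrac{w(2)-\Delta}{w(1)}\big)$ and, while the budget permits, reports $\tilde W_t(1)=W_t(1)B_t$, i.e.\ $c_t(1)=\tilde W_t(1)-W_t(1)\in\{-1,0\}$, with $c_t(i)=0$ for $i\ge 2$. Since $\bbP[W_t(1)=1,\,B_t=0]=w(1)\cdot\tfrac{2\Delta}{w(1)}=2\Delta$, the per-step corruption is at most $1$, and the reported reward of item~$1$ has marginal law $\mathrm{Bern}(w(2)-\Delta)$. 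Note this strategy is legitimate: it is determined without observing $i_t$ (item~$1$ is always its target) and it uses only $W_t(1)$, its own coin $B_t$, and the running budget count.

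Next I would bound the budget. Let $N=\#\{t\in[T]:W_t(1)=1,\,B_t=0\}\sim\mathrm{Bin}(T,2\Delta)$ be the total corruption the strategy wants to spend; then $\bbE[N]=2\Delta T$, and a multiplicative Chernoff bound gives $\bbP[N>(1+\lambda)2\Delta T]\le\exp(-2\lambda^2\Delta T/3)$. On the event $E:=\{N\le(1+\lambda)2\Delta T\}$ we have $N\le C-1$, so in $\calI_1$ the adversary never exhausts its budget and reports $\tilde W_t(1)=W_t(1)B_t$ at \emph{every} step (this is precisely where the ``$+1$'' in $C=1+(1+\lambda)2\Delta T$ is used), while no other item is ever corrupted.

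Then comes the coupling step: place $\calI_1$ and $\calI_2$ on one probability space by defining item~$1$'s reward stream in $\calI_2$ to be $W_t(1)B_t$ and the other streams to be $W_t(i)$ ($i\ge 2$), and by feeding the algorithm the same internal randomness in both systems. On $E$ the two systems then present \emph{identical} observations to the algorithm at every step, so (by induction on $t$) $\iout$ is the same random element of $[L]$ under both; writing $p_j:=\bbP[\iout=j\mid E]$ we get $p_1+p_2\le 1$. Since the optimal item is $1$ in $\calI_1$ and $2$ in $\calI_2$, the conditional failure probabilities sum to $(1-p_1)+(1-p_2)\ge 1$, so at least one of them is $\ge 1/2$; for that instance $\calI^\star$ we obtain $\bbP_{\calI^\star}[\iout\ne\text{optimal}]\ge\tfrac12\bbP[E]\ge\tfrac12\big(1-\exp(-2\lambda^2\Delta T/3)\big)$. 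In both instances every suboptimal item has gap at least $\Delta$ (using $w(2)-\Delta>w(3)\ge\dots\ge w(L)$), so after relabelling the optimal item of $\calI^\star$ as item~$1$ we have $\{\Delta_{1,\iout}>0\}=\{\Delta_{1,\iout}\ge\Delta\}=\{\iout\ne1\}$, which gives the claimed chain of equalities and the bound.

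The step I expect to be delicate is the construction of the coupling: $\calI_2$ must be chosen so precisely (true means $(w(2)-\Delta,w(2),w(3),\dots)$, \emph{no} corruption) that the corrupted reward stream of $\calI_1$ and the honest stream of $\calI_2$ have the same law, and that the adversary in $\calI_1$ drives item~$1$ all the way past item~$2$ (a shift of $2\Delta$, hence the $2\Delta T$ scale of $C$). Once the coupling and the budget-feasibility event $E$ are set up, the Chernoff estimate and the ``at least one instance fails'' bookkeeping are routine.
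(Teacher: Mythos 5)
Your proposal is correct and follows essentially the same route as the paper's proof: the same pair of instances $\calI$ and $\calI'$ (with item~$1$'s mean lowered to $w(2)-\Delta$ in the second), the same randomized thinning attack on item~$1$ with per-step corruption probability $2\Delta$, the same multiplicative Chernoff bound on the budget, and the same coupling/two-point argument. The only cosmetic difference is that you condition on the budget-feasibility event $E$ and use $p_1+p_2\le 1$, whereas the paper applies $\bbP[\calA_1]+\bbP[\calA_2]=\bbP[\calA_1\cup\calA_2]+\bbP[\calA_1\cap\calA_2]\le 1+\exp(-2\lambda^2\Delta T/3)$; both yield the identical bound.
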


 In particular, Theorem~\ref{thm:bai_lb} implies that, if the CPS satisfies
\begin{equation}\label{eq:CPS_bai_lbd}
C/T > 2\Delta_{1, 2},
\end{equation}
then it is impossible to identify the best item with probability $1 - \exp[-\Theta(T)]$. The upper bound in \eqref{eq:probaSS_C_OK} and the lower bound in \eqref{eq:CPS_bai_lbd} differ  by a multiplicative factor of $16 \lfloor \log_u L\rfloor$. Consequently, the upper bound in \eqref{eq:probaSS_C_OK} is within a factor of $O(\log_u L)$ away from the largest possible upper bound on CPS $C/T$, under which it is possible to identify the best item with probability at least $1 - \text{exp}[-\Theta(T)]$.

\noindent \textbf{Robustness of PSS$(2)$ with respect to SH.} 
Consider Theorem~\ref{thm:bai_lb}'s attack strategy (see Figure~\ref{fig:instance_lb_wc} and Appendix~\ref{pf:thm_bai_lb}), but applied to SH only in phase~1. We can show that SH will fail to identify the best item with probability at least~$1/2$.

\begin{restatable}{theorem}{thmcouplingSH}\label{thm:bai_lb_sh}
Fix  $L>1$, $\lambda\in (0, 1)$ and $ \Delta\in (0, 1/4)$.
 For the SH algorithm, there is a BAI with   adversarial corruption  instance with $T$ time  steps, corruption budget $C = (1+\lambda)2\Delta T / (L\log_2 L) $, and optimality gap $\Delta$, such that  if $T$ is sufficiently large,
\begin{align}
 \bbP[ \Delta_{1, \iout } > 0 ]  =\bbP[ \Delta_{1, \iout}  \ge \Delta]  = \bbP[  \iout  \ne 1 ]  \ge 1/2. \nonumber
\end{align}
\end{restatable}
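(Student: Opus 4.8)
The plan is to exhibit a single explicit Bernoulli instance on which the phase-$1$ attack forces {\sc SH} to eliminate item~$1$ with probability tending to $1$, hence $\iout\neq 1$ with probability at least $1/2$ once $T$ is large. Take $L$ items with $w(1)=1/2+\Delta$ and $w(2)=\cdots=w(L)=1/2$; then the optimality gap is $\Delta_{1,2}=\Delta$, and the hypothesis $\Delta<1/4$ guarantees that all means, as well as the corrupted mean $1/2-\Delta$, lie in $(1/4,3/4)\subset(0,1)$. Since item~$1$ is the only item with $\Delta_{1,i}<\Delta$, the three events $\{\Delta_{1,\iout}>0\}$, $\{\Delta_{1,\iout}\ge\Delta\}$, $\{\iout\neq 1\}$ coincide, so it is enough to show $\bbP[\iout\neq 1]\ge 1/2$.

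Recall that {\sc SH}'s first phase is a deterministic, algorithm-known schedule in which every item is pulled exactly $n_1:=\lfloor T/(\lceil\log_2 L\rceil L)\rfloor$ times; in particular the adversary knows beforehand the $n_1$ steps at which item~$1$ is pulled. The attack is Theorem~\ref{thm:bai_lb}'s strategy restricted to phase~$1$: at each such step the adversary reads $W_t(1)$, and if $W_t(1)=1$ it sets $c_t(1)=-1$ with probability $2\Delta/(1/2+\Delta)$ and $c_t(1)=0$ otherwise; at every other step, and for every other item, $c_t(i)=0$. This makes the observed reward $\tilde{W}_t(1)\sim\mathrm{Bern}(1/2-\Delta)$ at each item-$1$ pull in phase~$1$, while consuming one unit of budget with probability exactly $2\Delta$ per pull. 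The adversary runs this greedily, stopping a corruption only if it would push the running total above the budget $C=(1+\lambda)2\Delta T/(L\log_2 L)$; no corruption is ever used in phases $2,\dots,M$.

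The analysis splits into two estimates. First, let $E$ be the event that the budget is never exhausted during phase~$1$. The number of units spent is $\mathrm{Binomial}(n_1,2\Delta)$, with mean $2\Delta n_1\le 2\Delta T/(L\log_2 L)$, whereas $C\ge(1+\lambda)\,2\Delta n_1$ (using $\lceil\log_2 L\rceil\ge\log_2 L$); a multiplicative Chernoff bound then yields $\bbP[E^{c}]\le\exp(-2\lambda^{2}\Delta n_1/3)$, which $\to 0$ as $T\to\infty$. Second, on $E$ the planned corruption is applied in full, so {\sc SH}'s phase-$1$ observations are distributed exactly like those of a clean instance $\mathcal{I}'$ in which item~$1$ has mean $1/2-\Delta$ and items $2,\dots,L$ have mean $1/2$. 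I would make this rigorous by building a coupling in which the observed corrupted rewards of item~$1$ during phase~$1$ are set equal to a fresh \iid $\mathrm{Bern}(1/2-\Delta)$ sequence (independent of the rewards of items $2,\dots,L$), and then use the event identity $\{\text{{\sc SH} keeps item }1\}\cap E=\{\text{{\sc SH} keeps item }1\text{ under }\mathcal{I}'\}\cap E$ to conclude $\bbP[\text{{\sc SH} keeps item }1,\,E]\le\bbP_{\mathcal{I}'}[\text{{\sc SH} keeps item }1]$. In $\mathcal{I}'$, item~$1$ has the strictly smallest mean and {\sc SH} retains after phase~$1$ only the $\lceil L/2\rceil<L$ items of largest empirical mean, so it keeps item~$1$ only if $\hat{w}_1(1)\ge\hat{w}_1(i)$ for some $i\ge2$, an event of probability at most $(L-1)\exp(-n_1\Delta^2/2)$ by Hoeffding's inequality and a union bound. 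Combining, $\bbP[\iout=1]\le\bbP[\text{{\sc SH} keeps item }1]\le\bbP_{\mathcal{I}'}[\text{{\sc SH} keeps item }1]+\bbP[E^{c}]\to 0$, so $\bbP[\iout\neq1]\ge 1/2$ for all sufficiently large $T$.

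The step that needs care is the coupling: since the budget spent at an item-$1$ pull is correlated with the value observed there, one cannot simply condition on $E$ and declare item~$1$'s observations to be \iid $\mathrm{Bern}(1/2-\Delta)$; the remedy is to construct the joint law so that the {\em observed} item-$1$ values form an unconditional \iid $\mathrm{Bern}(1/2-\Delta)$ sequence, and to pass to $\mathcal{I}'$ only through the event identity above, which holds because on $E$ the algorithm's input is literally that of $\mathcal{I}'$. A secondary point is that, because the attack acts only in phase~$1$, the elimination must be decisive there; this is exactly why we cluster $L-1$ arms strictly above the corrupted mean $1/2-\Delta$. Everything else — the Chernoff bound on the budget and the Hoeffding/union bound on the elimination — is routine. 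Unlike Theorem~\ref{thm:bai_lb}, which must argue indistinguishability against an arbitrary algorithm, here {\sc SH} is explicit, so a single instance together with this one-sided coupling suffices.
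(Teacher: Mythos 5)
Your proposal is correct and follows essentially the same route as the paper: the same one-versus-$(L-1)$ clustered Bernoulli instance with gap $\Delta$, the same probabilistic phase-1 attack driving item~$1$'s observed rewards to $\mathrm{Bern}(w(1)-2\Delta)$ at cost $2\Delta$ per pull in expectation, a Chernoff bound showing the budget $C=(1+\lambda)2\Delta T/(L\log_2 L)$ suffices with high probability, and a Hoeffding-plus-union bound showing item~$1$ is then eliminated in phase~1. Your explicit one-sided coupling to the clean instance $\mathcal{I}'$ via the event identity is in fact a slightly more careful treatment of the conditioning step that the paper factorizes directly, but it yields the same bounds and conclusion.
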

Consequently, if $C/T \ge  \Delta_{1,2} / (L\log_2 L)$, SH fails to identify the best item with probability at least $1/2$ for large $T$. In contrast, PSS(2) identifies the best item with probability at least $1-\exp(-\Theta(T))$ as long as $C/T = O(\Delta_{1, 2} / \log_2 L)$ (see (\ref{eq:probaSS_C_OK})). Lastly, according to Table~\ref{tab:alg_comp}, SH would succeed with high probability if $C/T \le O(\Delta_{1,L} / (L\log_2 L))$. 
Hence, the upper and lower bounds of the CPS for SH 
are tight, {\em even up to log factors in $L$}.

The failure of SH is due to the fact that  according to the observation history, the adversary knows the item to pull at each time step.
In contrast, for PSS$(2)$, when determining $\{c_{t}(i)\}_{i\in [L]}$,  {\emph{the adversary only knows $\{W_{t}(i)\}_{i\in [L]}$, but does not know $i_t$.}} Rather, he only knows the {\em  distribution} of $i_t$. This uniform distribution facilitates exploration, while minimizing the leakage 
of the identity of $i_t$ to the adversary; this leads to an improvement by a factor of $\widetilde{O}(L)$ on  $\epsilon_C$.

\noindent{\bf Corruption strategy against identifying an $\epsilon$-optimal item.} 
We extend the previous strategy in order to impede the identification of an $\gapbd$-optimal item for any $\gapbd\in [0, \Delta_{1,L})$. Consider the following  two offline strategies:
\begin{itemize}[ itemsep = -15pt,   topsep = -1pt]
     \item[(I)] at each time step, if the random reward is $1$, the adversary shifts it to $0$ until the corruption amount is depleted (see Figure~\ref{fig:instance_lb});
     \item[] \vspace{1em}
    \begin{figure}[h]
	\centering
	\includegraphics[width=.48\textwidth, trim= 12 10 10 3,clip]{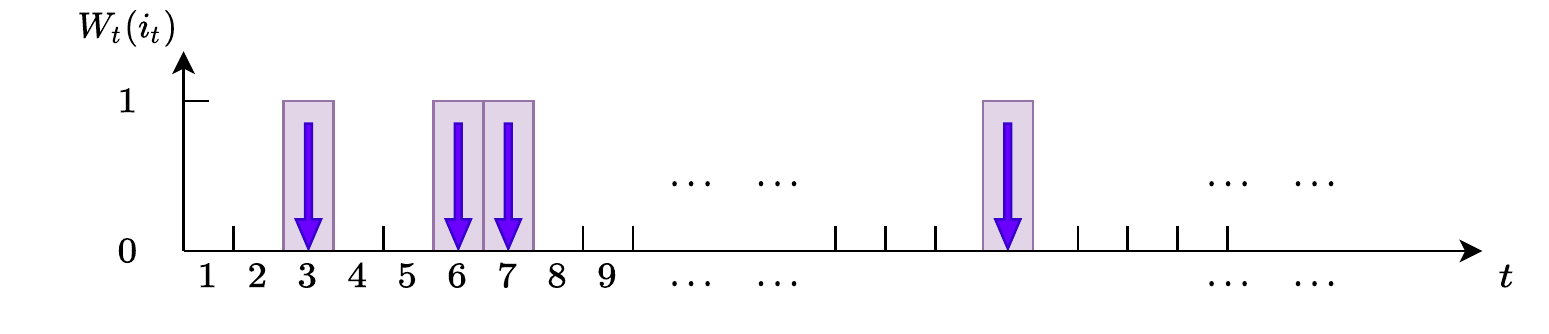}	
	\vspace{-.15in}
	\caption{ 
	    Shift $W_t ( i_t)$ to $  0 $ When $W_t ( i_t  )=  1$ 
	    }
	\label{fig:instance_lb}
\end{figure}
\vspace{-.15in}
    \item[(II)] at each time step, if the random reward is $0$, the adversary shifts it to $1$ until the corruption amount is depleted.
\end{itemize}
The design of either strategy aims  to make the agent obtain the same random reward at all time steps.
As a result, the agent fails to get any information from the observations. 
In this case, the best thing she can do is to output any item with a uniform probability of $1/L$ after $T$ time steps.

\begin{restatable}{theorem}{thmLbBerShiftToZeroOne}
\label{thm:lb_ber_shift_to_zero_one}
Fix  any  $\lambda, \epsilon \in (0,1)$.
If  $C\ge  L \cdot \{ 1 - (1 - \lambda )[ 1-w(1) ]  \}\cdot T$,
 Strategy (I)'s attack results in
\begin{align*}
    \bbP [ \Delta_{1,\iout} > \epsilon ]  \ge 1 -  \frac{ L_{\epsilon  } }{L} -  \exp\bigg[ - \frac{ \lambda^2 T L [ 1- w(1) ]  }{2}  \bigg]  .
\end{align*}
If instead $C \ge  L \cdot [1 - (1-\lambda ) w(L) ]\cdot T$, 
Strategy (II)'s attack results in
\begin{align*}
    \bbP [ \Delta_{1,\iout} > \epsilon ]  \ge 1 -  \frac{ L_{ \epsilon } }{L} -  \exp\bigg[ - \frac{ \lambda^2 T L  w(L) }{2}  \bigg].
\end{align*}

\end{restatable}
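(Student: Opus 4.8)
The plan is to prove both halves by the same two-step argument, treating Strategy~(I) in detail (Strategy~(II) being the sign-flipped mirror image). Step one shows that, under the stated budget, the attack succeeds in overwriting \emph{every} observed reward by $0$ for the whole horizon, except on an event of probability at most $\exp[-\lambda^2 TL(1-w(1))/2]$. Step two shows that, conditioned on this ``flattening'' event, the observation history is a fixed constant sequence, so the identity of $\iout$ is decoupled from which physical item carries the largest mean; hence the output is $\epsilon$-optimal with probability at most $L_\epsilon/L$. A union bound over the two steps yields the claim.

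For step one, under Strategy~(I) the adversary would set $c_t(i)=-W_t(i)$ for all $i$ at every step (until, and if, the budget is exhausted), which costs $\max_i\mathbb{I}\{W_t(i)=1\}\le\sum_{i\in[L]}\mathbb{I}\{W_t(i)=1\}$ at step~$t$. Thus the attack never runs out of budget --- and every observed reward equals $0$ --- on the event $E:=\{N_0\ge LT-C\}$, where $N_0:=\sum_{t\in[T],\,i\in[L]}\mathbb{I}\{W_t(i)=0\}$ is a sum of $LT$ independent Bernoullis with $\bbE[N_0]=T\sum_{i}(1-w(i))\ge LT(1-w(1))$ because $w$ is non-increasing. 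The hypothesis $C\ge L\{1-(1-\lambda)(1-w(1))\}T$ rearranges to $LT-C\le(1-\lambda)LT(1-w(1))\le(1-\lambda)\bbE[N_0]$, so the multiplicative Chernoff lower tail gives
\[
  \bbP[E^c]\ \le\ \bbP\big[N_0<(1-\lambda)\bbE[N_0]\big]\ \le\ \exp\!\Big(-\tfrac{\lambda^2\bbE[N_0]}{2}\Big)\ \le\ \exp\!\Big(-\tfrac{\lambda^2 LT(1-w(1))}{2}\Big).
\]
For Strategy~(II) one instead spends $\le\sum_i\mathbb{I}\{W_t(i)=0\}$ per step, so the relevant count is $N_1=\sum_{t,i}\mathbb{I}\{W_t(i)=1\}$ with $\bbE[N_1]=T\sum_i w(i)\ge LTw(L)$, and the hypothesis $C\ge L\{1-(1-\lambda)w(L)\}T$ gives $\bbP[E^c]\le\exp(-\lambda^2 LTw(L)/2)$ verbatim.

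For step two, on $E$ every observed reward equals $0$, so --- since the sampling rule $\pi_t$ and recommendation rule $\phi^{\pi,T}$ are $\calF_t$-measurable and $\calF_T$ restricted to $E$ depends only on the algorithm's internal randomization (the $\tilde W$-values being frozen) --- the conditional law on $E$ of the \emph{label} of $\iout$ is the same across all $L!$ instances obtained by permuting which physical item gets which mean. Moreover $E$ and $\bbP[E]$, being functions of the symmetric statistic $N_0$, are permutation-invariant, and Strategy~(I) itself does not use the means. Averaging $\bbP[\iout\text{ is }\epsilon\text{-optimal}\mid E]$ over the $L!$ permutations equals $L_\epsilon/L$ (the $\epsilon$-optimal set always has size $L_\epsilon$), so there is a permuted instance with $\bbP[\iout\text{ is }\epsilon\text{-optimal}\mid E]\le L_\epsilon/L$; equivalently, the adversary may assign the $L_\epsilon$ near-optimal means to the items least favoured by the output law, and the $L_\epsilon$ smallest entries of a probability vector on $[L]$ sum to at most $L_\epsilon/L$. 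For such an instance,
\[
  \bbP[\Delta_{1,\iout}>\epsilon]\ \ge\ 1-\bbP[\iout\text{ is }\epsilon\text{-optimal}\mid E]\,\bbP[E]-\bbP[E^c]\ \ge\ 1-\frac{L_\epsilon}{L}-\bbP[E^c],
\]
and inserting the step-one estimate finishes the proof for each strategy.

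The routine parts are the two Chernoff estimates; the one technical point worth care there is to bound the per-step corruption by the \emph{sum over items} of the $0/1$ indicators rather than their maximum, which is precisely what surfaces the factor $L$ inside the exponent and in the threshold on $C$. The real obstacle is formalizing step two: one must argue rigorously that ``uninformative observations'' force the agent down to uniform guessing over the instance, which is exactly what the permutation/averaging (equivalently, worst-case relabelling) argument accomplishes, and one must check that the flattening event, its probability, and the attack itself are all labelling-invariant so that this averaging is legitimate.
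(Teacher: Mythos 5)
Your proof follows essentially the same route as the paper's: a multiplicative Chernoff bound (the paper's Theorem~\ref{thm:conc_bernoulli}, applied to the complemented Bernoullis for Strategy~(I)) showing that with probability at least $1-\exp[-\lambda^2 TL(1-w(1))/2]$ (resp.\ $1-\exp[-\lambda^2 TL\,w(L)/2]$) the stated budget suffices to flatten every observation to a constant --- using, exactly as you do, the sum over items of the $0/1$ indicators as a conservative bound on the per-step cost $\max_i|c_t(i)|$ --- followed by the observation that a constant observation history reduces the agent to uninformed guessing, so the output is $\epsilon$-optimal with probability at most $L_\epsilon/L$. The one place you go beyond the paper is the second step: the paper simply asserts that ``the best method for the agent is to output an item uniformly at random'' and writes $\bbP[\{\iout=i\}\cap\calE_{\lambda,0}]\le 1/L$, whereas your permutation-averaging (worst-case relabelling) argument is the rigorous reading of that claim; it implicitly recasts the bound as holding for some assignment of means to labels, which is the interpretation needed for the statement to hold against, e.g., an algorithm that always outputs item $1$, and is consistent with how Theorem~\ref{thm:bai_lb} is phrased.
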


%
%
%
%

When $\epsilon< \Delta_{1, 2} $ so that $L_\epsilon=1$,    Theorem~\ref{thm:lb_ber_shift_to_zero_one}
provides   lower bounds for the probability of identifying the optimal item under   corruption
strategies (I) and (II) respectively.  
In this case, when $T\rightarrow \infty $,  the  failure probability is asymptotically lower bounded by $1-1/L$.

Although the adversary can use adaptive strategies to attack random rewards, i.e., design a strategy to add corruptions according to past observations,
Theorem~\ref{thm:lb_ber_shift_to_zero_one}  
 shows that when the corruption is sufficiently large, even an {\em offline} strategy, i.e., one that is {\em fixed} before the algorithm runs,   prevents the agent from identifying a satisfactory item with high probability. Thus, if $C= \Omega(T)$, any algorithm will fail to identify a near-optimal item with high probability.
Therefore, PSS$(u)$ is tight up to a factor that differs from $\log_u L$ in Theorem~\ref{thm:ub_probaSS}  to $L$ in Theorem~\ref{thm:lb_ber_shift_to_zero_one}.

%
%

\section{Proof Sketch of Theorem~\ref{thm:ub_probaSS} }

\label{sec:pf_sketch_ub_probaSS}
We provide the proof sketch for Theorem~\ref{thm:ub_probaSS}. The detailed proof and those of Theorem~\ref{thm:bai_lb}--\ref{thm:lb_ber_shift_to_zero_one} are deferred to the supplementary material.

\noindent \textbf{Feasibility.} 
We first show that our algorithm is feasible in the sense that the $M$ phases proceed within $T$   steps, and $A_M$ is a singleton.
\begin{restatable}{lemma}{lemmaUbprobaSSSingleOutput}
  \label{lemma:ub_probaSS_single_output}
    It holds that $ NM \le T$ and $| A_M | = 1$.
\end{restatable}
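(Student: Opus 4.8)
The plan is to verify the two assertions of the lemma separately; both reduce to elementary arithmetic on the quantities set in Algorithm~\ref{alg:probaSS}, so I expect no substantive obstacle, only a small well-definedness check.

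First, I would dispatch $NM \le T$ immediately: by construction $N = \floor{T/M} \le T/M$, and multiplying through by the positive integer $M$ yields $NM \le T$. This already shows the algorithm is time-feasible, since phase $m$ occupies steps $T_{m-1}+1,\dots,T_m$ with $T_m = mN$, so that the $M$ phases together use exactly steps $1,\dots,MN$, which fit within the budget (any remaining steps $MN+1,\dots,T$ are simply unused).

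Second, to get $|A_M| = 1$ I would show by induction on $m$ that $|A_m| = \ceil{L/u^m}$ for every $0 \le m \le M$. The base case $m = 0$ holds because $A_0 = [L]$ and $\ceil{L/u^0} = L$. For the inductive step, suppose $|A_{m-1}| = \ceil{L/u^{m-1}}$. Since $u > 1$ we have $0 < L/u^m < L/u^{m-1}$, and monotonicity of the ceiling function gives $1 \le \ceil{L/u^m} \le \ceil{L/u^{m-1}} = |A_{m-1}|$. Hence the shrinking step of Algorithm~\ref{alg:probaSS} — retaining the $\ceil{L/u^m}$ items of $A_{m-1}$ with the largest empirical means — is well defined and produces a set $A_m$ of exactly $\ceil{L/u^m}$ elements. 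This well-definedness check is the only point that needs care, and it is precisely where the standing hypotheses $u \in (1, L]$ enter (in particular $u \le L$ forces $L \ge 2$, so $M = \ceil{\log_u L} \ge 1$ and the outer loop is nonempty).

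Finally I would specialize to $m = M = \ceil{\log_u L}$. By definition of the ceiling, $M \ge \log_u L$, so $u^M \ge u^{\log_u L} = L$, giving $0 < L/u^M \le 1$ and therefore $|A_M| = \ceil{L/u^M} = 1$. Combining the two parts completes the proof.
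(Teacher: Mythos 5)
Your proposal is correct and follows essentially the same route as the paper: $NM \le T$ from $N = \lfloor T/M\rfloor \le T/M$, and $|A_M| = \lceil L/u^M\rceil = 1$ because $M = \lceil \log_u L\rceil$ forces $u^M \ge L$ while $L/u^M > 0$ keeps the ceiling at least $1$. The only difference is that you additionally verify by induction that each shrinking step is well defined ($1 \le \lceil L/u^m\rceil \le |A_{m-1}|$), a check the paper leaves implicit.
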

Lemma~\ref{lemma:ub_probaSS_single_output} ensures that $\iout$ is well-defined.
Moreover, it implies that 
$\{ 1 \ne \iout \} \! = \! \{ 1 \notin A_M \}.$

\noindent \textbf{Concentration.} 
 At the end of phase $m~(1\le m \le M)$, the agent shrinks the active set $A_{m-1}$ according to the $\hat{w}_m(i)$'s, the corrupted empirical means of the active items.
Intuitively, we expect that if $\hat{w}_m(i)$ and $w(i)$ are sufficiently close, we can identify item $i$ with small $\Delta_{1,i}$. 
To this end, we define the amount of corruptions during phase $m$ as
\begin{align*}
    C_m := \sum_{ t = T_{m-1}  +1 }^{ T_{m } } \max_{ i\in [L] } | c_t(i) |.
\end{align*}
To estimate the gap between $\hat{w}_m(i)$ and $w(i)$, we define a class of ``nice events'' for all $i \in A_{m-1}$ and $ a   \in ( 0,1)$:
\begin{align*}
    & \calE_{m,i  }^{ (\rmU) } (a) :=  \Big\{   \hat{w}_m(i)  <  w(i)  +   \frac{2  C_m }{ N  } +  2 a 
     \Big\}, \\*
    & \calE_{m,i  }^{ (\rmL) } (a):=  \Big\{  \hat{w}_m(i)  > w(i)  -   \frac{2 C_m }{ N  } -  2 a 
    \Big\}. 
\end{align*}
We utilize Theorem~\ref{thm:conc_var_chernoff} and \ref{thm:conc_martingale} to show that all these events hold with high probability.
In particular, Theorem~\ref{thm:conc_martingale} allows us to bound the impact of corruptions.

\begin{restatable}{lemma}{lemmaUbprobaSSConcApply}
\label{lemma:ub_probaSS_conc_apply}
Let $\overline{\calE} $ denote the complement of any event $\calE$.    For any fixed $m, i\in A_{m-1} $ and  $a    \in ( 0,1)$, 
        \begin{align*}
        & 
        \resizebox{.48\textwidth}{!}{$      
        \bbP \big[   \overline{  \calE_{m,i  }^{ (\rmU) }    (a)  }   \big]   \! \le  \!  2\exp  \! \Big[  \! -\! \frac{  a ^2  \! \cdot   \!  n_m  }{3} \Big]    , \
        \bbP \big[   \overline{  \calE_{m,i  }^{ (\rmL) }  (a)   }   \big]  \!  \le \!    2 \exp \!  \Big[ \! - \! \frac{   a^2 \!  \cdot  \!  n_m  }{3} \Big] .
        $}
    \end{align*} 
\end{restatable}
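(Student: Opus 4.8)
The plan is to split $\hat w_m(i)-w(i)$ into a \emph{stochastic} term and a \emph{corruption} term, control the first by a variance-sensitive Chernoff/Bernstein-type bound (Theorem~\ref{thm:conc_var_chernoff}) and the second by the martingale bound of Theorem~\ref{thm:conc_martingale}, and then combine by a union bound; the prefactor $2$ in the statement is exactly this union bound, and the $2C_m/N$ correction in the nice events is what absorbs the corruption term.

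Fix $m$ and $i\in A_{m-1}$, and write sums over $t\in\{T_{m-1}+1,\dots,T_m\}$ (there are $N$ such $t$). Since $q_mN=n_m$,
\begin{align*}
  \hat w_m(i)-w(i)
  &= \underbrace{\tfrac{1}{n_m}\sum_{t}\bigl(W_t(i)\mathbb{I}\{i_t=i\}-q_m w(i)\bigr)}_{=:A_m(i)}
   \;+\; \underbrace{\tfrac{1}{n_m}\sum_{t} c_t(i)\mathbb{I}\{i_t=i\}}_{=:B_m(i)}.
\end{align*}
The structural fact I would use is that at step $t$ the adversary commits $c_t(\cdot)$ \emph{before} $i_t$ is drawn, and $i_t$ is sampled uniformly from the already-fixed set $A_{m-1}$; hence, conditioning on the clean rewards $\{W_q(\cdot)\}_{q\le t}$, the earlier corruptions and the earlier pulls, the event $\{i_t=i\}$ has probability exactly $q_m$ and is independent of $W_t(i)$ and of $c_t(i)$. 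Consequently $A_m(i)$ is $1/n_m$ times a sum of $N$ bounded martingale differences, each with conditional second moment at most $q_m\,\bbE[W_t(i)^2]\le q_m w(i)$ (using $W_t(i)\in[0,1]$), so the aggregate conditional variance is at most $n_m w(i)\le n_m$.

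Applying the variance-aware bound of Theorem~\ref{thm:conc_var_chernoff} to $A_m(i)$ gives, for each sign, $\bbP[\pm A_m(i)\ge a]\le \exp\!\bigl(-\tfrac{(a n_m)^2}{2n_m+\frac23 a n_m}\bigr)\le \exp\!\bigl(-\tfrac{3a^2 n_m}{8}\bigr)\le \exp\!\bigl(-\tfrac{a^2 n_m}{3}\bigr)$ for $a\in(0,1)$. For the corruption term, set $\gamma_t:=\max_{j\in[L]}|c_t(j)|$, so $|c_t(i)|\le\gamma_t$ and $\sum_t\gamma_t=C_m$, whence $|B_m(i)|\le\tfrac1{n_m}\sum_t\gamma_t\mathbb{I}\{i_t=i\}=:\tfrac1{n_m}G_m(i)$, where $G_m(i)$ is a sum of $\gamma_t$-weighted $\mathrm{Bern}(q_m)$ indicators with previsible weights and conditional mean $q_m C_m$. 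Theorem~\ref{thm:conc_martingale}, applied through the exponential supermartingale $\exp\!\bigl(\theta G_m(i)-(e^{\theta}-1)q_m\sum_{s\le t}\gamma_s\bigr)$ at $\theta=1$ (using $e-1\le 2$), yields $\bbP[G_m(i)\ge 2q_m C_m+a n_m]\le e^{-a n_m}\le \exp\!\bigl(-\tfrac{a^2 n_m}{3}\bigr)$. Now observe $q_m/n_m=1/N$: on the complement of $\{A_m(i)\ge a\}\cup\{G_m(i)\ge 2q_m C_m+a n_m\}$ one has $B_m(i)\le G_m(i)/n_m<2C_m/N+a$, hence $\hat w_m(i)-w(i)<2C_m/N+2a$, which is $\calE^{(\rmU)}_{m,i}(a)$; symmetrically, on the complement of $\{A_m(i)\le -a\}\cup\{G_m(i)\ge 2q_m C_m+a n_m\}$ one gets $\calE^{(\rmL)}_{m,i}(a)$ via $B_m(i)\ge -G_m(i)/n_m$. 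A union bound over the two events then gives both claimed inequalities with the factor $2$.

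The step I expect to be the main obstacle is handling the corruption term cleanly, because the correction $2C_m/N$ appearing in the definition of the nice events is itself random, so one cannot simply feed a deterministic variance proxy into a Freedman-type inequality. This is precisely why the argument is arranged so that $q_m C_m$ plays a dual role — it bounds the conditional mean (and variance) of $G_m(i)$ and is also the quantity subtracted off — which lets the exponential-supermartingale argument behind Theorem~\ref{thm:conc_martingale} close with a data-dependent ``variance''. Everything else — the martingale-difference verification for $A_m(i)$, the conditional second-moment computation, and checking that the numeric constants go through for $a\in(0,1)$ — is routine.
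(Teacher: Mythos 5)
Your proposal is correct and follows essentially the same route as the paper's proof: the same decomposition of $\hat w_m(i)-w(i)$ into a stochastic part (controlled via the multiplicative Chernoff bound of Theorem~\ref{thm:conc_var_chernoff}) and a corruption part (controlled via the Freedman-type bound of Theorem~\ref{thm:conc_martingale} with variance proxy $q_m C_m$, exploiting that $c_t(\cdot)$ is fixed before $i_t$ is drawn), combined by the same union bound that produces the factor $2$. The only cosmetic differences are that you bound $|c_t(i)|$ by $\max_j|c_t(j)|$ up front and unfold the exponential-supermartingale argument behind Theorem~\ref{thm:conc_martingale} explicitly, whereas the paper keeps $\sum_t |c_t(i)|\le C_m$ until the end and invokes that theorem as a black box.
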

Note that $n_m$ is the expected number of pulls of each active item $i\in A_{m-1}$ during phase $m$.
Lemma~\ref{lemma:ub_probaSS_conc_apply} implies that we are able to bound the gap between $\hat{w}_m(i)$ and $w(i)$ for each active item $i \in A_{m-1}$ with high probability.

\noindent \textbf{Technique.} In light of the importance of randomization for the regret minimization problem \citep{lykouris2018stochastic,gupta2019better,zimmert2018tsallisinf},
we inject randomness in PSS$(u)$ 
and derive Lemma~\ref{lemma:ub_probaSS_conc_apply}, which 
explains the necessity of Line~\ref{line:probaSS_prob_pull} in Algorithm~\ref{alg:probaSS} in order to mitigate the impact of adversarial corruptions.
While an active item is pulled probabilistically in PSS($u$), it is pulled for a {\em fixed} number of times in SH.
Though the expected number of pulls of one active item is of the same order for PSS$(2)$ and SH, 
the absence of randomization in SH does not allow Theorem~\ref{thm:conc_martingale} to bound the gap between $\hat{w}_m(i)$ and $w(i)$ in the same way as for PSS$(2)$. 
For SH, we can only show that
\begin{align*}
    & 
    \bbP \Big[   \hat{w}_m(i) \!   <  \!  w(i)  +   \frac{C_m |A_{m-1}| }{ N  } +  a 
     \Big] \le \exp \Big[  -\frac{  a ^2 \cdot   n_m  }{3} \Big] ,
\end{align*}
and similarly for the upper tail. 
Disregarding constants, the difference between these bounds and those for PSS$(2)$ in Lemma~\ref{lemma:ub_probaSS_conc_apply} is that the term involving $C_m$ is worse by a factor of $|A_{m-1}|$ for SH.
As a result, the bound on $\Delta_{1, \iout}$ for SH turns out to be $O (CL \log_2 L/T )$, which is worse than that for PSS$(2)$ by a factor of $L$ (see Table~\ref{tab:alg_comp}).
A similar explanation is also applicable to explain the difference between the bounds for UP and PSS$(L)$.

\noindent \textbf{Elimination of the optimal item.}
When the agent fails to output item $1$ (the optimal item), i.e.,$1\ne \iout$, 
item $1$ is inactive by the end of the last phase of the algorithm. Let
$
    m_1 := \min \{ m  \in [M]: 1\notin A_m \}$, 
where $\min \emptyset = \infty$.
Since $1 \ne \iout$, we have $m_1\le M.$ 
 The index $m_1$ labels the phase during which item $1$ turns from active to inactive.
 Next, any item  $i $ that belongs to the active set $A_{m_1}$ satisfies 
$w(i) < w(1)$  and $\hat{w}_{ m_1 }(i) \ge \hat{w}_{m_1} (1)$. 
Conditioning on $\calE_{m_1,1  } ^{ (\rmL) } (a)$ and $\calE_{m_1 ,i }^{ (\rmU) }  (a)$,  we have 
 \begin{align*}
     \resizebox{ .48\textwidth }{!}{$
     ( - \infty, w(i)  +   \frac{ 2C_m  }{ N  } +  2a  ]\cap
     [ w(1)  -  \frac{ 2C_m }{ N  } - 2a   , + \infty)\neq \emptyset.
     $}
 \end{align*} 
%
 To facilitate our analysis, we set $a_i: = \Delta_{1,i}/8 $ for all $2\le i \le L.$ 
We let $j_1$ be the item in $A_{m_1}$ with the smallest mean reward, i.e., 
$
   j_1 := \argmin_{ i \in A_{ m_1}  } w(i)$.
Lemma~\ref{lemma:ub_probaSS_conc_apply} implies that with  probability
$1 - 4 \exp ( - \Delta_{1,j_1}^2 \cdot  n_{m_1}   / { 192 } ) ,$
 we have 
 $   \Delta_{ 1, j_1 } \le { 8 C_{ m_1 } }/{ N } $.
Since $\iout \in A_{m_1}$, we have $w(j_1) \le w(\iout)$. This allows us to bound $\Delta_{1, \iout}$ as follows:
\begin{align*}
    \Delta_{1 ,\iout } \le \Delta_{ 1,j_1 } \le   \frac{ 8 C_{ m_1 } }{ N } \le  \frac{ 8 C }{ N }.
\end{align*}

Note that $m_1$, $j_1$ are random variables that depend on the dynamics of the algorithm.
For any realization of $m_1$, $j_1$,
we formulate the observation above in Lemma~\ref{lemma:ub_probaSS_conf_apply_gap}. 
The complete proof of Lemma~\ref{lemma:ub_probaSS_conf_apply_gap} is postponed to Section~\ref{pf:lemma_ub_probaSS_conf_apply_gap}.

\begin{restatable}{lemma}{lemmaUbprobaSSConfApplyGap}
\label{lemma:ub_probaSS_conf_apply_gap}

 Conditioned on $\calE_{ m,1 }^{ (\rmL) }  (a_i) $ and $\calE_{ m,i  }^{ (\rmU) } (a_i) $, where $a_i = \Delta_{1,i}/8$ for each $2\le i \le L,$ we have 
\begin{align*}
     \{ 1   \in     A_{m-1},     1    \notin   A_m,      i  \in A_m \}
    \subset
    \Big\{ 
        \Delta_{1,i}  \le \frac{  8 C_m}{   N  }  
    \Big\}.
\end{align*} 
\end{restatable}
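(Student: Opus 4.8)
The plan is to chain the two conditioning inequalities through the selection rule in Line~9 of Algorithm~\ref{alg:probaSS}. Fix $i$ with $2\le i\le L$, set $a_i=\Delta_{1,i}/8$, and work throughout on the event $\calE_{m,1}^{(\rmL)}(a_i)\cap\calE_{m,i}^{(\rmU)}(a_i)$. On the set $\{1\in A_{m-1},\,1\notin A_m,\,i\in A_m\}$ we have $i\in A_m\subseteq A_{m-1}$, so both item $1$ and item $i$ lie in $A_{m-1}$ and both have empirical means $\hat w_m(1),\hat w_m(i)$ computed during phase $m$. Note that necessarily $i\ne 1$, since $1\notin A_m$ while $i\in A_m$.

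The first step is to extract an ordering of the empirical means from membership in $A_m$. By construction, $A_m$ consists of the $\lceil L/u^m\rceil$ items of $A_{m-1}$ with the largest values of $\hat w_m(\cdot)$. Since $i$ is retained in $A_m$ and $1$ is dropped, item $i$ must rank at least as high as item $1$ in that ordering, and therefore $\hat w_m(i)\ge\hat w_m(1)$. The second step is to substitute the two ``nice event'' bounds. On $\calE_{m,1}^{(\rmL)}(a_i)$ we have $\hat w_m(1)>w(1)-2C_m/N-2a_i$, and on $\calE_{m,i}^{(\rmU)}(a_i)$ we have $\hat w_m(i)<w(i)+2C_m/N+2a_i$. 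Combining these with $\hat w_m(1)\le\hat w_m(i)$ gives
\begin{align*}
    w(1)-\frac{2C_m}{N}-2a_i \;<\; w(i)+\frac{2C_m}{N}+2a_i,
\end{align*}
that is, $\Delta_{1,i}=w(1)-w(i)<4C_m/N+4a_i$. Plugging in $a_i=\Delta_{1,i}/8$ yields $\Delta_{1,i}<4C_m/N+\Delta_{1,i}/2$, hence $\Delta_{1,i}<8C_m/N$, which establishes the claimed inclusion $\{1\in A_{m-1},\,1\notin A_m,\,i\in A_m\}\subset\{\Delta_{1,i}\le 8C_m/N\}$.

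Since every step is an elementary manipulation, there is essentially no hard obstacle here; the only point requiring a word of care is the first step, namely arguing from the ranking-based definition of $A_m$ that $i\in A_m$ together with $1\notin A_m$ forces $\hat w_m(i)\ge\hat w_m(1)$ no matter how ties among empirical means are broken (any fixed tie-breaking rule in Line~9 is consistent with this, because $i$ is kept while $1$ is discarded). Everything else is the two-line inequality chase above.
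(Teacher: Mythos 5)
Your proof is correct and follows essentially the same route as the paper's: extract $\hat w_m(1)\le\hat w_m(i)$ from the fact that $i$ is retained while $1$ is eliminated, sandwich via the two nice events, and solve the resulting inequality with $a_i=\Delta_{1,i}/8$. The paper phrases the first step as a two-stage set inclusion ($\hat w_m(1)\le\hat w_m(j)$ for all $j\in A_m$, then specialize to $j=i$), but this is the same observation you make about the ranking-based selection rule.
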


 \noindent \textbf{Bounds.}
%
When $\calE_{ m_1,1  }^{ (\rmL) } ( a_{ j_1 } )$ and $\calE_{ m_1, j_1  }^{  ( \rmU ) } ( {a}_{ j_1 } ) $ hold, we can apply Lemma~\ref{lemma:ub_probaSS_conf_apply_gap}  to bound $\Delta_{1,\iout}$ with the total corruption budget $C$,
i.e., for any realization of $m_1$, $j_1$,
\begin{align}
    &
    \bbP \Big[   \Delta_{ 1 , \iout } > \frac{ 8 C }{   N } \Big] 
    \le
    \bbP \Big[ 
    ~ \overline{   \calE_{ m_1,1  }^{ (\rmL) } ( a_{ j_1 } )  \bigcap \calE_{ m_1, j_1  }^{  ( \rmU ) } ( {a}_{ j_1 } )    } ~ 
    \Big].
    \nonumber 
\end{align}
In addition, the definitions of $j_1$ and $A_m$ indicate that
\begin{align*}
    & j_1  \ge | A_{m_1} |   =   \bigg\lceil \frac{L}{ u^{m_1} } \bigg\rceil
     \ge   \frac{     \big\lceil \frac{L}{   u^{m_1-1} } \big\rceil   }{  u  }
     =   \frac{   | A_{m_1 -1 } | }{  u  },
\end{align*}
and     $| A_{ m_1 -1 } |   \le   L$.
 These inequalities, along with Lemma~\ref{lemma:ub_probaSS_conc_apply}, the definitions of $ {a}_{i }$, $N_m$ and $\tilde{H}_2(w,   L,  u),$ imply that for all $1 \le  m   \le    M$ and $ 2   \le   i    \le   L ,$
\begin{align}
    &
     \bbP \Big[  \Big( ~ \overline{   \calE_{ m_1,1  }^{ (\rmL) } ( a_{ j_1 } )  \bigcap \calE_{ m_1, j_1  }^{  ( \rmU ) } ( {a}_{ j_1 } )    }~ \Big) \bigcap \{ m_1=m, j_1 = i  \}   \Big]
    \nonumber \\ &
    \qquad\le 4 \exp \bigg[   - \frac{   N   }{  192  \tilH_2 (w,L,u) }   \bigg] 
    .
    \nonumber
\end{align}
Altogether,
\begin{align*}
    & \bbP\bigg[  \Delta_{ 1 , \iout } > \frac{ 8 C }{  N }    \bigg]
    \\&
    \le \sum_{m=1}^M \sum_{i=2}^L 
        \bbP\bigg[  \Big\{ \Delta_{ 1 , \iout } > \frac{ 8 C }{ N } \Big\} \bigcap \{ m_1=m, j_1 = i    \} \bigg]
    \\&
    \le 4M(L-1) \exp \bigg[   - \frac{     N   }{  192 \tilH_2 (w,L,u) }   \bigg].
\end{align*}
We complete the proof of Theorem~\ref{thm:ub_probaSS} with $N =  \lfloor T/M \rfloor$, $M = \lceil \log_u L \rceil$.
We elaborate on the details in Section~\ref{pf:ub_probaSS_thm_final_step}.

\section{Numerical Experiments and Conclusion}

We compare the performances of PSS($2$), SH and UP under the corruption strategy considered in Theorem~\ref{thm:bai_lb}. In the experiments, we set the mean of the optimal item to be $w^*\in \{ 0.4, 0.5 \}$, the mean of $L-2$ suboptimal items to be $w' = 0.2$. We set $\Delta=(w^*-w')/3$ and the mean of the remaining  item to be $w^* - \Delta$.  
%
The CPS  $C/T=(1+\lambda)2\Delta /(L \log_2 L)$ (cf.\ Theorem~\ref{thm:bai_lb_sh}).
For each algorithm and instance, we ran $100$ independent trials and report the percentage of trials each algorithm succeeds in identifying the optimal item.  Further experiments are provided in Appendix~\ref{appdix_exp}.  {The codes to reproduce all the experiments can be found at \url{https://github.com/zixinzh/2021-ICML.git}}.

\begin{figure}[t]
    \begin{flushright}
    \includegraphics[width = .28\textwidth ]{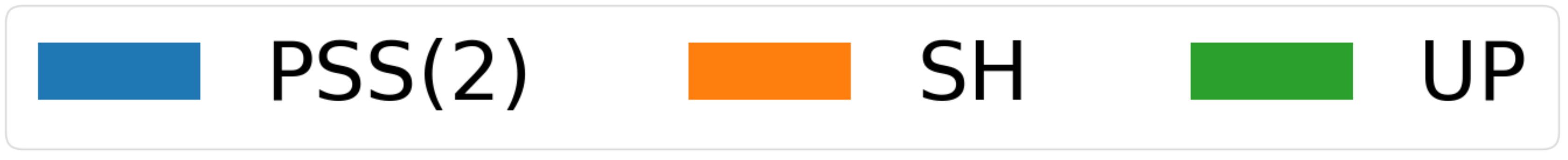} 
    \hphantom{aa}
    %
      \end{flushright}
   \vspace{-1.3em}
  \centering 
  \subfigure[$ \lambda=0.5$]{
\label{pic:exp_compare_w_lambda_a}
\includegraphics[width = .23\textwidth]{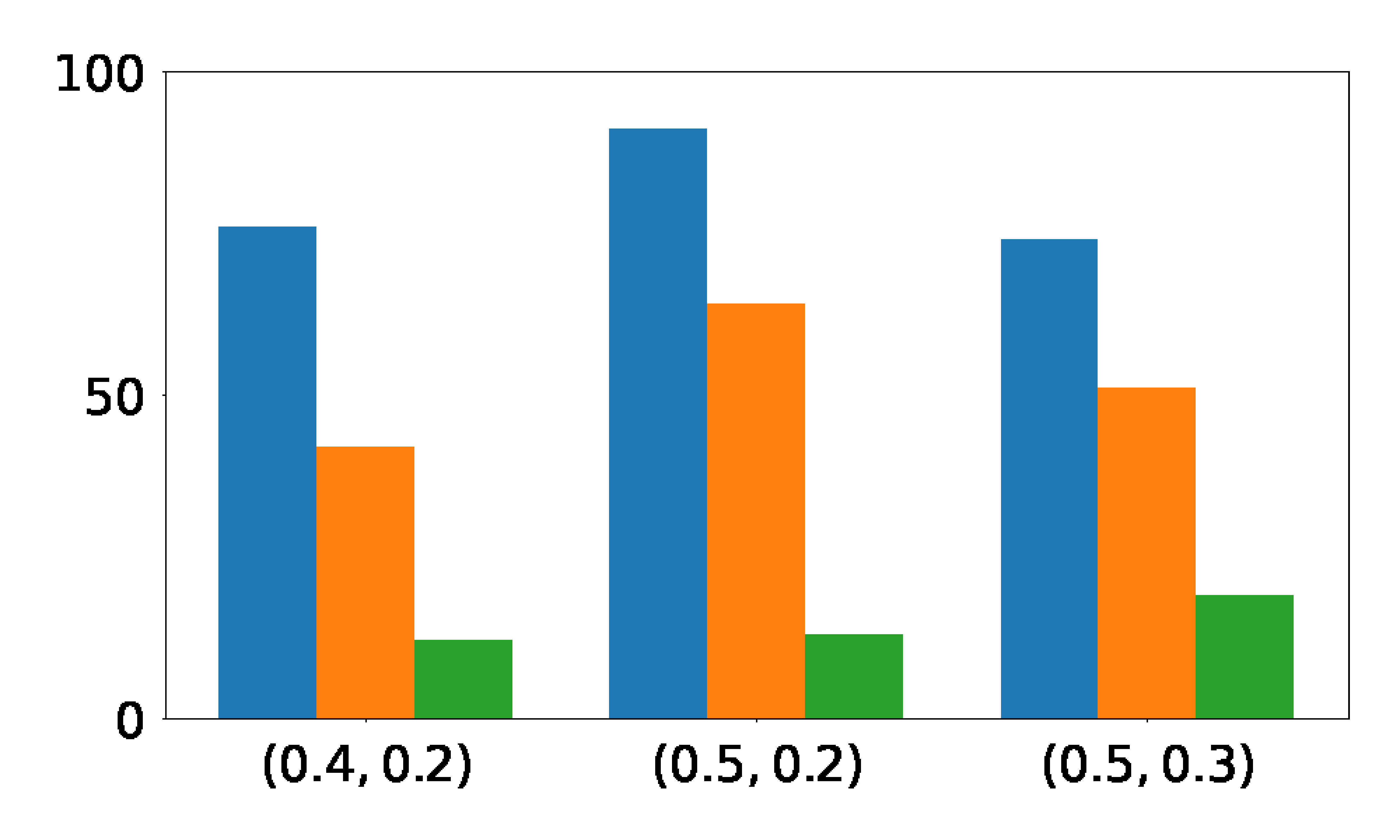} 
}
\hspace{-.5em}
\subfigure[$ \lambda=0.9$]{
\label{pic:exp_compare_w_lambda_b}
\includegraphics[width = .23\textwidth]{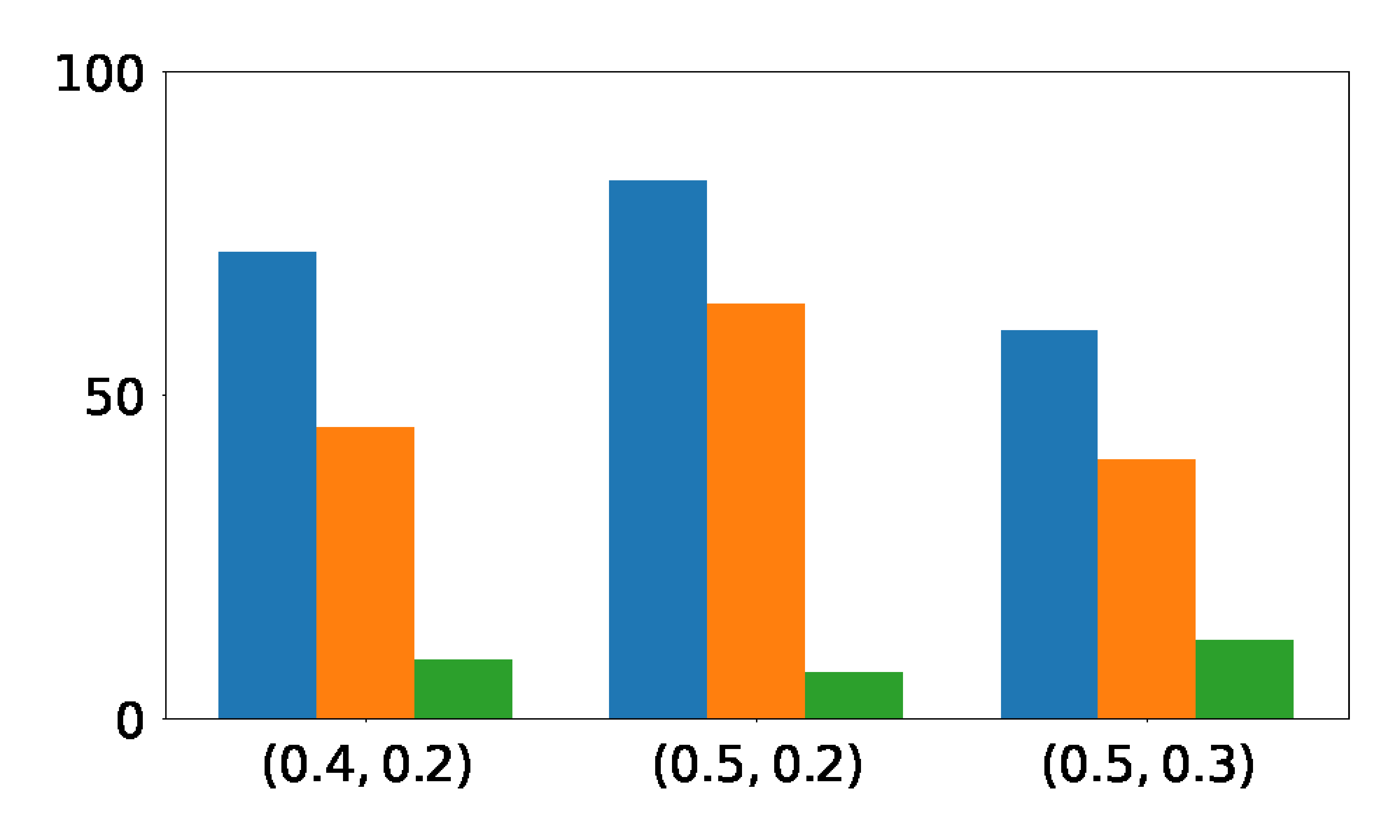} 
}
\\
\vspace{-.3em}
  \subfigure[Effect of $L$ ($T=2\times 10^3$)]{
\label{pic:exp_compare_L}
\includegraphics[width = .23\textwidth]{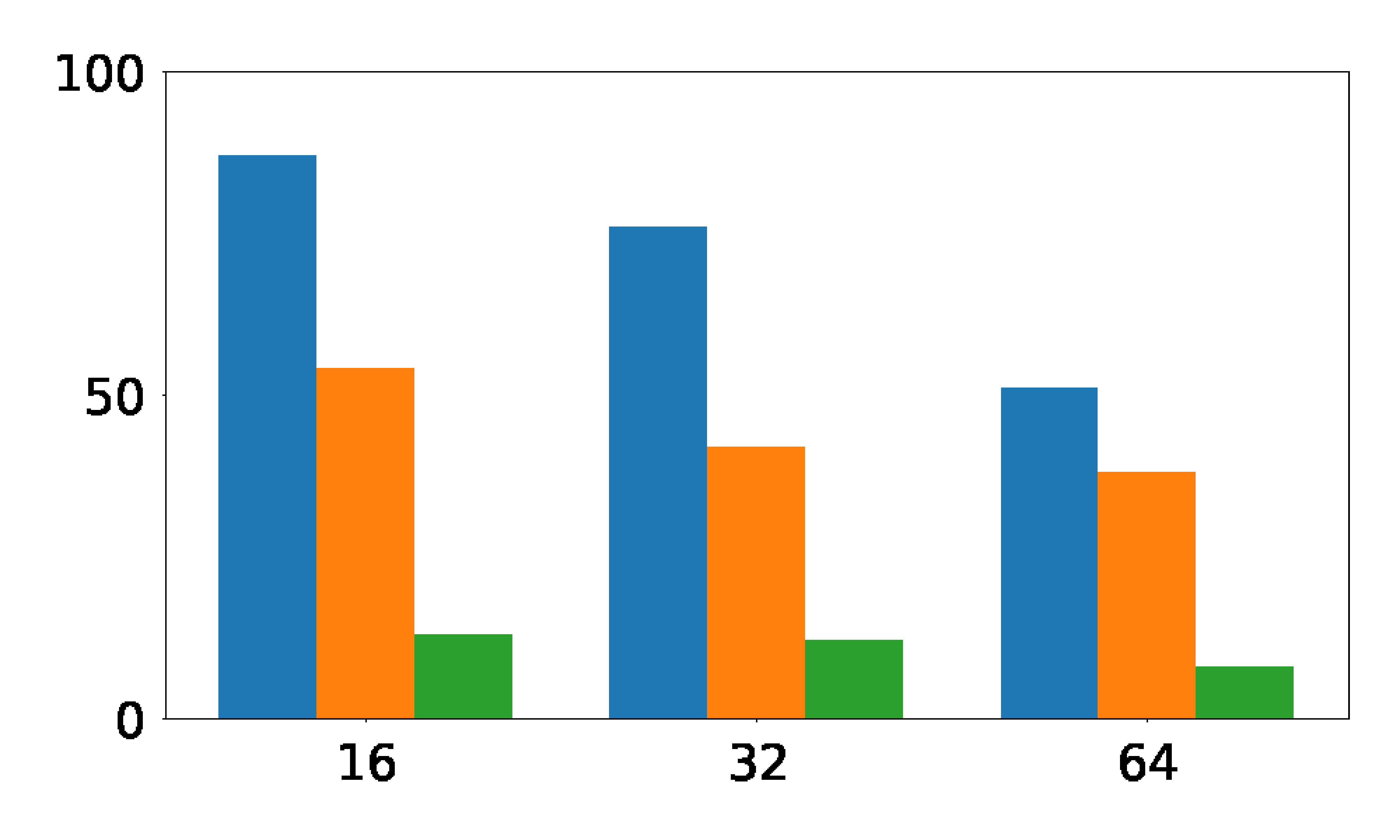}  
}
\hspace{-.5em}
\subfigure[Effect of $T$ ($L=32$)]{
\label{pic:exp_compare_T}
\includegraphics[width = .23\textwidth]{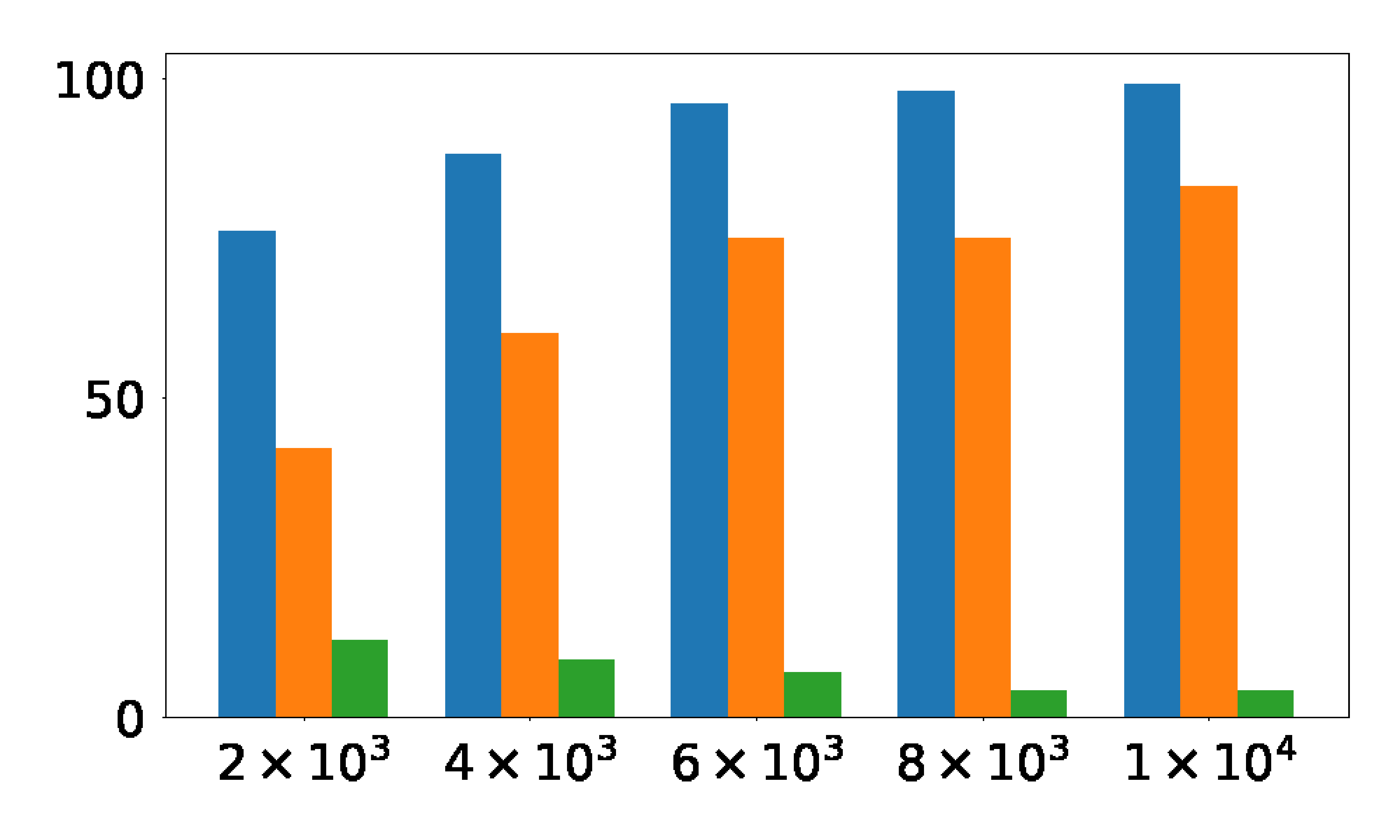} 
}\vspace{-.05in}
  \caption{Percentage of correct BAI of PSS($2$), SH and UP. We fix $T=2\times 10^3, L=32$ and vary $w^*,w'$ in (a) and (b). We fix $\lambda=0.5, w^*=0.4, w'=0.2$  in (c) and (d).   }
  \label{pic:exp_compare}
\end{figure} 
Overall, Figure~\ref{pic:exp_compare} implies that PSS$(2)$ always outperforms SH and UP for a BAI problem that is attacked by the strategy of Theorem \ref{thm:bai_lb}, underscoring the importance of randomization. 
Next, we observe from  Figures~\ref{pic:exp_compare_w_lambda_a}-\ref{pic:exp_compare_w_lambda_b} that a larger $\Delta$ means that the difference between the optimal and suboptimal items is more pronounced, resulting in better performances across all algorithms, even if the CPS increases. 
Since the CPS increases with $\lambda$, each algorithm identifies the best item less often when $\lambda$ increases (see Table~\ref{tab:exp_compare_w_lambda}). 
{Figure \ref{pic:exp_compare_L} shows that 
the agent identifies the best item less often  when $L$ increases.} This implies that even if we let  the CPS decrease with $L$ (per Theorem \ref{thm:bai_lb_sh}), the larger size of the ground set still makes the instance more difficult.
Lastly, 
Figure~\ref{pic:exp_compare_T} 
shows that when the CPS is fixed, a larger $T$ increases the success probabilities of PSS($2$) and SH. 


\textbf{Summary and Future Work.} 
This paper has deepened our understanding the fundamental performance limits
of BAI algorithms in their  ability to cope with  adversarial corruptions that are added on to the random rewards. We designed  {\sc PSS}$(u)$, an algorithm  that can be regarded as a robustification of the {\sc SH} algorithm by \citet{karnin2013almost}.  Due to {\sc PSS}$(u)$'s inherent randomized nature, it can successfully mitigate the adversarial corruptions. Furthermore, we showed by way of constructing several adversarial corruption strategies that the optimality gap of {\sc PSS}$(u)$ is $O(\log L)$-competitive vis-\`a-vis {\em any} corruption-tolerant algorithm. 
These attack strategies are shown to break SH but PSS$(u)$ remains robust to the corruptions.    

Inspired by 
\citet{liu2019data}, \citet{jun2018adversarial}, and \citet{zuo2020near}, it would be fruitful to devise optimal corruption strategies for algorithm-specific and algorithm-independent settings to uncover whether the dependence of the smallest optimality gap $\gapbd$ on $\log L$ is fundamental. We conjecture that the smallest $\gapbd$ does not depend on $L$. More ambitiously, we would like to close the gap between the upper and lower bounds in \eqref{eq:probaSS_C_OK} and \eqref{eq:CPS_bai_lbd}.

\section*{Acknowledgements}
This work is partially funded by a
National University of Singapore Start-Up Grant (R-266-000-136-133),
a Singapore National Research Foundation (NRF) Fellowship (R-263-000-D02-281)
and
a Singapore Ministry of Education AcRF Tier 1 Grant (R-263-000-E80-114).

\newpage
\vphantom{a}
\newpage
\bibliography{corruptionBai_ref}

\begin{thebibliography}{26}
\providecommand{\natexlab}[1]{#1}
\providecommand{\url}[1]{\texttt{#1}}
\expandafter\ifx\csname urlstyle\endcsname\relax
  \providecommand{\doi}[1]{doi: #1}\else
  \providecommand{\doi}{doi: \begingroup \urlstyle{rm}\Url}\fi

\bibitem[Altschuler et~al.(2019)Altschuler, Brunel, and
  Malek]{altschuler2019best}
Altschuler, J., Brunel, V.-E., and Malek, A.
\newblock Best arm identification for contaminated bandits.
\newblock \emph{Journal of Machine Learning Research}, 20\penalty0
  (91):\penalty0 1--39, 2019.

\bibitem[Audibert \& Bubeck(2010)Audibert and Bubeck]{audibert2010best}
Audibert, J.-Y. and Bubeck, S.
\newblock Best arm identification in multi-armed bandits.
\newblock In \emph{Proceedings of the 23th Conference on Learning Theory}, pp.\
   41--53, 2010.

\bibitem[Auer et~al.(2002)Auer, Cesa-Bianchi, and Fischer]{auer2002finite}
Auer, P., Cesa-Bianchi, N., and Fischer, P.
\newblock Finite-time analysis of the multiarmed bandit problem.
\newblock \emph{Machine Learning}, 47\penalty0 (2-3):\penalty0 235--256, 2002.

\bibitem[Beygelzimer et~al.(2011)Beygelzimer, Langford, Li, Reyzin, and
  Schapire]{beygelzimer2011contextual}
Beygelzimer, A., Langford, J., Li, L., Reyzin, L., and Schapire, R.
\newblock Contextual bandit algorithms with supervised learning guarantees.
\newblock In \emph{Proceedings of the 14th International Conference on
  Artificial Intelligence and Statistics}, pp.\  19--26, 2011.

\bibitem[Bogunovic et~al.(2020)Bogunovic, Krause, and Scarlett]{Bogunovic0S20}
Bogunovic, I., Krause, A., and Scarlett, J.
\newblock Corruption-tolerant {G}aussian process bandit optimization.
\newblock In \emph{Proceedings of the 23rd International Conference on
  Artificial Intelligence and Statistics}, pp.\  1071--1081, 2020.

\bibitem[Carpentier \& Locatelli(2016)Carpentier and
  Locatelli]{carpentier2016tight}
Carpentier, A. and Locatelli, A.
\newblock Tight (lower) bounds for the fixed budget best arm identification
  bandit problem.
\newblock In \emph{Proceedings of the 29th Conference on Learning Theory}, pp.\
   590--604, 2016.

\bibitem[Chen et~al.(2014)Chen, Lin, King, Lyu, and Chen]{NIPS2014_5433}
Chen, S., Lin, T., King, I., Lyu, M.~R., and Chen, W.
\newblock Combinatorial pure exploration of multi-armed bandits.
\newblock In \emph{Proceedings of the 27th Advances in Neural Information
  Processing Systems}, pp.\  379--387. 2014.

\bibitem[Chen et~al.(2016)Chen, Wang, Yuan, and Wang]{chen2016combinatorial}
Chen, W., Wang, Y., Yuan, Y., and Wang, Q.
\newblock Combinatorial multi-armed bandit and its extension to
  probabilistically triggered arms.
\newblock \emph{Journal of Machine Learning Research}, 17\penalty0
  (1):\penalty0 1746--1778, 2016.

\bibitem[Dubhashi \& Panconesi(2009)Dubhashi and
  Panconesi]{dubhashi2009concentration}
Dubhashi, D.~P. and Panconesi, A.
\newblock \emph{Concentration of measure for the analysis of randomized
  algorithms}.
\newblock Cambridge University Press, 2009.

\bibitem[Gupta et~al.(2019)Gupta, Koren, and Talwar]{gupta2019better}
Gupta, A., Koren, T., and Talwar, K.
\newblock Better algorithms for stochastic bandits with adversarial
  corruptions.
\newblock In \emph{Proceedings of the 32nd Conference on Learning Theory}, pp.\
   1562--1578, 2019.

\bibitem[Jun et~al.(2016)Jun, Jamieson, Nowak, and Zhu]{jun2016top}
Jun, K.-S., Jamieson, K.~G., Nowak, R.~D., and Zhu, X.
\newblock Top arm identification in multi-armed bandits with batch arm pulls.
\newblock In \emph{Proceedings of the 19th International Conference on
  Artificial Intelligence and Statistics}, pp.\  139--148, 2016.

\bibitem[Jun et~al.(2018)Jun, Li, Ma, and Zhu]{jun2018adversarial}
Jun, K.-S., Li, L., Ma, Y., and Zhu, J.
\newblock Adversarial attacks on stochastic bandits.
\newblock In \emph{Proceedings of the 31st Advances in Neural Information
  Processing Systems}, pp.\  3640--3649, 2018.

\bibitem[Karnin et~al.(2013)Karnin, Koren, and Somekh]{karnin2013almost}
Karnin, Z., Koren, T., and Somekh, O.
\newblock Almost optimal exploration in multi-armed bandits.
\newblock In \emph{Proceedings of the 13th International Conference on Machine
  Learning}, pp.\  1238--1246, 2013.

\bibitem[Krishnamurthy et~al.(2020)Krishnamurthy, Lykouris, and
  Podimata]{krishnamurthy2020binsearch}
Krishnamurthy, A., Lykouris, T., and Podimata, C.
\newblock Corrupted multidimensional binary search: Learning in the presence of
  irrational agents.
\newblock \emph{arXiv preprint arXiv:2002.11650}, 2020.

\bibitem[Lattimore \& Szepesv{\'a}ri(2020)Lattimore and
  Szepesv{\'a}ri]{lattimore2020bandit}
Lattimore, T. and Szepesv{\'a}ri, C.
\newblock \emph{Bandit algorithms}.
\newblock Cambridge University Press, 2020.

\bibitem[Li et~al.(2019)Li, Lou, and Shan]{li2019stochastic}
Li, Y., Lou, E.~Y., and Shan, L.
\newblock Stochastic linear optimization with adversarial corruption.
\newblock \emph{arXiv preprint arXiv:1909.02109}, 2019.

\bibitem[Liu \& Shroff(2019)Liu and Shroff]{liu2019data}
Liu, F. and Shroff, N.
\newblock Data poisoning attacks on stochastic bandits.
\newblock In \emph{Proceedings of the 36th International Conference on Machine
  Learning}, pp.\  4042--4050, 2019.

\bibitem[Liu \& Lai(2020)Liu and Lai]{liu2020action}
Liu, G. and Lai, L.
\newblock Action-manipulation attacks on stochastic bandits.
\newblock In \emph{Proceedings of the 45th International Conference on
  Acoustics, Speech and Signal Processing}, pp.\  3112--3116, 2020.

\bibitem[Lykouris et~al.(2018)Lykouris, Mirrokni, and
  Leme]{lykouris2018stochastic}
Lykouris, T., Mirrokni, V., and Leme, R.~P.
\newblock Stochastic bandits robust to adversarial corruptions.
\newblock In \emph{STOC 2018: Proceedings of the 50th Annual ACM SIGACT
  Symposium on Theory of Computing}, pp.\  114--122, 2018.

\bibitem[Lykouris et~al.(2020)Lykouris, Simchowitz, Slivkins, and
  Sun]{lykouris2020corruption}
Lykouris, T., Simchowitz, M., Slivkins, A., and Sun, W.
\newblock Corruption robust exploration in episodic reinforcement learning.
\newblock \emph{arXiv preprint arXiv:1911.08689}, 2020.

\bibitem[Mitzenmacher \& Upfal(2017)Mitzenmacher and
  Upfal]{mitzenmacher2017probability}
Mitzenmacher, M. and Upfal, E.
\newblock \emph{Probability and computing: Randomization and probabilistic
  techniques in algorithms and data analysis}.
\newblock Cambridge University Press, 2017.

\bibitem[Rejwan \& Mansour(2020)Rejwan and Mansour]{rejwan2020top}
Rejwan, I. and Mansour, Y.
\newblock Top-$ k $ combinatorial bandits with full-bandit feedback.
\newblock In \emph{Proceedings of the 31st International Conference on
  Algorithmic Learning Theory}, pp.\  752--776, 2020.

\bibitem[Shen(2019)]{shen2019universal}
Shen, C.
\newblock Universal best arm identification.
\newblock \emph{IEEE Transactions on Signal Processing}, 67\penalty0
  (17):\penalty0 4464--4478, 2019.

\bibitem[Zhong et~al.(2020)Zhong, Cheung, and Tan]{zhong2020best}
Zhong, Z., Cheung, W.~C., and Tan, V. Y.~F.
\newblock Best arm identification for cascading bandits in the fixed confidence
  setting.
\newblock In \emph{Proceedings of the 37th International Conference on Machine
  Learning}, 2020.

\bibitem[Zimmert \& Seldin(2019)Zimmert and Seldin]{zimmert2018tsallisinf}
Zimmert, J. and Seldin, Y.
\newblock An optimal algorithm for stochastic and adversarial bandits.
\newblock In \emph{Proceedings of the 22nd International Conference on
  Artificial Intelligence and Statistics}, pp.\  467--475, 2019.

\bibitem[Zuo(2020)]{zuo2020near}
Zuo, S.
\newblock Near optimal adversarial attack on {UCB} bandits.
\newblock \emph{arXiv preprint arXiv:2008.09312}, 2020.

\end{thebibliography}
\bibliographystyle{icml2020}


\newpage
\onecolumn

\appendix

\setcounter{table}{0}
\renewcommand{\thetable}{A.\arabic{table}}

\setcounter{figure}{0}
\renewcommand{\thefigure}{A.\arabic{figure}}

\section{Notations}

\begin{spacing}{1.42}
	\begin{longtable}[!h]{ p{.15\textwidth}  p{.85\textwidth} } 
           $[n]$ & set $\{1,\cdots,n\}$ for any $n\in \bbN$ \\
		   $[L]$ & ground set of size $L$ \\
		   $\nu(i)$ & reward distribution of item $i \in [L]$ \\
		   $w(i)$ & mean reward of item $i \in [L]$ \\
		   
		   $W_t(i)$ & random reward of item $i$ at time step $t$\\
		   $c_t(i)$ & corruption added on random reward item $i$ at time step $t$\\
		   $\tilde{W}_t(i)$ & corrupted reward of item $i$ at time step $t$\\
		   $i_t$ & pulled item at time step $t$\\
		   $C$ & total corruption budget \\
		   
		   $\bbP$ & probability law of the process $ \{ \tilde{\mathbf{W} }_t = ( \tilde{W}_t(1) , \ldots,  \tilde{W}_t(L)  ) \} ^T_{t=1}$ \\
		   $\Delta_{1,i}$ & gap between mean rewards of item $1$ and $i$  \\
		   $\epsilon$ & optimality gap of item  \\
		   
		   $\pi$ & non-anticipatory algorithm \\
		   $i_t^\pi$ & pulled item of algorithm $\pi$ at time step $t$\\
		   $\iout^{\pi,T}$ & output of algorithm $\pi$ \\
		   $\phi^{\pi,T}$ & final recommendation rule of algorithm $\pi$\\
		   $\calF_t$ & observation history \\ 
		   
		   $\epsilon_C$ & bound on $\Delta_{ 1, \iout^{\pi, T} }$ \\ 
		   $\delta$ & failure probability \\
		   
		   $u$ & parameter in Algorithm~\ref{alg:probaSS} \\
		   $M$ & amount of phases in Algorithm~\ref{alg:probaSS} \\
		   $N$& length of one phase in Algorithm~\ref{alg:probaSS} \\
		   $A_m$ & active set in Algorithm~\ref{alg:probaSS} \\
		   $q_m$ & probability to pull an active item during phase $m$ in Algorithm~\ref{alg:probaSS} \\
		   $n_m$ & expected number of pulls of an active item during phase $m$ in Algorithm~\ref{alg:probaSS} \\
		   $\hat{w}_m(i)$ & corrupted empirical mean of item $i$ during phase $m$ in Algorithm~\ref{alg:probaSS}  \\

		   $\tilde{H}_2( w,L,u )$ & difficulty of the instance $\{ w(i) \}_{i=1}^L$ for PSS$(u)$ \\
		   $H_2(w)$ & intrinsic difficulty of the instance $\{ w(i) \}_{i=1}^L$  \\
		   $\mathrm{Bern}(a)$ & Bernoulli distribution with parameter $a\in[0,1]$ \\
		   $L_\epsilon$ & number of item $i$ with $\Delta_{1,i} \le \epsilon $  \\
		   $ \Delta$ & equals to $\Delta_{1,2}$ \\
		   $\lambda$ & parameter in the analysis of corruption strategies \\

		   $ C_m $ & amount of corruptions during phase $m $  \\
		   $\mathcal{E}_{m,i}^{ (\mathrm{U}) }(a), \mathcal{E}_{m,i}^{ (\mathrm{L}) }(a)$ & ``nice events'' in the analysis of Algorithm~\ref{alg:probaSS} \\
		   $m_1$ & index of the phase during which item $1$ turns from active to inactive  \\
		   $j_1$ & item in $A_{m_1}$ with the least mean reward \\
		   $a_i$ & equals to $ \Delta_{1,i}  / 8$ for all items $ 2 \le i \le L$ 
		   
 	   
	\end{longtable}

\end{spacing}
	\addtocounter{table}{-1}

\section{Useful theorems}

\label{sec:useful_thm}


\begin{theorem}[Standard multiplicative variant of the Chernoff-Hoeffding bound; 
\citet{dubhashi2009concentration}, Theorem 1.1] \label{thm:conc_var_chernoff}
     Suppose that $X_1, \ldots  , X_T$ are independent $[0, 1]$-valued random variables, and let $X = \sum^T_{t=1} X_t$. Then for any $\epsilon > 0$,
     \begin{align*}
         \Pr [ ~        X - \mathbb{E} X \ge \epsilon \mathbb{E} X  ~ ]    \le   \exp \left(     - \frac{ \epsilon^2 }{3}  \mathbb{E} X  \right), \quad
         \Pr [ ~        X - \mathbb{E} X  \le -\epsilon \mathbb{E} X  ~ ]    \le  \exp \left(     - \frac{ \epsilon^2 }{3}  \mathbb{E} X  \right).
     \end{align*}

\end{theorem}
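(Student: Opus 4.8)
The plan is to prove both tail inequalities by the classical Chernoff (exponential-moment) method, handling the upper and lower tails separately but by parallel arguments. Write $\mu = \mathbb{E}X = \sum_{t=1}^T \mu_t$ with $\mu_t = \mathbb{E}X_t$. For the upper tail I would fix a free parameter $s>0$, apply Markov's inequality to $e^{sX}$, and use independence to factor the moment generating function:
\[
\Pr[X - \mathbb{E}X \ge \epsilon\mathbb{E}X] \;\le\; e^{-s(1+\epsilon)\mu}\,\mathbb{E}[e^{sX}] \;=\; e^{-s(1+\epsilon)\mu}\prod_{t=1}^T \mathbb{E}[e^{sX_t}].
\]
The key per-coordinate estimate is the chord bound: since $x\mapsto e^{sx}$ is convex and $X_t\in[0,1]$, we have $e^{sX_t}\le 1+(e^s-1)X_t$ pointwise, hence $\mathbb{E}[e^{sX_t}]\le 1+(e^s-1)\mu_t\le \exp\big((e^s-1)\mu_t\big)$; multiplying over $t$ gives $\mathbb{E}[e^{sX}]\le\exp\big((e^s-1)\mu\big)$. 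Substituting and minimizing $\mu[(e^s-1)-s(1+\epsilon)]$ over $s>0$ (the optimum is $s=\ln(1+\epsilon)$) yields the Bennett-type bound $\Pr[X-\mathbb{E}X\ge\epsilon\mathbb{E}X]\le\big(e^\epsilon/(1+\epsilon)^{1+\epsilon}\big)^{\mu}$.

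To reach the stated form I would then invoke the scalar inequality $e^\epsilon/(1+\epsilon)^{1+\epsilon}\le e^{-\epsilon^2/3}$, equivalently $(1+\epsilon)\ln(1+\epsilon)\ge\epsilon+\epsilon^2/3$, which I would verify by setting $g(\epsilon)=(1+\epsilon)\ln(1+\epsilon)-\epsilon-\epsilon^2/3$, noting $g(0)=g'(0)=0$, and studying the sign of $g''(\epsilon)=\frac{1}{1+\epsilon}-\frac23$ together with the Taylor expansion. The lower tail is handled the same way: if $\epsilon\ge1$ the event is impossible because $X\ge0$, and for $\epsilon\in(0,1)$ I would run the identical computation with $s<0$ (equivalently bound $\mathbb{E}[e^{-sX}]$, $s>0$, using $e^{-sx}\le 1+(e^{-s}-1)x$ on $[0,1]$) to obtain $\Pr[X-\mathbb{E}X\le-\epsilon\mathbb{E}X]\le\big(e^{-\epsilon}/(1-\epsilon)^{1-\epsilon}\big)^{\mu}$ after optimizing ($s=-\ln(1-\epsilon)$), and close with $(1-\epsilon)\ln(1-\epsilon)\ge-\epsilon+\epsilon^2/2$; this last inequality follows since $h(\epsilon)=(1-\epsilon)\ln(1-\epsilon)+\epsilon-\epsilon^2/2$ has $h(0)=h'(0)=0$ and $h''(\epsilon)=\tfrac{\epsilon}{1-\epsilon}\ge0$, and it in fact gives the stronger exponent $e^{-\epsilon^2\mu/2}\le e^{-\epsilon^2\mu/3}$.

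I expect the \emph{only} non-mechanical part to be the two one-variable inequalities that convert the sharp Bennett bounds into the clean sub-Gaussian form with constant $3$ — everything else (Markov, independence, the chord estimate, and $1+y\le e^y$) is routine bookkeeping. One caveat worth flagging in passing is that the $\epsilon^2/3$ upper-tail bound is genuinely valid only for $\epsilon$ bounded by a constant of order one (for very large $\epsilon$ one should instead use the cruder form $\big(e/(1+\epsilon)\big)^{(1+\epsilon)\mu}$); this restriction is harmless for the present paper, since every invocation of the theorem has the relevant relative deviation bounded by a fixed constant.
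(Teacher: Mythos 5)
The paper does not actually prove this statement: it is imported as a black-box citation of Theorem~1.1 of Dubhashi and Panconesi, so there is no internal proof to compare against, and your write-up is the standard self-contained argument one would supply. Your chain of steps is correct: Markov applied to $e^{sX}$, independence, the chord bound $e^{sx}\le 1+(e^s-1)x$ on $[0,1]$ followed by $1+y\le e^y$, optimization at $s=\ln(1+\epsilon)$ (resp.\ $s=-\ln(1-\epsilon)$), and the two scalar reductions. Both one-variable inequalities check out as you sketch them: $g''(\epsilon)=\tfrac{1}{1+\epsilon}-\tfrac23$ changes sign at $\epsilon=\tfrac12$ but $g'(\epsilon)=\ln(1+\epsilon)-\tfrac{2\epsilon}{3}$ stays positive up to $\epsilon\approx1.1$, so $g\ge0$ on $(0,1]$; and $h''\ge 0$ gives the lower tail with the stronger constant $2$.

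Two refinements. First, at $\epsilon=1$ exactly the lower-tail event is $\{X\le 0\}=\{X=0\}$, which is not impossible; it has probability $\prod_t\Pr[X_t=0]\le\prod_t(1-\mathbb{E}X_t)\le e^{-\mathbb{E}X}\le e^{-\mathbb{E}X/3}$, so the bound still holds but needs this one extra line rather than a dismissal. Second, your "caveat in passing" about large $\epsilon$ is in fact a substantive point: the statement as transcribed asserts the upper tail for \emph{any} $\epsilon>0$, and that is genuinely false (e.g.\ in the Poisson limit with $\mathbb{E}X=1$ and $\epsilon=100$ the upper-tail probability is roughly $e^{-370}$, vastly exceeding $e^{-\epsilon^2/3}=e^{-3333}$), so the clean $\epsilon^2/3$ form requires $\epsilon=O(1)$ or one must retain the Bennett form $\big(e^\epsilon/(1+\epsilon)^{1+\epsilon}\big)^{\mathbb{E}X}$. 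Moreover, contrary to your parenthetical, the paper's own invocation in the proof of Lemma~\ref{lemma:ub_probaSS_conc_apply} takes relative deviation $\epsilon=a_{1,m,i}/w(i)$, which is \emph{not} bounded by a fixed constant when $w(i)$ is small; the conclusion $\exp(-a_{1,m,i}^2 n_m/3)$ there is still correct, but it should be derived from the unsimplified Bennett/Bernstein bound $\exp\big(-t^2/(2\mu+2t/3)\big)$ with $t=a_{1,m,i}n_m$ and $\mu=n_m w(i)\le n_m$, not from the $\epsilon^2/3$ simplification quoted in the theorem.
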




\begin{theorem}[\citet{beygelzimer2011contextual}, Theorem 1; \citet{gupta2019better}, Theorem 10]   \label{thm:conc_martingale}
    Suppose that $X_1, \ldots , X_T$ is a martingale difference sequence with respect to a filtration $ \{ \mathcal{F}_t \}^T_{t=1}$, and let $X = \sum^T_{t=1} X_t$. Assume that $|X_t| \le b $ for all $t$, and define $V =
\sum^T_{t=1}  \mathbb{E} [ X^2_t  |  \mathcal{F}_{t–1}]$. Then for any $\delta > 0$,
\begin{align*}
    \Pr \left[      X \le \frac{V}{b} + b \ln \frac{1}{\delta}   \right] \ge 1 - \delta.
\end{align*}
\end{theorem}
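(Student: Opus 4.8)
The plan is to prove this via the exponential-supermartingale (Chernoff) method applied at the single scale $\lambda = 1/b$. The workhorse is the elementary inequality
\begin{align*}
    e^x \le 1 + x + (e-2)x^2, \qquad \forall\, x \le 1,
\end{align*}
which I would justify by noting that $x \mapsto (e^x - 1 - x)/x^2$ is nondecreasing on $\mathbb{R}$ (it is a power series with nonnegative coefficients in its Taylor form, and a short derivative computation confirms monotonicity for $x<0$ as well) and equals $e-2$ at $x = 1$; hence for every $x \le 1$ its value is at most $e-2$.

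First I would set $\lambda = 1/b$. Since $|X_t| \le b$ we have $\lambda X_t \le 1$, so applying the inequality to $\lambda X_t$ and taking the conditional expectation given $\mathcal{F}_{t-1}$ gives
\begin{align*}
    \mathbb{E}[\, e^{\lambda X_t} \mid \mathcal{F}_{t-1} \,]
    \le 1 + \lambda\, \mathbb{E}[X_t \mid \mathcal{F}_{t-1}] + (e-2)\lambda^2\, \mathbb{E}[X_t^2 \mid \mathcal{F}_{t-1}]
    = 1 + (e-2)\lambda^2\, \mathbb{E}[X_t^2 \mid \mathcal{F}_{t-1}],
\end{align*}
where the linear term vanishes because $(X_t)$ is a martingale difference sequence. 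Using $1 + y \le e^y$ then yields $\mathbb{E}[\, e^{\lambda X_t} \mid \mathcal{F}_{t-1} \,] \le \exp\!\big( (e-2)\lambda^2\, \mathbb{E}[X_t^2 \mid \mathcal{F}_{t-1}] \big)$.

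Next I would define
\begin{align*}
    M_t = \exp\!\bigg( \lambda \sum_{s=1}^{t} X_s - (e-2)\lambda^2 \sum_{s=1}^{t} \mathbb{E}[X_s^2 \mid \mathcal{F}_{s-1}] \bigg), \qquad M_0 = 1,
\end{align*}
and verify that $(M_t)$ is a supermartingale: the bound above gives $\mathbb{E}[M_t \mid \mathcal{F}_{t-1}] \le M_{t-1}$, whence $\mathbb{E}[M_T] \le M_0 = 1$. Markov's inequality then gives $\Pr[\, M_T \ge 1/\delta \,] \le \delta\, \mathbb{E}[M_T] \le \delta$. On the complementary event $\{ M_T < 1/\delta \}$, taking logarithms and rearranging with $\lambda = 1/b$ yields
\begin{align*}
    X < (e-2)\frac{V}{b} + b \ln\frac{1}{\delta} \le \frac{V}{b} + b \ln\frac{1}{\delta},
\end{align*}
where the final step uses $e - 2 < 1$ together with $V \ge 0$. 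Hence $\{ M_T < 1/\delta \} \subseteq \{ X \le V/b + b\ln(1/\delta) \}$, so the stated bound holds with probability at least $1 - \delta$.

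There is no serious obstacle here; the only point to get right is the constant. The choice $\lambda = 1/b$ is precisely what produces the clean form $V/b + b\ln(1/\delta)$, and the fact that the Bernstein-type constant $e - 2 \approx 0.718$ is strictly below $1$ is what lets us replace it by the tidier coefficient $1$ and thereby absorb the predictable quadratic-variation term. One could instead optimize over $\lambda \in (0, 1/b]$, but that refinement is unnecessary: the fixed scale already delivers the inequality exactly as stated.
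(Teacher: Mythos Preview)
The paper does not prove this statement; it is quoted without proof as a known concentration inequality from \citet{beygelzimer2011contextual} and \citet{gupta2019better}. Your argument is correct and is precisely the standard exponential-supermartingale proof used in those references: bound $e^{\lambda X_t}$ via $e^x \le 1 + x + (e-2)x^2$ for $x\le 1$, form the nonnegative supermartingale with the predictable quadratic variation subtracted in the exponent, apply Markov at level $1/\delta$, and set $\lambda = 1/b$ so the resulting constant $(e-2)$ can be relaxed to $1$.
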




\begin{theorem} [Multiplicative Chernoff Bound~\citep{mitzenmacher2017probability,chen2016combinatorial}]
\label{thm:conc_bernoulli}
	Let $X_{1}, \cdots, X_{n}$ be Bernoulli random variables taking values in $\{0,1\}$ such that $\mathbb{E}\left[X_{t} | X_{1}, \cdots, X_{t-1}\right] \geq \mu$ for all $t \leq n,$ and $Y=X_{1}+\ldots+X_{n}$. Then, for all $\delta \in(0,1)$
	$$\operatorname{Pr}[Y \leq(1-\delta) n \mu] \leq e^{-\frac{\delta^{2} n \mu}{2}}.$$
\end{theorem}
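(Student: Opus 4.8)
The plan is to apply the exponential (Chernoff) method to the lower tail, combined with a martingale peeling argument to handle the fact that the $X_t$ are \emph{not} assumed independent but only satisfy the one-sided conditional bound $\mathbb{E}[X_t \mid X_1,\dots,X_{t-1}]\ge\mu$. Let $\mathcal{F}_t=\sigma(X_1,\dots,X_t)$ with $\mathcal{F}_0$ trivial. For any $s>0$, Markov's inequality applied to the nonnegative variable $e^{-sY}$ gives
\begin{align*}
\Pr[Y\le(1-\delta)n\mu]=\Pr\big[e^{-sY}\ge e^{-s(1-\delta)n\mu}\big]\le e^{s(1-\delta)n\mu}\,\mathbb{E}\big[e^{-sY}\big],
\end{align*}
so everything reduces to controlling the moment generating function $\mathbb{E}[e^{-sY}]$.

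The key step is to bound $\mathbb{E}[e^{-sY}]$ one factor at a time. Since each $X_t$ is $\{0,1\}$-valued, we have the \emph{exact} identity $e^{-sX_t}=1-(1-e^{-s})X_t$. Taking conditional expectations and using that $1-e^{-s}>0$ for $s>0$ together with the hypothesis $\mathbb{E}[X_t\mid\mathcal{F}_{t-1}]\ge\mu$, we obtain
\begin{align*}
\mathbb{E}\big[e^{-sX_t}\mid\mathcal{F}_{t-1}\big]=1-(1-e^{-s})\,\mathbb{E}[X_t\mid\mathcal{F}_{t-1}]\le 1-(1-e^{-s})\mu\le\exp\big[-(1-e^{-s})\mu\big],
\end{align*}
where the last inequality is $1-x\le e^{-x}$. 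Now I would peel off the factors from the top using the tower property: writing $\mathbb{E}[e^{-sY}]=\mathbb{E}\big[e^{-s\sum_{t\le n-1}X_t}\,\mathbb{E}[e^{-sX_n}\mid\mathcal{F}_{n-1}]\big]$ and inserting the bound above (legitimate because $e^{-s\sum_{t\le n-1}X_t}\ge 0$) strips off the $n$-th factor at the cost of $e^{-(1-e^{-s})\mu}$. Iterating down to $t=1$ yields $\mathbb{E}[e^{-sY}]\le\exp[-n(1-e^{-s})\mu]$, and hence
\begin{align*}
\Pr[Y\le(1-\delta)n\mu]\le\exp\Big[n\mu\big(s(1-\delta)-(1-e^{-s})\big)\Big].
\end{align*}

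It remains to optimize in $s$ and simplify. Differentiating the exponent in $s$ shows the minimizer is $s^\ast=\ln\frac{1}{1-\delta}$, which is strictly positive precisely because $\delta\in(0,1)$; at this point $1-e^{-s^\ast}=\delta$, giving the bound $\exp\big[-n\mu\big((1-\delta)\ln(1-\delta)+\delta\big)\big]$ after rearrangement. To reach the stated clean form it suffices to invoke the elementary inequality $(1-\delta)\ln(1-\delta)+\delta\ge\delta^2/2$ on $(0,1)$, which I would verify by setting $g(\delta)=(1-\delta)\ln(1-\delta)+\delta-\delta^2/2$ and checking $g(0)=g'(0)=0$ and $g''(\delta)=\delta/(1-\delta)\ge 0$, so $g\ge 0$. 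This delivers $\Pr[Y\le(1-\delta)n\mu]\le e^{-\delta^2 n\mu/2}$. The main obstacle is purely conceptual rather than computational: one must resist treating the $X_t$ as independent and instead carry out the tower-property peeling carefully, paying attention to the sign of $1-e^{-s}$ so that the one-sided conditional hypothesis is used in the correct direction; the optimization and the final scalar inequality are routine calculus.
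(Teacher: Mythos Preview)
Your proof is correct and is the standard Chernoff--martingale argument for this bound. Note, however, that the paper does not actually prove this statement: it appears in the ``Useful theorems'' section as a cited result from \citet{mitzenmacher2017probability} and \citet{chen2016combinatorial}, with no proof given. So there is no paper-side argument to compare against; your write-up supplies exactly the kind of self-contained proof one would expect for this classical inequality, handling the non-independence via the tower property and the one-sided conditional hypothesis in the right direction.
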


\section{Proofs of main results}

In this section, we 
provide proofs of Lemmas~\ref{lemma:ub_probaSS_single_output}  -- \ref{lemma:ub_probaSS_conf_apply_gap},
complete the proof of Theorem~\ref{thm:ub_probaSS},
and provide the proofs of Theorem~\ref{thm:bai_lb} -- \ref{thm:lb_ber_shift_to_zero_one}.

\subsection{Proof of Lemma~\ref{lemma:ub_probaSS_single_output} }
\label{pf:lemma_ub_probaSS_single_output}

\lemmaUbprobaSSSingleOutput*
\begin{proof}

    (i) $NM = \lfloor T/M \rfloor \cdot M \le T/m \cdot M = T.$
    
    (ii) Since $M = \lceil \log_u L \rceil$, $ | A_M  |  = \big\lceil  { L }/{ u^M } \big\rceil$, we have
    \begin{align*}
        & L \le u^M
            ~\Rightarrow~
            | A_M  |   \le  \bigg\lceil \frac{ u^M }{ u^M } \bigg\rceil =1,  
    \quad \quad
           \frac{ L }{ u^{M-1} }  >0
        ~ \Rightarrow~  | A_M | \ge 1.
    \end{align*}
\end{proof}

\subsection{Proof of Lemma~\ref{lemma:ub_probaSS_conc_apply}}

\label{pf:lemma_ub_probaSS_conc_apply}

\lemmaUbprobaSSConcApply*

\begin{proof}

(i)
Let $Y_t(i)$ be an indicator for item $i$ being pulled at time step $t$ and $\tilde{n}_m(i)$ be the number of pulls of item $i$ during phase $m$. 
Recall that $W_t(i)$ is the stochastic reward of item $i$ at time step $t$ and $c_t(i) = \tildeW_t(i) - W_t(i)$ is the corruption added to this item by the adversary at this time step.
 Note that $c_t(i) $ may depend on all the stochastic rewards up to (and including) time step $t$, and also on all previous choices of the algorithm (though not the choice at step $t$). 
We denote $E_m := [ T_{m-1}+1, \ldots, T_m ]$ as the $N $ many time steps in phase $m$. 
Then
\begin{align*}
    \hatw_m(i) = \frac{1}{ n_m  }  \sum_{ t \in E_m}  Y_t(i)  [ W_t(i) + c_t(i) ].
\end{align*}
For ease of analysis, let us break the sum above into two, and define
\begin{align*}
    A_m(i) = \sum_{ t \in E_m}   Y_t(i) W_t(i)   \quad\mbox{and}\quad
    B_m(i) = \sum_{ t \in E_m}   Y_t(i) c_t(i) .
\end{align*}

(ii)
 Let us first bound the deviation of $A_m(i)$. Observe that $W_t(i)$ is an independent draw from a $[0, 1]$-valued r.v. with mean $w(i)$ and $Y_t(i)$ is an independent random variable drawn from $\{0,1\}$ with mean $q_m $.
Moreover, we have that $\bbE[A_m(i) ] = N  \cdot [q_m  w(i) ] = n_m  w(i) \le n_m $. Hence, for any $a_{1,m,i} >0$, a Chernoff-Hoeffding bound
(a multiplicative version thereof) as in Theorem~\ref{thm:conc_var_chernoff} implies that
    \begin{align*}
        \bbP \bigg[   \frac{ A_m(i) }{ n_m   }  - w(i)    \ge  a_{1,m,i}  ~\bigg] \le 
        \exp \bigg[  -\frac{ a_{1,m,i}^2 \cdot   n_m  }{3} \bigg] ,  \quad
        \bbP \bigg[   \frac{ A_m(i) }{ n_m  }  - w(i)    \le - a_{1,m,i}   ~\bigg] \le 
        \exp \bigg[  -\frac{ a_{1,m,i} ^2 \cdot   n_m  }{3} \bigg] .
    \end{align*}
%

(iii)
Next, we turn to bound the deviation of $B_m(i)$. Consider the sequence of r.v.s $X_1, \ldots , X_T$ defined by $X_t = [ Y_t(i)  – q_m  ] \cdot c_t(i)  $ for all $t$. Then $\{ X_t \}^T_{t=1}$ is a martingale difference sequence with respect to the filtration $ \{ \tilde{ \calF}_t \}^T_{t=1}$, where $$ \tilde{ \calF}_t = \sigma(  \{Y_s(i) \}_{ s\le t, i\in [L] }   ,  \{ W_s(i) \}_{ s\le t+1, i\in [L] }  , \{ c_s(i) \}_{ s\le t+1, i\in [L] } ).$$
 {According to the problem setup, the adversary obtains more information than the agent, which results in the difference between $\calF_t$ defined in Section~\ref{sec:prob_setup} and $\tilde{\calF}_t$ here.}
Since the corruption $c_t(i)$ becomes a deterministic value when conditioned on $\tilde{ \calF}_{t-1}$ (as we assume a deterministic adversary),
and since $\bbE[Y_t(i) |  \tilde{ \calF}_{t-1} ] = q_m $, we have
\begin{align*}
    \bbE [ X_t |  \tilde{ \calF}_{t-1} ] = \bbE[ Y_t(i) – q_m  |  \tilde{ \calF}_{t-1}  ] \cdot c_t(i)= 0.
\end{align*}
Further, we have $ |X_t |,  |c_t(i)|  \le  1$ for all $t$, and we can bound the predictable quadratic variation of this martingale as
\begin{align*}
      V  &= \sum_{ t \in E_m} \bbE[ X_t^2 | \tilde{ \calF}_{t-1} ]  
    = \sum_{t \in E_m } \bbE\big[~ [ Y_t(i)  – q_m  ]^2 |  \tilde{ \calF }_{t-1} \big]    \cdot  c_t(i)^2 
    \le \sum_{t \in E_m }  | c_t(i) |  \cdot \bbE\big[~ [ Y_t(i)  – q_m  ]^2 |  \tilde{ \calF}_{t-1} \big]  
    \\ &
    = \sum_{ t \in E_m} | c_t(i) |\cdot \var[ Y_t(i) ] 
    = \sum_{ t \in E_m} | c_t(i) |\cdot q_m \cdot(1 - q_m)
     \le  q_m  \cdot \sum_{ t \in E_m}  | c_t(i) |.
\end{align*} 

Applying a Freedman-type concentration inequality for martingales (Theorem~\ref{thm:conc_martingale}), we obtain that except with probability $ \delta_{2,m,i} $~(setting $b=1$ in Theorem~\ref{thm:conc_martingale}), 
\begin{align*}
    \frac{ B_m(i) }{ n_m } 
    \le  \frac{ q_m  }{ n_m  } \cdot \sum_{ t\in Em} | c_t(i) | + \frac{ V + \log( 1/ \delta_{2,m,i} ) }{ n_m } 
    \le \frac{ 2 q_m  }{ n_m } \cdot \sum_{ t\in Em} | c_t(i) | + \frac{  \log( 1/ \delta_{2,m,i} ) }{ n_m }.
\end{align*}
Since $q_m  = n_m /  N $, 
We have
    \begin{align*}
        \bbP \Bigg[ ~ \frac{ B_m(i) }{ n_m   }   \ge  \frac{  2 \sum_{ t\in E_m} | C_t(i) |    }{ N  }  + \frac{  \log( 1 / \delta_{2,m,i} ) }{ n_m }  ~\Bigg] \le 
        \delta_{2,m,i} .
    \end{align*}

Similar arguments show that $ - B_m(i) /n_m  $ satisfies this bound  with probability $\delta_{2,m,i }$. 

(iv)
Let
\begin{align*}
    a_{2,m,i}  = \frac{  \log( 1 / \delta_{2,m,i}  ) }{ n_m  } .
\end{align*}
Altogether, we have
    \begin{align*}
        & \bbP \Bigg[~  \hat{w}_m(i) \ge  w(i)  +   \frac{2 \sum_{ t\in E_m} | C_t(i) | }{ N  } + a_{1,m,i}     +  a_{2,m,i}    ~\bigg]  \le  \exp \bigg[  -\frac{ a_{1,m,i} ^2 \cdot   n_m  }{3} \bigg]  + \exp [  - a_{2,m,i}  \cdot n_m   ]  , \\*
        & \bbP \Bigg[~  \hat{w}_m(i) \le  w(i)  -   \frac{2 \sum_{t\in E_m} | C_t(i) |  }{ N  } - a_{1,m,i}    -  a_{2,m,i}   ~\bigg]  \le   \exp \bigg[  -\frac{ a_{1,m,i} ^2 \cdot  n_m  }{3} \bigg]  + \exp [  - a_{2,m,i}  \cdot  n_m ] .
    \end{align*}
    Note that $ \sum_{ t\in E_m} | c_t(i) | \le  C_m$.
    For $a_{1,m,i}    =  a_{2,m,i}  =  a   \in ( 0,1)$, we have
    \begin{align*}
        & \bbP \Bigg[~  \hat{w}_m(i) \ge  w(i)  +   \frac{2  C_m }{ N  } + 2 a   ~\bigg]  \le  2\exp \bigg[  -\frac{  a ^2 \cdot   n_m  }{3} \bigg]    , \\
        & \bbP \Bigg[~  \hat{w}_m(i) \le  w(i)  -   \frac{2  C_m  }{ N  } - 2  a      ~\bigg]  \le   2 \exp \bigg[  -\frac{  a ^2 \cdot  n_m  }{3} \bigg]   .
    \end{align*}                

\end{proof}

\subsection{Proof of Lemma~\ref{lemma:ub_probaSS_conf_apply_gap} }

\label{pf:lemma_ub_probaSS_conf_apply_gap}

\lemmaUbprobaSSConfApplyGap*

\begin{proof}

First of all,
\begin{align*}
    \{ 1 \in A_{m-1}, 1 \notin A_m, i \in A_m  \}   
    &     \subset
     \{  1,i  \in A_{ m - 1 } , i \in A_m, \hatw_{ m } ( 1  )  \le \hatw_{ m  } ( j ) ~~\forall j \in A_{ m  } \}  \\
    & \subset
    \{  1, i \in A_{ m  - 1 } ,  \hatw_{ m  } ( 1  )  \le \hatw_{ m } ( i )  \} .
\end{align*}
Assume $\calE_{ m, 1   }^{ (\rmL) } (a_i) $ and $\calE_{ m,i   }^{ (\rmU) }  (a_i)$ hold. 
We have
$$ w(1) - \frac{2 C_m }{ N  } - 2 a_i
    < \hat{w}_m(1) \le \hat{w}_m(i)
    < w(i) + \frac{2 C_m }{ N  } + 2 a_i  .$$
In other words,
$$ w(1) - w(i) < \frac{ 4C_m  }{ N  } + 4 a_i .   $$
Note that $ a_i = \Delta_{1,i} / 8$,
we have
\begin{align*}
    & \Delta_{1,i}  < \frac{ 4 C_m }{ N  } + \frac{ \Delta_{1,i} }{2 } 
    ~\Rightarrow~
    \Delta_{1,i}  < \frac{ 8 C_m }{ N } 
\end{align*}
as desired.

\end{proof}

\subsection{Final steps to prove Theorem~\ref{thm:ub_probaSS} }
\label{pf:ub_probaSS_thm_final_step}

 
(i) Assume 
$  \calE_{ m_1,1  }^{ (\rmL) } ( a_{ j_1 } )$ and $  \calE_{ m_1, j_1  }^{  ( \rmU ) } ( {a}_{ j_1 } ) $    
hold. 

\noindent \underline{\bf Case 1}: $1 \neq \iout$.
Lemma~\ref{lemma:ub_probaSS_conf_apply_gap} implies that for any realization of $j_1$, $m_1$,
\begin{align*}
    \Delta_{ 1 , j_{ 1} } \le \frac{ 8 C_{ m_1 } }{    N }  . 
\end{align*}
Since $\iout \in A_m$ for all $1\le m\le M$,  we have $\iout \in A_{ m_1 }$. In addition, since
\begin{align*}
    j_1 := \argmin_{ i \in A_{ m_1}  } w(i),
\end{align*}
we have $\Delta_{ 1, \iout } \le \Delta_{ 1,j_1 }$. Therefore,
\begin{align*}
    \Delta_{ 1, \iout } \le \Delta_{ 1,j_1 } \le \frac{ 8 C_{ m_1 } }{    N }   \le \frac{ 8 C  }{  N }  .
\end{align*} 
 
\noindent \underline{\bf Case 2}: $1 = \iout$. It is trivial to see  $\Delta_{ 1, \iout }  
    \le 8C / N 
    $.

Hence, when $  \calE_{ m_1,1  }^{ (\rmL) } ( a_{ j_1 } )$ and $  \calE_{ m_1, j_1  }^{  ( \rmU ) } ( {a}_{ j_1 } ) $     hold, 
we always have
$\Delta_{ 1, \iout }  
    \le { 8C  } / {N }$.

(ii) Altogether, for any realization of $m_1$, $j_1$,
\begin{align}
    &
    \bbP \Big[   \Big\{ \Delta_{ 1 , \iout } > \frac{ 8C }{   N } \Big\} \bigcap \{ m_1=m, j_1 = i \}    \Big]
    \le
    \bbP \Big[  \Big( ~ \overline{    \calE_{ m_1,1  }^{ (\rmL) } ( a_{ j_1 }  ) \bigcap \calE_{ m_1, j_1  }^{  ( \rmU ) } ( {a}_{ j_1 } )      }~ \Big) \bigcap \{ m_1=m, j_1 = i  \}   \Big].
    \label{eq:ub_probaSS_pf_final_gap_to_event}
\end{align}
In addition, we have
\begin{align}
    &
     \bbP \Big[  \Big( ~ \overline{    \calE_{ m_1,1  }^{ (\rmL) } ( a_{ j_1 }  ) \bigcap \calE_{ m_1, j_1  }^{  ( \rmU ) } ( {a}_{ j_1 } )      }~ \Big) \bigcap \{ m_1=m, j_1 = i  \}   \Big]
    \nonumber \\ &
    \le \bbP \big[ ~ \overline{ \calE_{ m_1,1  }^{ (\rmL) } ( a_{ j_1 }  )   } \bigcap \{ m_1=m, j_1 = i  \}   \big]
    +
    \bbP \big[  ~ \overline{  \calE_{ m_1, j_1  }^{  ( \rmU ) } ( {a}_{ j_1 } )     } \bigcap \{ m_1=m, j_1 = i  \}   \big] 
    \nonumber \\ & 
    \le \bbP \big[ ~ \overline{\calE_{ m_1,1  }^{ (\rmL) } ( a_{ j_1 }  )    } \bigcap \{ u  \cdot i \ge  |A_{m-1}| \} \big] + \bbP \big[~  \overline{  \calE_{ m_1, j_1  }^{  ( \rmU ) } ( {a}_{ j_1 } )    } \bigcap \{ u \cdot i \ge  |A_{m-1}| \}  \big]
    \label{eq:ub_probaSS_pf_final_error_per_phase___bound_size_Am} 
    \\ &
    \le 4 \exp \bigg[   - \frac{   {a}_{ i  }^2 \cdot n_m   }{3}   \bigg] \cdot \mathbb{I}  \{ u \cdot  i \ge  |A_{m-1}| ~ \}
    \label{eq:ub_probaSS_pf_final_error_per_phase___apply_lemma_conc_apply} 
    \\ &
    = 4 \exp \bigg[   - \frac{    \Delta_{1,i}^2 \cdot N   }{  192 |A_{m-1}| }   \bigg] \cdot \mathbb{I}  \{ u \cdot i \ge   |A_{m-1}| ~ \}
    \label{eq:ub_probaSS_pf_final_error_per_phase___apply_def_tilde_a} 
    \\ &
    \le 4 \exp \bigg[   - \frac{   \Delta_{1,i}^2 \cdot N   }{  192  \cdot \min\{ u  \cdot i, L \}  }   \bigg] \label{eq:ub_probaSS_pf_final_error_per_phase0} \\
    & \le 4 \exp \bigg[   - \frac{   N   }{  192 \tilH_2 (w,L,u) }   \bigg] .
    \label{eq:ub_probaSS_pf_final_error_per_phase}
\end{align}
Line \eqref{eq:ub_probaSS_pf_final_error_per_phase___bound_size_Am} results from the  definitions of $j_1$, $A_m~(1\le m \le M)$, which implying that
\begin{align*}
    j_1 \ge | A_{m_1} | =  \bigg\lceil \frac{L}{ u^{m_1} } \bigg\rceil
    =  \bigg\lceil \frac{L}{ u\cdot u^{m_1-1} } \bigg\rceil 
    \ge  \frac{     \big\lceil \frac{L}{   u^{m_1-1} } \big\rceil   }{  u  }
    = \frac{   | A_{m_1 -1 } | }{  u  }.
\end{align*}
Line \eqref{eq:ub_probaSS_pf_final_error_per_phase___apply_lemma_conc_apply} follows from Lemma~\ref{lemma:ub_probaSS_conc_apply}.
Line \eqref{eq:ub_probaSS_pf_final_error_per_phase___apply_def_tilde_a} applies the definitions of $ {a}_{ i  } $ and $n_m$ for all $i,v,m$: 
\begin{align*}
    & {a}_{i }  = \frac{ \Delta_{1,i} }{  8}  
     ~~ \forall \, 2 \le i \le L, \quad \mbox{and}\quad  
    n_m = \frac{ N}{ |A_{m-1}|} ~~ \forall\, 1\le m \le M .
\end{align*}
Lines  \eqref{eq:ub_probaSS_pf_final_error_per_phase0} and \eqref{eq:ub_probaSS_pf_final_error_per_phase} result from the fact that  $|A_m|\le L$ for all $m$ and the definition of $\tilH_2 (w,L,u)$ in \eqref{eq:ub_probaSS_def_HtwoPrime}, i.e., 
\begin{align*}
    \tilH_2 (w,L,u) = \max_{ i \ne 1 } \frac{ \min\{ u \cdot i, L \} }{ \Delta_{1,i}^2 }.
\end{align*}

(iii) Combining~\eqref{eq:ub_probaSS_pf_final_gap_to_event} and \eqref{eq:ub_probaSS_pf_final_error_per_phase}, we have
\begin{align*}
    & \bbP\bigg[  \Delta_{ 1 , \iout } > \frac{ 8 C }{   N }    \bigg]
    \le \sum_{m=1}^M \sum_{i=2}^L 
        \bbP\bigg[  \Big\{ \Delta_{ 1 , \iout } > \frac{ 8C }{   N } \Big\} \bigcap \{ m_1=m, j_1 = i    \} \bigg]
    \le 4M(L-1) \exp \bigg[   - \frac{    N   }{  192 \tilH_2 (w,L,u) }   \bigg].
\end{align*}
We complete the proof of Theorem~\ref{thm:ub_probaSS} with $N =  \lfloor T/M \rfloor$, $M = \lceil \log_u L \rceil$.


\subsection{Proof of Theorem \ref{thm:bai_lb}}\label{pf:thm_bai_lb}
\thmcoupling*
\begin{proof}
We fix $w = \{w(i)\}_{i\in [L]}$, where $1 > w(1) > w(2) >  w(3)\geq \ldots \geq w(L) >0$, and we define $\Delta = w(1) - w(2)$. We assume 
$w(2) -\Delta > w(3) > 0$. We prove the Theorem by a coupling argument between two Bernoulli instances ${\cal I}$, ${\cal I'}$, both on the ground set $[L]$. Both involve $T$ time steps and corruption budget $C = (1 + \lambda)2\Delta T$.

In instance ${\cal I}$, the uncorrupted reward distribution of item $i$ is $\text{Bern}(w(i))$, and the adversary corrupts the rewards of item 1 probabilistically, as detailed in the forthcoming coupling in Algorithm \ref{alg:corrupt_bai}. In instance  ${\cal I'}$ the uncorrupted reward distribution of the items are:
\begin{itemize}[ itemsep = .5pt,   topsep = -1pt , leftmargin = 11pt]
    \item $\text{Bern}(u(1))$, where $u(1) = w(2) - \Delta$, for item 1,  
    \item $\text{Bern}(u(i))$, where $u(i) = w(i)$, for item $i\in [L]\setminus \{1\}$, 
\end{itemize}
but the adversary does not corrupt any of the rewards on instance ${\cal I'}$. The optimal items in instances ${\cal I}, {\cal I'}$ are different, and they are item 1, item 2 respectively. Both instances have optimality gap $\Delta$, since in instance ${\cal I'}$ we have $u(2) > u(1) > u(3) \geq \ldots \geq u(L) > 0$.

We denote the original and corrupted rewards of item $i$ at time step $t$ in instance ${\cal I}$ as $W_t(i), \tilde{W}_t(i)$ respectively, and the original and corrupted rewards of item $i$ at time step $t$ in instance ${\cal I}'$ as $U_t(i), \tilde{U}_t(i)$ respectively. Since there is no corruption on ${\cal I'}$, we have $U_t(i) = \tilde{U}_t(i)$ for all $t, i$ always. 

Fix a BAI algorithm $\pi$, and considering running $\pi$ on the instances ${\cal I}, {\cal I'}$. When $\pi$ is randomized, we assume that $\pi$ has the same random seed in the two runs, so that $\pi$ recommends the same item in both instances ${\cal I}, {\cal I'}$ if $\tilde{W}_t(i) = \tilde{U}_t(i)$ for all $t, i$.  Now, we couple the instances as shown ${\cal I}, {\cal I'}$ in Algorithm \ref{alg:corrupt_bai}.

\begin{algorithm}[ht] 
	\caption{Coupling on instances ${\cal I, \cal I'}$} \label{alg:corrupt_bai}
		\begin{algorithmic}[1]
			\STATE Set remaining corruption budget $B \leftarrow C$.
			\FOR{time step $t = 1, \ldots, T$} 
				\STATE Adversary observes $\{W_t(i)\}_{i\in [L]}$, where $W_t(i)\sim \text{Bern}(w(i))$.
				\STATE Adversary generates $G_t\sim \text{Bern}(2\Delta / w(1))$, independent of $W_t$.
               \IF{$B\geq 1$} \label{alg:bai_coupling_if}
               		\IF{$W_t(1) = 0$}
               		\STATE Set $\tilde{W}_t(1) \leftarrow 0$.
               		\ELSIF{$W_t(1) = 1$, $G_t = 1$}
               \STATE Set $\tilde{W}_t(1) \leftarrow 0$ ($c_t(1) = -1$). 
               \STATE Update $B \leftarrow B - 1$. 

               		\ELSIF{$W_t(1) = 1$, $G_t = 0$}
               		\STATE Set $\tilde{W}_t(1) \leftarrow 1$.
               		\ENDIF
               \STATE Set $\tilde{W}_t(i) \leftarrow W_t(i)$ for all $i\in [L]\setminus \{1\}$.
               \STATE Set $U_t(i)\leftarrow\tilde{W}_t(i)$, $ \tilde{U}_t(i) \leftarrow \tilde{W}_t(i)$ for all $i\in [L]$.
               \ELSE
               \STATE Set $\tilde{W}_t(i) \leftarrow W_t(i)$ for all $i\in [L]$ ($c_t(i) = 0$).
               \STATE Set $U_t(i) \leftarrow \tilde{W}_t(i)$, $\tilde{U}_t(i) \leftarrow \tilde{W}_t(i)$ for all $i\in [L]\setminus \{1\}$.
               \STATE Sample $U_t(1) = \tilde{U}_t(1)\sim \text{Bern}((w(2) - \Delta))$ (recall $u(1) = w(2) - \Delta$). \label{alg:bai_force}
               \ENDIF
			\ENDFOR			
		\end{algorithmic}
	\end{algorithm}

We make two crucial observation on the coupling in Algorithm \ref{alg:corrupt_bai}:
\begin{enumerate}
\item If the corruption budget $C$ is sufficient, that is if we have $B \geq 1$ at the start of time step $T$, then $\tilde{W}_t(i) = \tilde{U}_t(i)$ for all $i, t$, so that the algorithm $\pi$ recommends the same item in both instances.
\item The coupling is valid, in the sense that:
\begin{enumerate}
\item The corruption budget is never exceeded,
\item We always have $W_t(i) \sim \text{Bern}(w(i))$, 
\item We always have $\tilde{U}_t(i) = U_t(i) \sim \text{Bern}(u(i))$. 
\end{enumerate}
The claims (a, b) are clearly true, and for claim (c), we need to verify that $\tilde{U}_t(1) = U_t(1) \sim \text{Bern}(u(1))$. Indeed, at a time step $t$:
\begin{itemize}
\item If $B < 1$, then Line \ref{alg:bai_force} imposes that $U_t(1) \sim \text{Bern}(u(1))$.
\item If $B \geq 1$, then by the \textbf{if} loop in Line \ref{alg:bai_coupling_if}, we have
\begin{align}
&\bbP [ U_t(1) = \tilde{U}_t(1) = 1 | B \geq 1\text{ at the start of time step t} ]\nonumber\\
& =  \bbP [  W_t(1) = 1, G_t = 0) =\bbP [  W_t(1) = 1 ] \cdot \bbP [  G_t = 0 ] \nonumber\\
&  =  w(1) \cdot \left(1 - \frac{2\Delta}{w(1)}\right) = w(1) - 2\Delta = w(2) - \Delta = u(1) \nonumber.
\end{align}
\end{itemize}
\end{enumerate}
The key to the proof is that the optimal item in instances ${\cal I, \cal I'}$ are $1$, $2$ respectively which are different item. By observation 1, if $B\geq 1$ at the start of time $T$, then the algorithm $\pi$ cannot identify the optimal item in both instances. Denote events $\calA_1 = \{\text{$\pi$ outputs item 1 in ${\cal I}$\}}$ and $\calA_2 = \{\text{$\pi$ outputs item 2 in ${\cal I}'$}\}$, and denote $\bbP$ as the probability measure under the coupling in Algorithm \ref{alg:corrupt_bai} and the algorithm $\pi$. Now, 
\begin{align}
&  \bbP[ \calA_1 \cap \calA_2 ] \nonumber\\
 &\leq \bbP[~ \text{$\pi$ outputs different items on ${\cal I, \cal I'}$} ~]  \nonumber\\
& \leq \bbP[ ~ \tilde{W}_t(1) \neq \tilde{U}_t(1) \text{ for some $t\in [T]$}~]\nonumber\\
& \leq \bbP[ ~ \text{At the start of time step $T$, we have $B<1$}~] \nonumber\\
&= \bbP \left[ ~\sum^{T-1}_{t=1}  \mathbb{I}\{ W_t(1) = 1, G_t = 1 \} > C - 1\right] \nonumber\\
&\leq  \bbP \left[ ~\sum^{T}_{t=1}  \mathbb{I}\{ W_t(1) = 1, G_t = 1\}  > (1+ \lambda)2\Delta T\right] \nonumber\label{eq:corrupt_bai_step1}.
\end{align}
To this end, note that the random variables in 
$\{ \mathbb{I}\{ W_t(1) = 1, G_t = 1\}  \}^T_{t=1}$ 
are i.i.d.\ with mean 
$$\mathbb{E} [ \mathbb{I}\{  W_t(1) = 1, G_t = 1 \} ] = \mathbb{E}[ \mathbb{I}\{ W_t(1) = 1\} ] \cdot \mathbb{E}[ \mathbb{I}\{  G_t = 1\} ] = w(1) \cdot \frac{ 2\Delta }{w(1) }  = 2\Delta.$$  
By applying Theorem \ref{thm:conc_bernoulli}, we have 
$$
\bbP[ \calA_1 \cap \calA_2 ]  \leq \bbP \left[ \sum^{T}_{t=1} \mathbb{I}\{ W_t(1) = 1, G_t = 1)\} > (1+ \lambda)2\Delta T\right] \leq \exp\left(-\frac{2\lambda^2 \Delta T }{3}\right).
$$
Finally, we have
$$
\bbP[ \calA_1 ] + \bbP[ \calA_2 ] = \bbP[ \calA_1 \cup \calA_2] + \bbP[ \calA_1 \cap \calA_2 ] \leq 1 + \exp\left(-\frac{2\lambda^2 \Delta T }{3}\right),
$$
so that
$$
\min\left\{ \bbP[ \calA_1] ,\bbP[ \calA_2 ] \right\} \leq \frac{1}{2} \left[1 + \exp\left(-\frac{2\lambda^2 \Delta T }{3}\right)\right],
$$
completing the proof of the theorem. 
\end{proof}

\subsection{Proof of Theorem \ref{thm:bai_lb_sh}  }

\thmcouplingSH*

\begin{proof}
Consider Theorem~\ref{thm:bai_lb}'s attack strategy, but applied to SH in phase~1. 

We claim that there is a BAI instance ${\cal I}$ with $T$ time steps, gap $\Delta  $ and $C = (1+\lambda)2\Delta T / (L\log_2 L) $, such that $$\Pr(\Delta_{1, i_\text{out}} \geq \Delta )  = \Pr \left(\Delta_{1, i_\text{out}} \geq \frac{C \cdot L\log_2 L}{2(1+\lambda)T)} \right)\geq 1/2$$ when $T$ is sufficiently large. This is a matching lower bound for SH in Table 1. 

Consider a Bernoulli instance $\{w(i)\}_{i\in [L]}$ with $w(1)\in [1/2, 1]$ and $w(i) = w(1) - \Delta$ for $i\in [L]\setminus \{1\}$. In phase 1, SH pulls each $i\in [L]$ for $\tau = \lceil T / (L \log_2 L)\rceil$ times, computes the empirical means $\{\hat{w}_1(i)\}_{i\in [L]}$, and removes the $\lceil L / 2\rceil$ items with smallest $\hat{w}_1(i)$ from consideration. 

During phase $1$, SH pulls item $1$ at fixed time steps $\{t_s\}^\tau_{s = 1}$. When the adversary determines $\{c_{t_s}(i)\}_{i\in [L]}$, \emph{he knows $\{W_{t_s}(i)\}_{i\in [L]}$, and knows that item $1$ will be pulled at  time  $t_s$.} The adversary attacks by solely corrupting item $1$ solely at times $\{t_s\}^\tau_{t=1}$. 

If the corruption budget is not exhausted, set 
\begin{align*}
    & \Pr(\tilde{W}_{t_s}(1) = 0 | W_{t_s}(1) = 1) = \frac{2\Delta }{w(1) }= 1 - \Pr(\tilde{W}_{t_s}(1) = 1 | W_{t_s}(1) = 1),  \\
    & \Pr(\tilde{W}_{t_s}(1) = 0 | W_{t_s}(1) = 0) = 1,
\end{align*}
which implies that
\begin{align*}
    & \Pr(\tilde{W}_{t_s}(1) = 1)  =w(1) - 2\Delta, \\
    & \Pr(c_{t_s}(1) = -1) = \Pr(\tilde{W}_{t_s}(1) = 0 | W_{t_s}(1)=1) \Pr(W_{t_s}(1) = 1) = 2\Delta.
\end{align*}
%
%
If exhausted, no corruption. 

Let $X_1, \ldots, X_\tau \sim\text{Bern}(w(1) - 2\Delta) $, $Y_1, \ldots, Y_\tau \sim\text{Bern}(2\Delta)$ be i.i.d.\ random variables and event 
$${\cal E}:=\{\text{corruption budget not exhausted at end of phase 1}\}.$$
Then 
\begin{align*}
      \Pr\left( \hat{w}_1(i)   \leq  w(1)  -  \frac{3\Delta}{2}\right)  
    & \geq  \Pr\left( \hat{w}_1(i)  \leq  w(1)   -   \frac{3\Delta}{ 2 }  \bigg| {\cal E}\right)  \cdot \Pr\left( {\cal E} \right) \\
    &  =  \Pr\left(\frac{1}{\tau}\sum^\tau_{s = 1} X_s   \leq  w(1) -   \frac{3\Delta}{2} \right) \cdot \Pr\left(\sum^\tau_{s= 1} Y_s  \leq  \frac{(1   + \lambda)2\Delta T }{L \log_2 L}\right) ,
\end{align*}
which exceeds $1 - \exp\left[-\frac{T\Delta^2}{2 L \log_2 L}\right] - \exp\left[- \frac{2\lambda^2 T \Delta} {3L\log_2 L}\right]$. 
Thus, for all $i\ne 1$, 
$$\Pr\left(\hat{w}_1(i) > w(1) - \frac{3\Delta}{2} \right) \geq 1 - \exp \left[-\frac{T\Delta^2}{2 L \log_2 L}\right] .$$ 
Lastly, by the union bound, 
$$\Pr(\text{Item $1$ removed after phase 1}) \geq 1 - (L+1)\exp \bigg[-\frac{T\max\{1, \lambda^2\}\Delta^2}{2 L \log_2 L} \bigg].$$
We complete the proof by noting that
\begin{align*}
  1 - (L+1)\exp \bigg[-\frac{T\max\{1, \lambda^2\}\Delta^2}{2 L \log_2 L}\bigg] \ge 1/2
\end{align*}
for $T$ large enough.


\end{proof}

\subsection{Proof of Theorem~\ref{thm:lb_ber_shift_to_zero_one} }

\label{pf:thm_lb_ber_shift_to_zero_one}

\thmLbBerShiftToZeroOne*

\begin{proof}

\noindent
{\bf Part (a).}
Let $X_{1}, \cdots, X_{n}$ be Bernoulli random variables taking values in $\{0,1\}$ such that $\mathbb{E}\left[X_{t} | X_{1}, \cdots, X_{t-1}\right] \leq \mu$ for all $t \leq n,$ and $Y=X_{1}+\ldots+X_{n}$. 
Let $X'_t = 1-X_t$ for all $1\le t \le n$, $\mu' = 1-\mu$, $Y' = X'_1 + \ldots + X'_n$.
Then, Theorem~\ref{thm:conc_bernoulli} indicates for all $\delta \in(0,1)$
\begin{align*}
    & \Pr [Y' \leq(1-\delta) n \mu'] \leq \exp \bigg( -\frac{\delta^{2} n \mu'}{2} \bigg)
    ~\Rightarrow~
        \Pr [ n - Y \leq(1-\delta) n ( 1 - \mu) ] \leq \exp \bigg[ -\frac{\delta^{2} n (1-\mu) }{2} \bigg]
    \\ \Rightarrow~ &
    \Pr [   Y \geq n- (1-\delta) n ( 1 - \mu) ] \leq \exp \bigg[ -\frac{\delta^{2} n (1-\mu) }{2} \bigg].
\end{align*}

(i) For all $i\in [L]$,  $W_t(i)$ denotes the random reward of item $i$ at time step $t$.
Fix any $\lambda \in (0,1)$.  We can apply the inequality above with $\mu = w(1)$, $n=TL$ to get
\begin{align*}
    &
    \bbP \Bigg[  \sum_{ t =1 }^T \sum_{i=1}^L W_t(i )  \ge TL- (1 - \lambda ) TL [ 1-w(1) ]  \Bigg] \le \exp\bigg[ - \frac{ \lambda^2 TL [1- w(1) ] }{2}  \bigg] 
    \\ \Rightarrow~ &
    \bbP \Bigg[    \sum_{ t =1 }^T \sum_{i=1}^L   \mathbb{I} \{ W_t(i ) = 1 \}  \ge TL \cdot \{ 1 - (1 - \lambda )[ 1-w(1) ] \} \Bigg] \le \exp\bigg[ - \frac{ \lambda^2 TL [1- w(1) ] }{2}  \bigg] .
\end{align*}
(ii) Let
\begin{align*}
    &
    \calE_{ \lambda, 0}:= 
    \Bigg\{ \sum_{ t =1 }^T  \sum_{i=1}^L    \mathbb{I} \{ W_t(i ) = 1 \}  <
          T L \cdot \{ 1 - (1 - \lambda )[ 1-w(1) ] \}
          \Bigg\}.
\end{align*}

When $\calE_{ \lambda, 0}$ holds, throughout the whole horizon ($T$ time steps), there are less than $ TL \cdot \{ 1 - (1 -  \lambda )[ 1-w(1) ] \}$ random rewards that equal to $1$. 
If we additionally have
\begin{align*}
    C \ge T L \cdot \{ 1 - (1 -  \lambda )[ 1-w(1) ] \}  :=C_{ \lambda  , 0},
\end{align*}
the adversary can shift the random reward to $0$ whenever it equals to $1$, which implies that the agent get a corrupted reward equals to $0$ at each time step.

Altogether, when $\calE_{  \lambda ,0 } $ holds and $C\ge C_{ \lambda ,0 } $, the agent get a corrupted reward equals to $0$ at each time step. 
Therefore, the observations of random rewards throughout the whole horizon provides no information about the mean reward $w(i)$ for any item $i\in [L]$.
In this case, the best method for the agent to output an item is to randomly output any ground item with a uniform probability of $1/L$.
As a result, for any item $i$,
\begin{align*}
    \bbP\big[ \{  \iout = i  \}  \bigcap \calE_{ \lambda,0 } \big] \le \frac{1}{L}.
\end{align*}
Recall that $
    L_{ \epsilon  } : = | \{ i\in [L] : \Delta_{1,i } \le \epsilon \} | 
$
 counts the items with mean reward at most $\epsilon$ worse than that of the optimal item, we have 
\begin{align*}
    \bbP\big[ \{ \Delta_{1,\iout} \le \epsilon\}  \bigcap \calE_{ \lambda, 0 } \big] 
    \le \sum_{ i\in [L],  \Delta_{1, i } \le \epsilon } \bbP\big[ \{  \iout = i  \}  \bigcap \calE_{ \lambda, 0 } \big]
    \le \frac{ L_{ \epsilon  } }{L}.
\end{align*}

(iii) Therefore,
\begin{align*}
    &
    \bbP [ \Delta_{1,\iout} \le \epsilon ] 
    = \bbP\big[ \{  \Delta_{1,\iout} \le \epsilon  \}  \bigcap \calE_{ \lambda,0 } \big] + \bbP\big[ \{  \Delta_{1,\iout} \le \epsilon  \}  \bigcap \overline{ \calE_{ \lambda,0 } } ~ \big]
    \\ &
    \le  \bbP\big[ \{  \Delta_{1,\iout} \le \epsilon  \}  \bigcap \calE_{ \lambda,0 }  \big] + \bbP\big[ ~ \overline{ \calE_{ \lambda,0 } } ~ \big]
    \le \frac{L_\epsilon }{L} + \exp\bigg[ - \frac{ \lambda^2 T L [ 1- w(1) ]  }{2}  \bigg],
\end{align*}
Lastly,
\begin{align*}
    \bbP [ \Delta_{1,\iout} > \epsilon ]  
    \ge 1 - \frac{L_\epsilon }{L} - \exp\bigg[ - \frac{ \lambda^2 T L [ 1- w(1) ]   }{2}  \bigg].
\end{align*}


%
%
%
%
%

\noindent {\bf Part (b).}
(i) For all $i\in [L]$,  $W_t(i)$ denotes the random reward of item $i$ at time step $t$.
Fix any $\lambda \in (0,1)$. We can apply Theorem~\ref{thm:conc_bernoulli} with $\mu = w(L)$, $n=TL$ to get
\begin{align*}
    \bbP \Bigg[  \sum_{ t =1 }^T  \sum_{i=1}^L  W_t(i )  \le (1 - \lambda ) TL w(L)  \Bigg] \le \exp\bigg[ - \frac{ \lambda ^2 T L w(L) }{2}  \bigg].
\end{align*}
Meanwhile,
\begin{align*}
    &
    \Bigg\{ \sum_{ t =1 }^T \sum_{i=1}^L   W_t( i )  \le (1 - \lambda ) T L w(L) \Bigg\}
    = \Bigg\{ \sum_{ t =1 }^T  \sum_{i=1}^L   \mathbb{I} \{ W_t(i ) = 1 \}  \le (1 - \lambda )  TL w(L)  \Bigg\}
    \\ &
    = \Bigg\{  \sum_{ t =1 }^T  \sum_{i=1}^L   \mathbb{I} \{ W_t(i ) = 0 \}  \ge T L - (1 - \lambda ) T L w(L) 
        = T L \cdot [ 1 - (1- \lambda ) w(L) ] \Bigg\}.
\end{align*}
Therefore,
\begin{align*}
    \bbP \Bigg[  \sum_{ t =1 }^T   \mathbb{I} \{ W_t(i ) = 0 \}  \ge  
          T L \cdot [ 1 - (1-\lambda ) w(L) ] 
         \Bigg] \le \exp\bigg[ - \frac{ \lambda ^2 T L  w(L) }{2}  \bigg].
\end{align*}

(ii) Let
\begin{align*}
    &
    \calE_{ \lambda ,1 } := 
    \Bigg\{ \sum_{ t =1 }^T   \sum_{i=1}^L  \mathbb{I} \{ W_t( i ) = 0 \}  <
          T L \cdot [ 1 - (1-\lambda ) w(L) ] 
          \Bigg\}.
\end{align*}

When $\calE_{ \lambda ,1 }$ holds, throughout the whole horizon ($T$ time steps), there are less than $ T  L \cdot [ 1 - (1- \lambda ) w(L) ] $ random rewards that equal to $0$. 
If we additionally have
\begin{align*}
    C \ge T L  \cdot [ 1 - (1- \lambda ) w(L) ]  :=C_{ \lambda ,1 } ,
\end{align*}
the adversary can shift the random reward to $1$ whenever it equals to $0$, which implies that the agent get a corrupted reward equals to $1$ at each time step.

Altogether, when $\calE_{ \lambda ,1 }$ holds and $C\ge C_{ \lambda ,1 }$, the agent get a corrupted reward equals to $1$ at each time step. 
Therefore, the observations of random rewards throughout the whole horizon provides no information about the mean reward $w(i)$ for any item $i\in [L]$.
In this case, the best method for the agent to output an item is to randomly output any ground item with a uniform probability of $1/L$.
As a result, for any item $i$,
\begin{align*}
    \bbP\big[ \{  \iout = i  \}  \bigcap \calE_{ \lambda,1 } \big] \le \frac{1}{L}.
\end{align*}
Recall that $
    L_{ \epsilon  } : = | \{ i\in [L] : \Delta_{1,i } \le \epsilon \} | 
$
 counts the items with mean reward at most $\epsilon$ worse than that of the optimal item, we have 
\begin{align*}
    \bbP\big[ \{ \Delta_{1,\iout} \le \epsilon\}  \bigcap \calE_{ \lambda,1 } \big] 
    \le \sum_{ i\in [L],  \Delta_{1, i } \le \epsilon } \bbP\big[ \{  \iout = i  \}  \bigcap \calE_{ \lambda,1 } \big]
    \le \frac{ L_{ \epsilon  } }{L}.
\end{align*}

(iii) Therefore,
\begin{align*}
    &
    \bbP [ \Delta_{1,\iout} \le \epsilon ] 
    = \bbP\big[ \{  \Delta_{1,\iout} \le \epsilon  \}  \bigcap \calE_{ \lambda,1 } \big] + \bbP\big[ \{  \Delta_{1,\iout} \le \epsilon  \}  \bigcap \overline{ \calE_{ \lambda,1 } } ~ \big]
    \\ &
    \le  \bbP\big[ \{  \Delta_{1,\iout} \le \epsilon  \}  \bigcap \calE_{ \lambda,1 }  \big] + \bbP\big[ ~ \overline{ \calE_{ \lambda,1 } } ~ \big]
    \le \frac{L_\epsilon }{L} + \exp\bigg[ - \frac{ \lambda^2 T L w(L) }{2}  \bigg],
\end{align*}
Lastly,
\begin{align*}
    \bbP [ \Delta_{1,\iout} > \epsilon ]  
    \ge 1 - \frac{L_\epsilon }{L} - \exp\bigg[ - \frac{ \lambda^2 T L w(L) }{2}  \bigg].
\end{align*}

\end{proof}


\section{Additional numerical results}
\label{appdix_exp}

\subsection{Details of Figures~\ref{pic:exp_compare_w_lambda_a} and \ref{pic:exp_compare_w_lambda_b} }

\begin{table}[ht] 
  \centering
  \caption{ Comparison of PSS$(u)$ to Other Algorithms
  }
  \vspace{.5em}
	\begin{tabular}{ ccc | ccc }
        $\lambda$    &  $w^*$ & $w'$ & PSS($2$) & SH & UP  \\
	 \thickhline 
	 \vphantom{$\hat{\big[} $ }
    $0.5 $ & $0.4 $ & $0.2 $ & $76  $ & $42  $ & $12 $ \\
    $0.5 $ & $0.5 $ & $0.2 $ & $91  $ & $64  $ & $13 $ \\
    $0.5 $ & $0.5 $ & $0.3 $ & $74  $ & $51  $ & $19 $ \\
    \hline 
    \vphantom{$\hat{\big[}$ }
    $0.9 $ & $0.4 $ & $0.2 $ & $72  $ & $45  $ & $9 $ \\
    $0.9 $ & $0.5 $ & $0.2 $ & $83  $ & $64  $ & $7 $ \\
    $0.9 $ & $0.5 $ & $0.3 $ & $60  $ & $40  $ & $12 $
	    \end{tabular}%
	  \label{tab:exp_compare_w_lambda}%
\end{table}
Here, we provide the raw numbers of for Figures~\ref{pic:exp_compare_w_lambda_a} and \ref{pic:exp_compare_w_lambda_b}. We see that PSS($2$) consistently and clearly outperform the non-robust BAI algorithms on all instances here. 

\begin{figure}[ht]
    \begin{flushright}
    \includegraphics[width = .28\textwidth ]{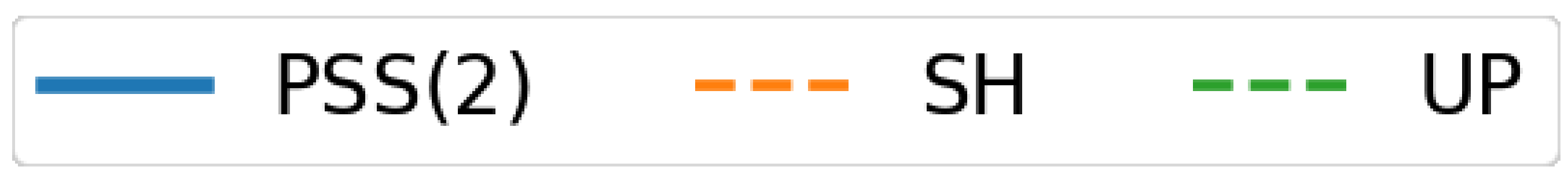} 
    \hphantom{aa}
    %
      \end{flushright}
   \vspace{-1.3em}
  \centering 
  
  \subfigure[$L=32, \lambda=9$]{
\label{pic_supp:exp_supp_compare_T_L32}
\includegraphics[width = \textwidth]{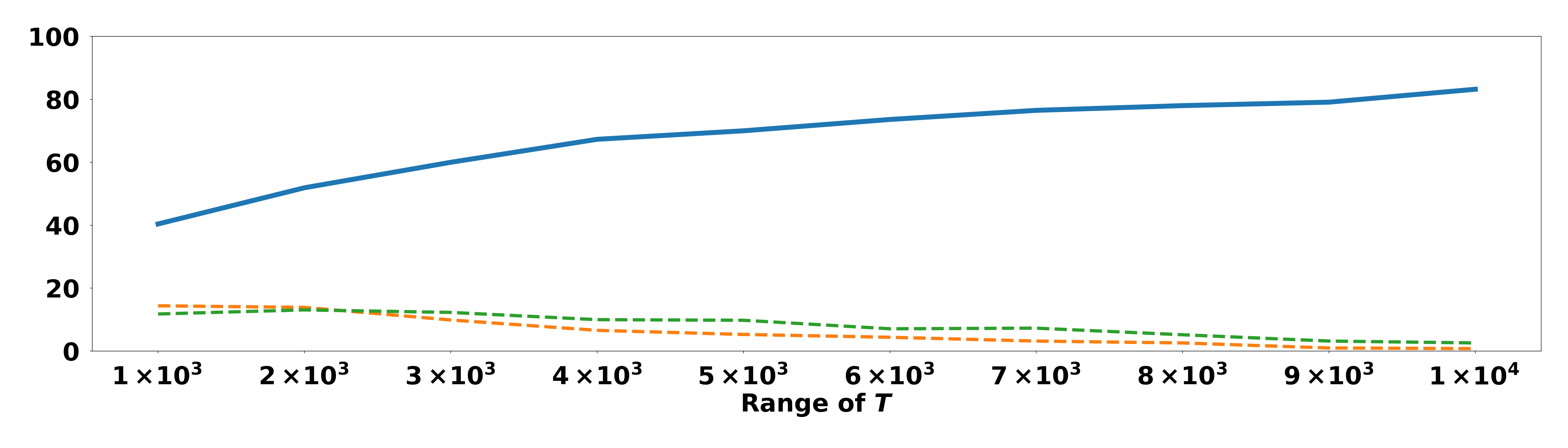}
}\vspace{-.05in}
 
\subfigure[{}$L=64, \lambda=19$]{
\label{pic_supp:exp_supp_compare_T_L64}
\includegraphics[width = \textwidth]{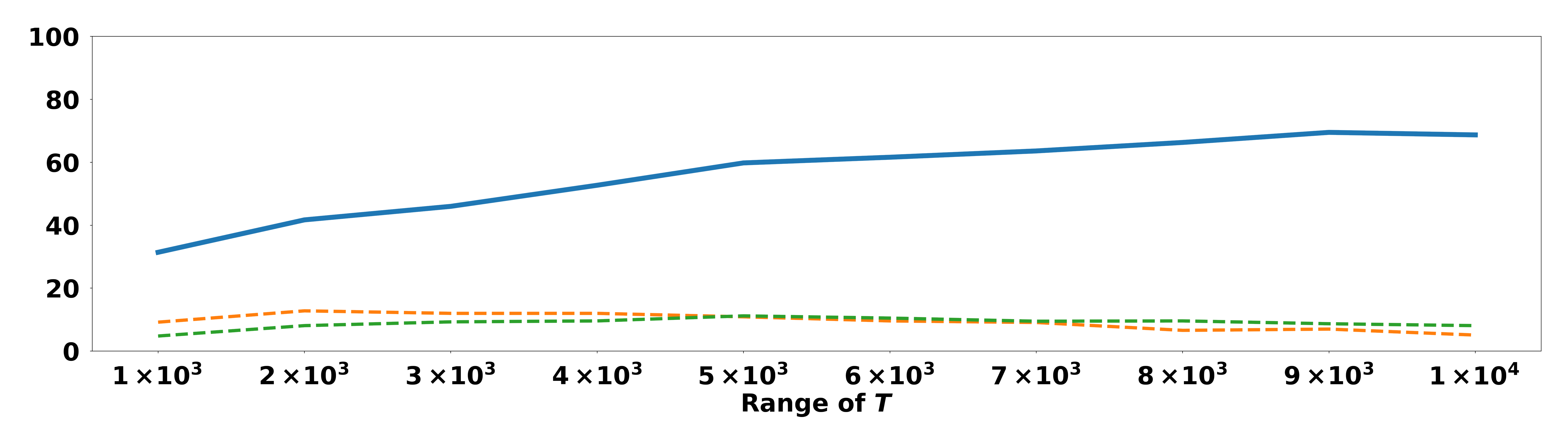}
}\vspace{-.05in}
  
  \subfigure[$L=128, \lambda=39$]{
\label{pic_supp:exp_supp_compare_T_L128}
\includegraphics[width = \textwidth]{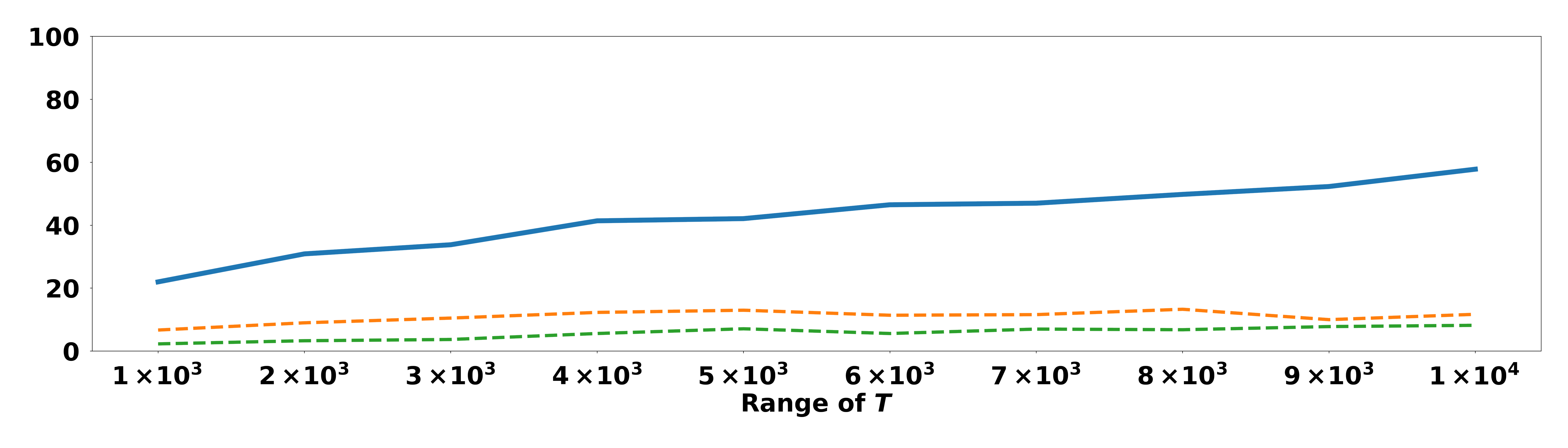}
}\vspace{-.05in}

  \caption{Percentage of correct BAI of PSS($2$), SH and UP. We fix the instance to be  $w^*=0.4, w'=0.2$.   }
  \label{pic_supp:exp_compare}
\end{figure}

\subsection{Further observations}

To further evaluate the impact of $T$, $L$, and $\lambda$ on the success probabilities of PSS$(2)$, SH and UP, we run each algorithm for $1000$ times independently with varying sets of parameters, while keeping the MAB instance at $w^*=0.4$ and $w'=0.2$ fixed. 


Recall that according to Theorem~\ref{thm:bai_lb_sh}, we set the CPS 
\begin{equation}
\frac{C}{T} = \frac{2\Delta\cdot (1+\lambda)  }{  L \log_2 L}. \label{eqn:cps}
\end{equation}
This is the scaling of the CPS that ensures that SH fails with high probability as $T$ grows. Notice that $C/T$ grows with $\lambda$ and decreases with $L$. We implement the attack strategy as applied in Theorem~\ref{thm:bai_lb_sh}  (see Algorithm~\ref{alg:corrupt_bai}) and vary $L$ and $T$. 

In each subplot in Figure \ref{pic_supp:exp_compare}, we consider different number of items $L$ and  use different values of $\lambda$, resulting in different CPSes. We let $\lambda$  grows with $L$, 
so the identification of the best arm would pose significant difficulty to SH as prescribed by Eqn.~\eqref{eqn:cps}. The figures show that as $T$ grows, the success (BAI) probabilities of PSS($2$) demonstrate an increasing trend, and in the case of $L = 32, \lambda = 9$ the percentage of successful BAI converges to 100\%. 
In stark contrast, 
the percentages of successful BAI for SH are always below 20\%. In the case of $L = 32, \lambda = 9$, the percentage appears to converge to 0 as $T$ increases.
This implies that SH fails with high probability when $T$ is sufficiently large, which corroborates Theorem~\ref{thm:bai_lb_sh}. However, the randomization inherent in PSS($2$) ensures that it remains extremely robust to the corruption strategy and it successfully identifies the best item a large fraction of times as $T\to+\infty$; this corroborates our main result---Theorem~\ref{thm:ub_probaSS}.

\end{document}